\providecommand{\tabularnewline}{\\}
\providecommand{\algorithmname}{Algorithm}
\theoremstyle{plain}
\newtheorem{thm}{\protect\theoremname}
\theoremstyle{plain}
\newtheorem{lem}[thm]{\protect\lemmaname}
\newenvironment{proof}[1][\protect\proofname]{\par
\normalfont\topsep6\p@\@plus6\p@\relax
\trivlist
\itemindent\parindent
\item[\hskip\labelsep
\scshape
#1]\ignorespaces
}{%
\endtrivlist\@endpefalse
}
\providecommand{\proofname}{Proof}
\providecommand{\lemmaname}{Lemma}
\providecommand{\theoremname}{Theorem}
\begin{document}

\title{Structured Learning via Logistic Regression}

\author{Justin Domke\\
NICTA and The Australian National University\\
\texttt{justin.domke@nicta.com.au}}
\maketitle
\begin{abstract}
A successful approach to structured learning is to write the learning
objective as a joint function of linear parameters and inference messages,
and iterate between updates to each. This paper observes that if the
inference problem is ``smoothed'' through the addition of entropy
terms, for fixed messages, the learning objective reduces to a traditional
(non-structured) logistic regression problem with respect to parameters.
In these logistic regression problems, each training example has a
bias term determined by the current set of messages. Based on this
insight, the structured energy function can be extended from linear
factors to any function class where an ``oracle'' exists to minimize
a logistic loss.
\end{abstract}

\section{Introduction}

The structured learning problem is to find a function $F(x,y)$ to
map from inputs $x$ to outputs as $y^{*}=\arg\max_{y}F(x,y)$. $F$
is chosen to optimize a loss function defined on these outputs. A
major challenge is that evaluating the loss for a given function $F$
requires solving the inference optimization to find the highest-scoring
output $y$ for each exemplar, which is NP-hard in general. A standard
solution to this is to write the loss function using an LP-relaxation
of the inference problem, meaning an upper-bound on the true loss.
The learning problem can then be phrased as a joint optimization of
parameters and inference variables, which can be solved, e.g., by
alternating message-passing updates to inference variables with gradient
descent updates to parameters \cite{LearningEfficientlyWithApproximateInferenceViaDualLosses,EfficientLearningofStructuredPredictorsinGeneralGraphicalModels}.

Previous work has mostly focused on linear energy functions $F(x,y)=w^{T}\Phi(x,y)$,
where a vector of weights $w$ is adjusted in learning, and $\Phi(x,y)=\sum_{\alpha}\Phi(x,y_{\alpha})$
decomposes over subsets of variables $y_{\alpha}$. While linear weights
are often useful in practice \cite{SceneSegmentationWithCRFsLearned,LearningEfficientlyWithApproximateInferenceViaDualLosses,EfficientLearningofStructuredPredictorsinGeneralGraphicalModels,Discriminativemodelsformulticlassobjectlayout,OnParameterLearninginCRFbased,DBLP:conf/nips/KumarH03,LearningGraphicalModelParametersWithApproximateMarginalInference},
it is also common to make use of non-linear classifiers. This is typically
done by training a classifier (e.g. ensembles of trees \cite{TextonBoostForImageUnderstanding,MultiClassSegmentationWithRelativeLocationPrior,MultipleViewSemanticSegmentationforstreetviewimages,AssociativehierarchicalCRFsforobjectclassimagesegmentation,TheLayoutConsistentRandomFieldforRecognizingandSegmentingPartiallyOccludedObjects,DecisionTreeFields,ObjectClassSegmentationusingRandomForests}
or multi-layer perceptrons \cite{MultiscaleConditionalRandomFieldsFor,Indoorscenesegmentationusingastructuredlightsensor})
to predict each variable independently. Linear edge interaction weights
are then  learned, with unary classifiers either held fixed \cite{TextonBoostForImageUnderstanding,MultiClassSegmentationWithRelativeLocationPrior,MultipleViewSemanticSegmentationforstreetviewimages,AssociativehierarchicalCRFsforobjectclassimagesegmentation,TheLayoutConsistentRandomFieldforRecognizingandSegmentingPartiallyOccludedObjects,MultiscaleConditionalRandomFieldsFor}
or used essentially as ``features'' with linear weights re-adjusted
\cite{DecisionTreeFields}.

This paper allows the more general form $F(x,y)=\sum_{\alpha}f_{\alpha}(x,y_{\alpha})$.
The learning problem is to select $f_{\alpha}$ from some set of functions
$\mathcal{F}_{\alpha}$. Here, following previous work \cite{ConvergenceRateAnalysisofMAPCoordinateMinimizationAlgorithms},
we add entropy smoothing to the LP-relaxation of the inference problem.
Again, this leads to phrasing the learning problem as a joint optimization
of learning and inference variables, alternating between message-passing
updates to inference variables and optimization of the functions $f_{\alpha}$.
The major result is that minimization of the loss over $f_{\alpha}\in\mathcal{F}_{\alpha}$
can be re-formulated as a logistic regression problem, with a ``bias''
vector added to each example reflecting the current messages incoming
to factor $\alpha$. No assumptions are needed on the sets of functions
$\mathcal{F}_{\alpha}$, beyond assuming that an algorithm exists
to optimize the logistic loss on a given dataset over all $f_{\alpha}\in\mathcal{F}_{\alpha}$

We experimentally test the results of varying $\mathcal{F}_{\alpha}$
to be the set of linear functions, multi-layer perceptrons, or boosted
decision trees. Results verify the benefits of training flexible function
classes in terms of joint prediction accuracy.

\section{Structured Prediction}

The structured prediction problem can be written as seeking a function
$h$ that will predict an output $y$ from an input $x$. Most commonly,
it can be written in the form
\begin{equation}
h(x;w)=\arg\max_{y}w^{T}\Phi(x,y),\label{eq:linear_discrete_inference}
\end{equation}
where $\Phi$ is a fixed function of both $x$ and $y$. The maximum
takes place over all configurations of the discrete vector $y$. It
is further assumed that $\Phi$ decomposes into a sum of functions
evaluated over subsets of variables $y_{\alpha}$ as
\[
\Phi(x,y)=\sum_{\alpha}\Phi_{\alpha}(x,y_{\alpha}).
\]

The learning problem is to adjust set of linear weights $w$. This
paper considers the structured learning problem in a more general
setting, directly handling nonlinear function classes. We generalize
the function $h$ to
\[
h(x;F)=\arg\max_{y}F(x,y),
\]
where the energy $F$ again decomposes as
\[
F(x,y)=\sum_{\alpha}f_{\alpha}(x,y_{\alpha}).
\]

The learning problem now becomes to select $\{f_{\alpha}\in\mathcal{F}_{\alpha}\}$
for some set of functions $\mathcal{F}_{\alpha}$. This reduces to
the previous case when $f_{\alpha}(x,y_{\alpha})=w^{T}\Phi_{\alpha}(x,y_{\alpha})$
is a linear function. Here, we do not make any assumption on the class
of functions $\mathcal{F}_{\alpha}$ other than assuming that there
exists an algorithm to find the best function $f_{\alpha}\in\mathcal{F}_{\alpha}$
in terms of the logistic regression loss (Section \ref{sec:Logistic-Regression}).

\section{Loss Functions\label{sec:Loss-Functions}}

Given a dataset $(x^{1},y^{1}),...,(x^{N},y^{N})$, we wish to select
the energy $F$ to minimize the empirical risk
\begin{equation}
R(F)=\sum_{k}l(x^{k},y^{k};F),\label{eq:R(F)}
\end{equation}
for some loss function $l$. Absent computational concerns, a standard
choice would be the slack-rescaled loss \cite{Taskar_MMMN} 
\begin{equation}
l_{0}(x^{k},y^{k};F)=\max_{y}F(x^{k},y)-F(x^{k},y^{k})+\Delta(y^{k},y),\label{eq:l0}
\end{equation}
where $\Delta(y^{k},y)$ is some measure of discrepancy. We assume
that $\Delta$ is a function that decomposes over $\alpha$, (i.e.
that $\Delta(y^{k},y)=\sum_{\alpha}\Delta_{\alpha}(y_{\alpha}^{k},y{}_{\alpha})$).
Our experiments use the Hamming distance.

In Eq. \ref{eq:l0}, the maximum ranges over all possible discrete
labelings $y$, which is in NP-hard in general. If this inference
problem must be solved approximately, there is strong motivation \cite{Structural_SVMS}
for using relaxations of the maximization in Eq. \ref{eq:linear_discrete_inference},
since this yields an upper-bound on the loss. A common solution \cite{LearningEfficientlyWithApproximateInferenceViaDualLosses,PolyhedralOuterApproximations,Structural_SVMS}
is to use a linear relaxation%
\footnote{Here, $F$ and $\Delta$ are slightly generalized to allow arguments
of pseudomarginals, as $F(x^{k},\mu)=\sum_{\alpha}\sum_{y_{\alpha}}f(x^{k},y_{\alpha})\mu(y_{\alpha})$
and $\Delta(y^{k},\mu)=\sum_{\alpha}\sum_{y{}_{\alpha}}\Delta_{\alpha}(y_{\alpha}^{k},y{}_{\alpha})\mu(y_{\alpha}).$%
}\vspace{-10pt}

\begin{equation}
l_{1}(x^{k},y^{k};F)=\max_{\mu\in\mathcal{M}}F(x^{k},\mu)-F(x^{k},y^{k})+\Delta(y^{k},\mu),\label{eq:l1}
\end{equation}
where the local polytope $\mathcal{M}$ is defined as the set of local
pseudomarginals that are normalized, and agree when marginalized over
other neighboring regions,
\[
\mathcal{M}=\{\mu|\mu_{\alpha\beta}(y_{\beta})=\mu_{\beta}(y_{\beta})\,\forall\beta\subset\alpha,\,\,\,\,\sum_{y_{\alpha}}\mu_{\alpha}(y_{\alpha})=1\,\forall\alpha,\,\,\,\,\mu_{\alpha}(y_{\alpha})\geq1\,\forall\alpha,y_{\alpha}\}.
\]
Here, $\mu_{\alpha\beta}(y_{\beta})=\sum_{y_{\alpha\backslash\beta}}\mu_{\alpha}(y_{\alpha})$
is $\mu_{\alpha}$ marginalized out over some region $\beta$ contained
in $\alpha$. It is easy to show that $l_{1}\geq l_{0}$, since the
two would be equivalent if $\mu$ were restricted to binary values,
and hence the maximization in $l_{1}$ takes place over a larger set
\cite{Structural_SVMS}. We also define
\begin{equation}
\theta_{F}^{k}(y_{\alpha})=f_{\alpha}(x^{k},y_{\alpha})+\Delta_{\alpha}(y_{\alpha}^{k},y{}_{\alpha}),\label{eq:theta}
\end{equation}
which gives the equivalent representation of $l_{1}$ as $l_{1}(x^{k},y^{k};F)=-F(x^{k},y^{k})+\max_{\mu\in\mathcal{M}}\theta_{F}^{k}\cdot\mu$.

The maximization in $l_{1}$ is of a linear objective under linear
constraints, and is thus a linear program (LP), solvable in polynomial
time using a generic LP solver. In practice, however, it is preferable
to use custom solvers based on message-passing that exploit the sparsity
of the problem.

Here, we make a further approximation to the loss, replacing the inference
problem of $\max_{\mu\in\mathcal{M}}\theta\cdot\mu$ with the ``smoothed''
problem $\max_{\mu\in\mathcal{M}}\theta\cdot\mu+\epsilon\sum_{\alpha}H(\mu_{\alpha})$,
where $H(\mu_{\alpha})$ is the entropy of the marginals $\mu_{\alpha}$.
This approximation has been considered by Meshi et al. \cite{ConvergenceRateAnalysisofMAPCoordinateMinimizationAlgorithms}
who show that local message-passing can have a guaranteed convergence
rate, and by Hazan and Urtasun \cite{EfficientLearningofStructuredPredictorsinGeneralGraphicalModels}
who use it for learning. The relaxed loss is\vspace{-10pt}

\begin{equation}
l(x^{k},y^{k};F)=-F(x^{k},y^{k})+\max_{\mu\in\mathcal{M}}\left(\theta_{F}^{k}\cdot\mu+\epsilon\sum_{\alpha}H(\mu_{\alpha})\right).\label{eq:l_def}
\end{equation}

Since the entropy is positive, this is clearly a further upper-bound
on the ``unsmoothed'' loss, i.e. $l_{1}\leq l$. Moreover, we can
bound the looseness of this approximation as in the following theorem,
proved in the appendix. A similar result was previously given \cite{ConvergenceRateAnalysisofMAPCoordinateMinimizationAlgorithms}
bounding the difference of the objective obtained by inference with
and without entropy smoothing.
\begin{thm}
$l$ and $l_{1}$ are bounded by (where $|y_{\alpha}|$ is the number
of configurations of $y_{\alpha}$)

\[
l_{1}(x,y,F)\leq l(x,y,F)\leq l_{1}(x,y,F)+\epsilon H_{\max},\,\,\,\, H_{\max}=\sum_{\alpha}\log|y_{\alpha}|.
\]

\end{thm}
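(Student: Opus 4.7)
The plan is to split the double inequality into two independent bounds and attack each by elementary manipulations of the entropy term, since the only difference between $l_1$ and $l$ is the presence of $\epsilon\sum_\alpha H(\mu_\alpha)$ inside the max.

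For the lower bound $l_1 \leq l$, I would observe that $H(\mu_\alpha) \geq 0$ for every $\alpha$ because each $\mu_\alpha$ is a probability distribution over configurations of $y_\alpha$. Consequently, for every $\mu \in \mathcal{M}$ the smoothed objective dominates the unsmoothed one pointwise, and so does the maximum of each side; subtracting the common term $F(x^k,y^k)$ preserves the inequality.

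For the upper bound, I would let $\mu^\star$ be a maximizer of the smoothed problem defining $l$. Then
\[
l(x^k,y^k;F) = -F(x^k,y^k) + \theta_F^k \cdot \mu^\star + \epsilon \sum_\alpha H(\mu^\star_\alpha).
\]
I would bound the first two terms by $l_1(x^k,y^k;F)$, since $\theta_F^k \cdot \mu^\star \leq \max_{\mu\in\mathcal{M}} \theta_F^k \cdot \mu$ by feasibility of $\mu^\star$. For the third term, the standard fact that the uniform distribution maximizes Shannon entropy on a finite alphabet gives $H(\mu^\star_\alpha) \leq \log |y_\alpha|$, so summing over $\alpha$ yields $\epsilon\sum_\alpha H(\mu^\star_\alpha) \leq \epsilon H_{\max}$, completing the bound.

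Neither direction presents a real obstacle: the argument is just nonnegativity of entropy on one side and the uniform-distribution upper bound on the other. The only subtlety worth a sentence is justifying that the smoothed max is attained (so that the maximizer $\mu^\star$ actually exists), which follows from continuity of the objective and compactness of $\mathcal{M}$; otherwise one could simply pass to a supremum and take a limit along a maximizing sequence, since both inequalities pass to the limit.
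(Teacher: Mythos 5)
Your proposal is correct and matches the paper's argument: the lower bound follows from nonnegativity of entropy, and the upper bound is obtained exactly as in the appendix by evaluating the smoothed objective at its maximizer, comparing $\theta\cdot\mu^{\star}$ against the unsmoothed maximum, and bounding each $H(\mu^{\star}_{\alpha})$ by $\log|y_{\alpha}|$. The remark about attainment of the maximum is a minor point the paper leaves implicit.
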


\section{Overview}

Now, the learning problem is to select the functions $f_{\alpha}$
composing $F$ to minimize $R$ as defined in Eq. \ref{eq:R(F)}.
The major challenge is that evaluating $R(F)$ requires performing
inference. Specifically, if we define
\begin{equation}
A(\theta)=\max_{\mu\in\mathcal{M}}\theta\cdot\mu+\epsilon\sum_{\alpha}H(\mu_{\alpha}),\label{eq:A-optimization}
\end{equation}
then we have that

\[
\min_{F}R(F)=\min_{F}\sum_{k}\left(-F(x^{k},y^{k})+A(\theta_{F}^{k})\right).
\]
Since $A(\theta)$ contains a maximization, this is a saddle-point
problem. Inspired by previous work \cite{LearningEfficientlyWithApproximateInferenceViaDualLosses,EfficientLearningofStructuredPredictorsinGeneralGraphicalModels},
our solution (Section \ref{sec:Inference}) is to introduce a vector
of ``messages'' $\lambda$ to write $A$ in the dual form

\[
A(\theta)=\min_{\lambda}A(\lambda,\theta),
\]
 which leads to phrasing learning as the joint minimization
\[
\min_{F}\min_{\{\lambda^{k}\}}\sum_{k}\left[-F(x^{k},y^{k})+A(\lambda^{k},\theta_{F}^{k})\right].
\]

We propose to solve this through an alternating optimization of $F$
and $\{\lambda^{k}\}$. For fixed $F$, message-passing can be used
to perform coordinate ascent updates to all the messages $\lambda^{k}$
(Section \ref{sec:Inference}). These updates are trivially parallelized
with respect to $k$. However, the problem remains, for fixed messages,
how to optimize the functions $f_{\alpha}$ composing $F$. Section
\ref{sec:Training} observes that this problem can be re-formulated
into a (non-structured) logistic regression problem, with ``bias''
terms added to each example that reflect the current messages into
factor $\alpha$.

\begin{algorithm}
For all $k$, $\alpha$, initialize $\lambda^{k}(y_{\alpha})\leftarrow0$.

Repeat until convergence:
\begin{enumerate}
\item For all $k$, for all $\alpha$, set the bias term to
\[
b_{\alpha}^{k}(y{}_{\alpha})\leftarrow\frac{1}{\epsilon}\left(\Delta(y_{\alpha}^{k},y{}_{\alpha})+\sum_{\beta\subset\alpha}\lambda_{\alpha}^{k}(y_{\beta})-\sum_{\gamma\supset\alpha}\lambda_{\gamma}^{k}(y_{\alpha})\right).
\]

\item For all $\alpha$, solve the logistic regression problem
\[
f_{\alpha}\leftarrow\arg\max_{f_{\alpha}\in\mathcal{F}_{\alpha}}\sum_{k=1}^{K}\left[\Bigl(f_{\alpha}(x^{k},y_{\alpha}^{k})+b_{\alpha}^{k}(y_{\alpha}^{k})\Bigr)-\log\sum_{y_{\alpha}}\exp\Bigl(f_{\alpha}(x^{k},y_{\alpha})+b_{\alpha}^{k}(y_{\alpha})\Bigr)\right].
\]

\item For all $k$, for all $\alpha$, form updated parameters as
\[
\theta^{k}(y_{\alpha})\leftarrow\epsilon f_{\alpha}(x^{k},y_{\alpha})+\Delta(y_{\alpha}^{k},y{}_{\alpha}).
\]

\item For all $k,$ perform a fixed number of message-passing iterations
to update $\lambda^{k}$ using $\theta^{k}$. (Eq. \ref{eq:message-passing})
\end{enumerate}
\caption{Reducing structured learning to logistic regression.\label{alg:Reducing-structured-learning}}
\end{algorithm}

\section{Inference\label{sec:Inference}}

In order to evaluate the loss, it is necessary to solve the maximization
in Eq. \ref{eq:l_def}. For a given $\theta$, consider doing inference
over $\mu$, that is, in solving the maximization in Eq. \ref{eq:A-optimization}.
Standard Lagrangian duality theory gives the following dual representation
for $A(\theta)$ in terms of ``messages'' $\lambda_{\alpha}(x_{\beta})$
from a region $\alpha$ to a subregion $\beta\subset\alpha$, a variant
of the representation of Heskes \cite{Heskes_conditions}.
\begin{thm}
\label{thm:dual-A}$A(\theta)$ can be represented in the dual form
\textup{$A(\theta)=\min_{\lambda}A(\lambda,\theta)$, where
\begin{equation}
A(\lambda,\theta)=\max_{\mu\in\mathcal{N}}\theta\cdot\mu+\epsilon\sum_{\alpha}H(\mu_{\alpha})+\sum_{\alpha}\sum_{\beta\subset\alpha}\sum_{x_{\beta}}\lambda_{\alpha}(x_{\beta})\left(\mu_{\alpha\beta}(y_{\beta})-\mu_{\beta}(y_{\beta})\right),\label{eq:A-dual}
\end{equation}
and }$\mathcal{N}=\{\mu|\sum_{y_{\alpha}}\mu_{\alpha}(y_{\alpha})=1,\mu_{\alpha}(y_{\alpha})\geq0\}$
is the set of locally normalized pseudomarginals. Moreover, for a
fixed $\lambda$, the maximizing $\mu$ is given by
\begin{equation}
\mu_{\alpha}(y_{\alpha})=\frac{1}{Z_{\alpha}}\exp\left(\frac{1}{\epsilon}\left(\theta(y_{\alpha})+\sum_{\beta\subset\alpha}\lambda_{\alpha}(y_{\beta})-\sum_{\gamma\supset\alpha}\lambda_{\gamma}(y_{\alpha})\right)\right),\label{eq:optimal_mu}
\end{equation}
where $Z_{\alpha}$ is a normalizing constant to ensure that $\sum_{y_{\alpha}}\mu_{\alpha}(y_{\alpha})=1$.
\end{thm}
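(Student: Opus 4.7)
\vspace{6pt}
\noindent\textbf{Proof plan.} The natural approach is standard Lagrangian duality: dualize only the marginalization (consistency) constraints in $\mathcal{M}$, keep the normalization and non-negativity constraints in the domain $\mathcal{N}$, and verify that strong duality holds so that the min-max can be swapped.

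First I would write the Lagrangian of the primal in Eq.~\ref{eq:A-optimization} by attaching a multiplier $\lambda_\alpha(y_\beta)$ to each equality $\mu_{\alpha\beta}(y_\beta)-\mu_\beta(y_\beta)=0$ with $\beta\subset\alpha$, leaving the constraints defining $\mathcal{N}$ implicit. This gives exactly the expression inside Eq.~\ref{eq:A-dual}. Because the primal objective is concave (entropies plus a linear term), the marginalization constraints are affine, and an interior point of $\mathcal{N}$ satisfying all equalities exists (take each $\mu_\alpha$ to be uniform, which is strictly positive and automatically marginal-consistent), Slater's condition holds and strong duality yields $A(\theta)=\min_\lambda\max_{\mu\in\mathcal{N}}(\cdot)=\min_\lambda A(\lambda,\theta)$.

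Next I would derive the closed-form maximizer in Eq.~\ref{eq:optimal_mu}. The key algebraic step is to rewrite the coupling terms using $\mu_{\alpha\beta}(y_\beta)=\sum_{y_{\alpha\setminus\beta}}\mu_\alpha(y_\alpha)$ and then regroup by $\alpha$: the $\lambda_\alpha(y_\beta)$ contributes $+\lambda_\alpha(y_\beta)$ to the coefficient of $\mu_\alpha(y_\alpha)$ whenever $\beta\subset\alpha$, and the same multiplier contributes $-\lambda_\alpha(y_\beta)$ to the coefficient of $\mu_\beta(y_\beta)$. Renaming indices (letting $\gamma$ play the role of the enclosing region), the coefficient of $\mu_\alpha(y_\alpha)$ becomes
\[
\tilde\theta_\alpha(y_\alpha)=\theta(y_\alpha)+\sum_{\beta\subset\alpha}\lambda_\alpha(y_\beta)-\sum_{\gamma\supset\alpha}\lambda_\gamma(y_\alpha),
\]
so the inner maximization decouples across $\alpha$ into independent problems of the form $\max_{\mu_\alpha\in\Delta}\tilde\theta_\alpha\cdot\mu_\alpha+\epsilon H(\mu_\alpha)$ over the probability simplex.

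Finally, each decoupled subproblem is the classical entropy-regularized linear maximization, whose KKT conditions (or a direct Lagrangian on the single normalization constraint) immediately yield $\mu_\alpha(y_\alpha)\propto\exp(\tilde\theta_\alpha(y_\alpha)/\epsilon)$, which is precisely Eq.~\ref{eq:optimal_mu}; non-negativity is automatic from the exponential, so the solution lies in the interior of $\mathcal{N}$ as required. The only subtle step is the strong-duality justification; once that is in place the rest is bookkeeping and the standard softmax identity.
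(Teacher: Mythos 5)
Your proposal is correct and follows exactly the route the paper itself indicates: the paper offers no explicit proof of Theorem \ref{thm:dual-A}, attributing it to ``standard Lagrangian duality theory'' and Heskes' representation, and your argument---dualizing only the marginalization constraints, invoking concavity plus affine constraints (with the uniform pseudomarginals as a feasible interior point) for strong duality, regrouping the coupling terms to decouple over $\alpha$, and applying the entropy/log-sum-exp conjugacy of Lemma \ref{lem:conjugacy} to obtain Eq.~\ref{eq:optimal_mu}---is precisely the standard derivation being referenced. No gaps; this simply fills in the details the paper omits.
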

Thus, for any set of messages $\lambda$, there is an easily-evaluated
upper-bound $A(\lambda,\theta)\geq A(\theta),$ and when $A(\lambda,\theta)$
is minimized with respect to $\lambda$, this bound is tight. The
standard approach to performing the minimization over $\lambda$ is
essentially block-coordinate descent. There are variants, depending
on the size of the ``block'' that is updated. In our experiments,
we use blocks consisting of the set of all messages $\lambda_{\alpha}(y_{\nu})$
for all regions $\alpha$ containing $\nu$. When the graph only contains
regions for single variables and pairs, this is a ``star update''
of all the messages from pairs that contain a variable $i$. It can
be shown \cite{Heskes_conditions,ConvergenceRateAnalysisofMAPCoordinateMinimizationAlgorithms}
that the update is

\begin{equation}
\lambda_{\alpha}'(y_{\nu})\leftarrow\lambda_{\alpha}(y_{\nu})+\frac{\epsilon}{1+N_{\nu}}(\log\mu_{\nu}(y_{\nu})+\sum_{\alpha'\supset\nu}\log\mu_{\alpha'}(y_{\nu}))-\epsilon\log\mu_{\alpha}(y_{\nu}),\label{eq:message-passing}
\end{equation}

for all $\alpha\supset\nu$, where $N_{\nu}=|\{\alpha|\alpha\supset\nu\}|$.
Meshi et al. \cite{ConvergenceRateAnalysisofMAPCoordinateMinimizationAlgorithms}
show that with greedy or randomized selection of blocks to update,
$\mathcal{O}(\frac{1}{\delta})$ iterations are sufficient to converge
within error $\delta$.

\section{Logistic Regression\label{sec:Logistic-Regression}}

Logistic regression is traditionally understood as defining a conditional
distribution $p(y|x;W)=\exp\left((Wx)_{y}\right)/Z(x)$ where $W$
is a matrix that maps the input features $x$ to a vector of margins
$Wx$. It is easy to show that the maximum conditional likelihood
training problem $\max_{W}\sum_{k}\log p(y^{k}|x^{k};W)$ is equivalent
to 
\[
\max_{W}\sum_{k}\left[(Wx^{k})_{y^{k}}-\log\sum_{y}\exp(Wx^{k})_{y}\right].
\]

Here, we generalize this in two ways. First, rather than taking the
mapping from features $x$ to the margin for label $y$ as the $y$-th
component of $Wx$, we take it as $f(x,y)$ for some function $f$
in a set of function $\mathcal{F}$. (This reduces to the linear case
when $f(x,y)=(Wx)_{y}$.) Secondly, we assume that there is a pre-determined
``bias'' vector $b^{k}$ associated with each training example.
This yields the learning problem\vspace{-10pt}

\begin{equation}
\max_{f\in\mathcal{F}}\sum_{k}\left[\left(f(x^{k},y^{k})+b^{k}(y^{k})\right)-\log\sum_{y}\exp\left(f(x^{k},y)+b^{k}(y)\right)\right],\label{eq:logistic-regression-general}
\end{equation}

Aside from linear logistic regression, one can see decision trees,
multi-layer perceptrons, and boosted ensembles under an appropriate
loss as solving Eq. \ref{eq:logistic-regression-general} for different
sets of functions $\mathcal{F}$ (albeit possibly to a local maximum).

\section{Training\label{sec:Training}}

Recall that the learning problem is to select the functions $f_{\alpha}\in\mathcal{F}_{\alpha}$
so as to minimize the empirical risk $R(F)=\sum_{k}[-F(x^{k},y^{k})+A(\theta_{F}^{k})]$.
At first blush, this appears challenging, since evaluating $A(\theta)$
requires solving a message-passing optimization. However, we can use
the dual representation of $A$ from Theorem \ref{thm:dual-A} to
represent $\min_{F}R(F)$ in the form
\begin{equation}
\min_{F}\min_{\{\lambda^{k}\}}\sum_{k}\left[-F(x^{k},y^{k})+A(\lambda^{k},\theta_{F}^{k})\right].\label{eq:R-dual}
\end{equation}

To optimize Eq. \ref{eq:R-dual}, we alternating between optimization
of messages $\{\lambda^{k}\}$ and energy functions $\{f_{\alpha}\}$.
Optimization with respect to $\lambda^{k}$ for fixed $F$ decomposes
into minimizing $A(\lambda^{k},\theta_{F}^{k})$ independently for
each $y^{k}$, which can be done by running message-passing updates
as in Section \ref{sec:Inference} using the parameter vector $\theta_{F}^{k}$.
Thus, the rest of this section is concerned with how to optimize with
respect to $F$ for fixed messages. Below, we will use a slight generalization
of a standard result \cite[p. 93]{ConvexOptimization}.
\begin{lem}
\textup{\label{lem:conjugacy}}The conjugate of the entropy is the
``log-sum-exp'' function. Formally,
\[
\underset{x:x^{T}1=1,x\geq0}{\max}\theta\cdot x-\rho\sum_{i}x_{i}\log x_{i}=\rho\log\sum_{i}\exp\frac{\theta_{i}}{\rho}.
\]
\end{lem}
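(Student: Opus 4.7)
The plan is to verify the identity by solving the constrained optimization on the left-hand side directly via Lagrange multipliers, since the objective is strictly concave and the feasible set (the simplex) is convex, so first-order conditions are sufficient for the unique maximizer.

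First I would argue that the nonnegativity constraint $x \geq 0$ is inactive: the gradient of $-\rho x_i \log x_i$ blows up as $x_i \to 0^+$ (assuming $\rho > 0$), so the maximizer lies in the relative interior of the simplex. I can therefore introduce a single Lagrange multiplier $\nu$ for the equality constraint $\sum_i x_i = 1$ and form
\[
L(x,\nu) = \theta \cdot x - \rho \sum_i x_i \log x_i - \nu\Bigl(\sum_i x_i - 1\Bigr).
\]
Setting $\partial L/\partial x_i = 0$ gives $\theta_i - \rho(\log x_i + 1) - \nu = 0$, i.e.\ $x_i = \exp\bigl((\theta_i - \nu)/\rho - 1\bigr)$. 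Imposing $\sum_i x_i = 1$ determines $\nu$ and yields the familiar softmax form $x_i^{*} = \exp(\theta_i/\rho)\big/\sum_j \exp(\theta_j/\rho)$.

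Next I would substitute $x^{*}$ back into the objective. Writing $\log x_i^{*} = \theta_i/\rho - \log \sum_j \exp(\theta_j/\rho)$ and using $\sum_i x_i^{*} = 1$, the cross terms with $\theta_i$ cancel and one is left with $\rho \log \sum_j \exp(\theta_j/\rho)$, matching the right-hand side. Strict concavity of $-\sum_i x_i \log x_i$ on the simplex guarantees that this critical point is the global maximum, so no second-order check beyond a one-line remark is required.

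There is no real obstacle here; the only mild subtlety is the justification that the boundary $x_i = 0$ can be ignored, which I would handle by the gradient-blowup observation above (or, equivalently, by noting that the entropy term is continuous on the closed simplex with the convention $0 \log 0 = 0$, and that any boundary point can be perturbed into the interior to strictly increase the objective). The result then drops out of the KKT stationarity equation plus a one-line substitution.
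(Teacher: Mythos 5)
Your proof is correct. The paper itself offers no proof of this lemma --- it simply cites it as a standard result from Boyd and Vandenberghe (p.~93) --- so there is nothing to diverge from; your Lagrange-multiplier derivation is exactly the textbook argument that citation points to. The stationarity condition $\theta_i - \rho(\log x_i + 1) - \nu = 0$ gives the softmax $x_i^* = \exp(\theta_i/\rho)/\sum_j \exp(\theta_j/\rho)$, the back-substitution correctly collapses to $\rho\log\sum_j\exp(\theta_j/\rho)$, and your handling of the boundary (the derivative of $-\rho x_i\log x_i$ diverges to $+\infty$ as $x_i\to 0^+$, so the maximizer is interior) together with strict concavity of the objective for $\rho>0$ makes the first-order condition sufficient. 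The only implicit hypothesis worth stating explicitly is $\rho>0$, which the lemma as used in the paper (with $\rho=\epsilon$) always satisfies.
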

\begin{thm}
If $f_{\alpha}^{*}$ is the minimizer of Eq \ref{eq:R-dual} for fixed
messages $\lambda$, then
\begin{equation}
f_{\alpha}^{*}=\epsilon\arg\max_{f_{\alpha}}\sum_{k}\left[\left(f_{\alpha}(x^{k},y_{\alpha}^{k})+b_{\alpha}^{k}(y_{\alpha}^{k})\right)-\log\sum_{y_{\alpha}}\exp\left(f_{\alpha}(x^{k},y_{\alpha})+b_{\alpha}^{k}(y_{\alpha})\right)\right],\label{eq:logistic-regression-with-biases}
\end{equation}
where the set of biases are defined as
\begin{equation}
b_{\alpha}^{k}(y{}_{\alpha})=\frac{1}{\epsilon}\left(\Delta(y_{\alpha}^{k},y{}_{\alpha})+\sum_{\beta\subset\alpha}\lambda_{\alpha}^{k}(y_{\beta})-\sum_{\gamma\supset\alpha}\lambda_{\gamma}^{k}(y_{\alpha})\right).\label{eq:biases}
\end{equation}
\end{thm}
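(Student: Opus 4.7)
The plan is to substitute the dual representation of $A$ from Theorem \ref{thm:dual-A} into Eq.~\ref{eq:R-dual}, exploit the fact that for fixed $\lambda$ the inner maximization over $\mu$ decouples across regions $\alpha$, and then apply Lemma \ref{lem:conjugacy}. First, for each example $k$ and each region $\alpha$, I would collect every term of the dual objective that is linear in $\mu_\alpha(y_\alpha)$: the direct contribution $\theta_F^k(y_\alpha) = f_\alpha(x^k,y_\alpha) + \Delta_\alpha(y_\alpha^k, y_\alpha)$; the contribution $\sum_{\beta\subset\alpha}\lambda_\alpha^k(y_\beta)$ coming from the Lagrangian terms in which $\alpha$ is the parent region, after rewriting $\mu_{\alpha\beta}(y_\beta) = \sum_{y_{\alpha\setminus\beta}}\mu_\alpha(y_\alpha)$; and the contribution $-\sum_{\gamma\supset\alpha}\lambda_\gamma^k(y_\alpha)$ coming from the Lagrangian terms in which $\alpha$ is the child region. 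Combined with $\epsilon H(\mu_\alpha)$, the dual objective separates into an independent maximization over each $\mu_\alpha$ subject only to normalization and nonnegativity.

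Next I would apply Lemma \ref{lem:conjugacy} with $\rho = \epsilon$ to each local maximization. Writing the coefficient of $\mu_\alpha(y_\alpha)$ as $f_\alpha(x^k,y_\alpha) + \epsilon b_\alpha^k(y_\alpha)$, where $b_\alpha^k$ is precisely the bias defined in Eq.~\ref{eq:biases}, the lemma yields $A(\lambda^k,\theta_F^k) = \sum_\alpha \epsilon \log \sum_{y_\alpha} \exp\bigl(\tfrac{1}{\epsilon}f_\alpha(x^k,y_\alpha) + b_\alpha^k(y_\alpha)\bigr)$. Adding the $-F(x^k,y^k) = -\sum_\alpha f_\alpha(x^k,y_\alpha^k)$ term, the overall minimization of Eq.~\ref{eq:R-dual} over $F$ decouples into an independent problem for each $\alpha$: $\min_{f_\alpha}\sum_k\bigl[-f_\alpha(x^k,y_\alpha^k) + \epsilon \log\sum_{y_\alpha}\exp\bigl(\tfrac{1}{\epsilon}f_\alpha(x^k,y_\alpha) + b_\alpha^k(y_\alpha)\bigr)\bigr]$.

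Finally, I would rescale by setting $g_\alpha = f_\alpha/\epsilon$, factor the common $\epsilon$ out of the sum, add the $f_\alpha$-independent constant $\sum_k b_\alpha^k(y_\alpha^k)$ into the first term to symmetrize the bracket, and flip the sign to convert the minimization into a maximization. What remains matches Eq.~\ref{eq:logistic-regression-with-biases} exactly, and the original minimizer is recovered as $f_\alpha^* = \epsilon g_\alpha^*$.

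The main obstacle I anticipate is purely bookkeeping: each region $\alpha$ plays two roles in the outer Lagrangian sum, once as a parent indexed by $\beta \subset \alpha$ and once as a child indexed by $\gamma \supset \alpha$, and the signs, scopes, and arguments of the corresponding messages must line up precisely with the definition of $b_\alpha^k$ for the identification to go through cleanly. A secondary subtlety is that the rescaling $g_\alpha = f_\alpha/\epsilon$ tacitly assumes the function class $\mathcal{F}_\alpha$ is closed under positive scaling; this is implicit in the $\epsilon\,\arg\max$ form in which the conclusion is stated.
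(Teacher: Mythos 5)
Your proposal is correct and follows essentially the same route as the paper's proof: substitute the dual form of $A$, absorb the message terms into the bias so the maximization decouples over regions, apply Lemma \ref{lem:conjugacy} with $\rho=\epsilon$, and finish with the rescaling and sign flip. Your closing remark that the $\epsilon\arg\max$ form tacitly assumes $\mathcal{F}_{\alpha}$ is closed under positive scaling is a fair observation that the paper leaves implicit.
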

\begin{proof}
Substituting $A(\lambda,\theta)$ from Eq. \ref{eq:A-dual} and $\theta^{k}$
from Eq. \ref{eq:theta} gives that

\begin{multline*}
A(\lambda^{k},\theta_{F}^{k})=\max_{\mu\in\mathcal{N}}\sum_{\alpha}\sum_{y_{\alpha}}\left(f_{\alpha}(x^{k},y_{\alpha})+\Delta_{\alpha}(y_{\alpha}^{k},y{}_{\alpha})\right)\mu(y_{\alpha})+\epsilon\sum_{\alpha}H(\mu_{\alpha})\\
+\sum_{\alpha}\sum_{\beta\subset\alpha}\sum_{x_{\beta}}\lambda_{\alpha}^{k}(x_{\beta})\left(\mu_{\alpha\beta}(y_{\beta})-\mu_{\beta}(y_{\beta})\right).
\end{multline*}
Using the definition of $b^{k}$ from Eq. \ref{eq:biases} above,
this simplifies into
\[
A(\lambda^{k},\theta_{F}^{k})=\sum_{\alpha}\max_{\mu_{\alpha}\in\mathcal{N}_{\alpha}}\left(\sum_{y_{\alpha}}\left(f_{\alpha}(x,y_{\alpha})+\epsilon b_{\alpha}(y_{\alpha})\right)\mu_{\alpha}(y_{\alpha})+\epsilon H(\mu_{\alpha})\right),
\]
where $\mathcal{N}_{\alpha}=\{\mu_{\alpha}|\sum_{y_{\alpha}}\mu_{\alpha}(y_{\alpha})=1,\mu_{\alpha}(y_{\alpha})\geq0\}$
enforces that $\mu_{\alpha}$ is a locally normalized set of marginals.
Applying Lemma \ref{lem:conjugacy} to the inner maximization gives
the closed-form expression

\[
A(\lambda^{k},\theta_{F}^{k})=\sum_{\alpha}\epsilon\log\sum_{y_{\alpha}}\exp\left(\frac{1}{\epsilon}f_{\alpha}(x,y_{\alpha})+b_{\alpha}(y_{\alpha})\right).
\]

Thus, minimizing Eq. \ref{eq:R-dual} with respect to $F$ is equivalent
to finding (for all $\alpha$)
\begin{align*}
 & \arg\max_{f_{\alpha}}\sum_{k}\left[f_{\alpha}(x^{k},y_{\alpha}^{k})-\epsilon\log\sum_{y_{\alpha}}\exp\left(\frac{1}{\epsilon}f_{\alpha}(x,y_{\alpha})+b_{\alpha}^{k}(y_{\alpha})\right)\right]\\
 & =\arg\max_{f_{\alpha}}\sum_{k}\left[\frac{1}{\epsilon}f_{\alpha}(x^{k},y_{\alpha}^{k})-\log\sum_{y_{\alpha}}\exp\left(\frac{1}{\epsilon}f(x^{k},y_{\alpha})+b_{\alpha}^{k}(y_{\alpha})\right)\right]
\end{align*}

Observing that adding a bias term doesn't change the maximizing $f_{\alpha}$,
and using the fact that $\arg\max g(\frac{1}{\epsilon}\cdot)=\epsilon\arg\max g(\cdot)$
gives the result.
\end{proof}
The final learning algorithm is summarized as Alg. \ref{alg:Reducing-structured-learning}.
Sometimes, the local classifier $f_{\alpha}$ will depend on the input
$x$ only through some ``local features'' $\phi_{\alpha}$. The
above framework accomodates this situation if the set $\mathcal{F}_{\alpha}$
is considered to select these local features.

In practice, one will often wish to constrain that some of the functions
$f_{\alpha}$ are the same. This is done by taking the sum in Eq.
\ref{eq:logistic-regression-with-biases} not just over all data $k$,
but also over all factors $\alpha$ that should be so constrained.
For example, it is common to model image segmentation problems using
a 4-connected grid with an energy like $F(x,y)=\sum_{i}u(\phi_{i},y_{i})+\sum_{ij}v(\phi_{ij},y_{i},y_{j})$,
where $\phi_{i}$/$\phi_{ij}$ are univariate/pairwise features determined
by $x$, and $u$ and $v$ are functions mapping local features to
local energies. In this case, $u$ would be selected to maximize $\sum_{k}\sum_{i}\left[\left(u(\phi_{i}^{k},y_{i}^{k})+b_{i}^{k}(y_{i}^{k})\right)-\log\sum_{y_{i}}\exp\left(u(\phi_{i}^{k},y_{i})+b_{i}^{k}(y_{i})\right)\right]$,
and analogous expression exists for $v$. This is the framework used
in the following experiments.

\section{Experiments}

\begin{figure}
\includegraphics[width=0.33\textwidth]{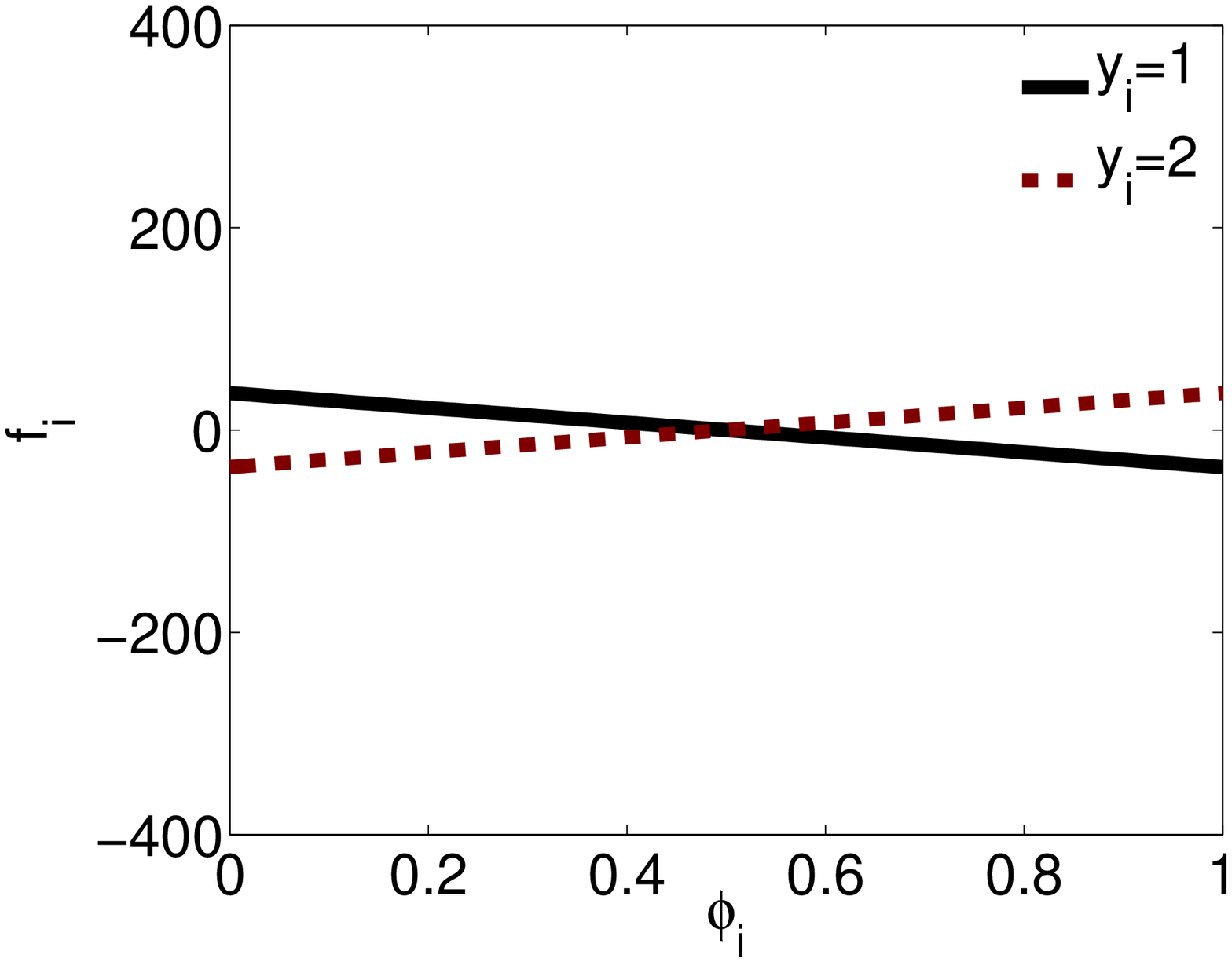}\includegraphics[width=0.33\textwidth]{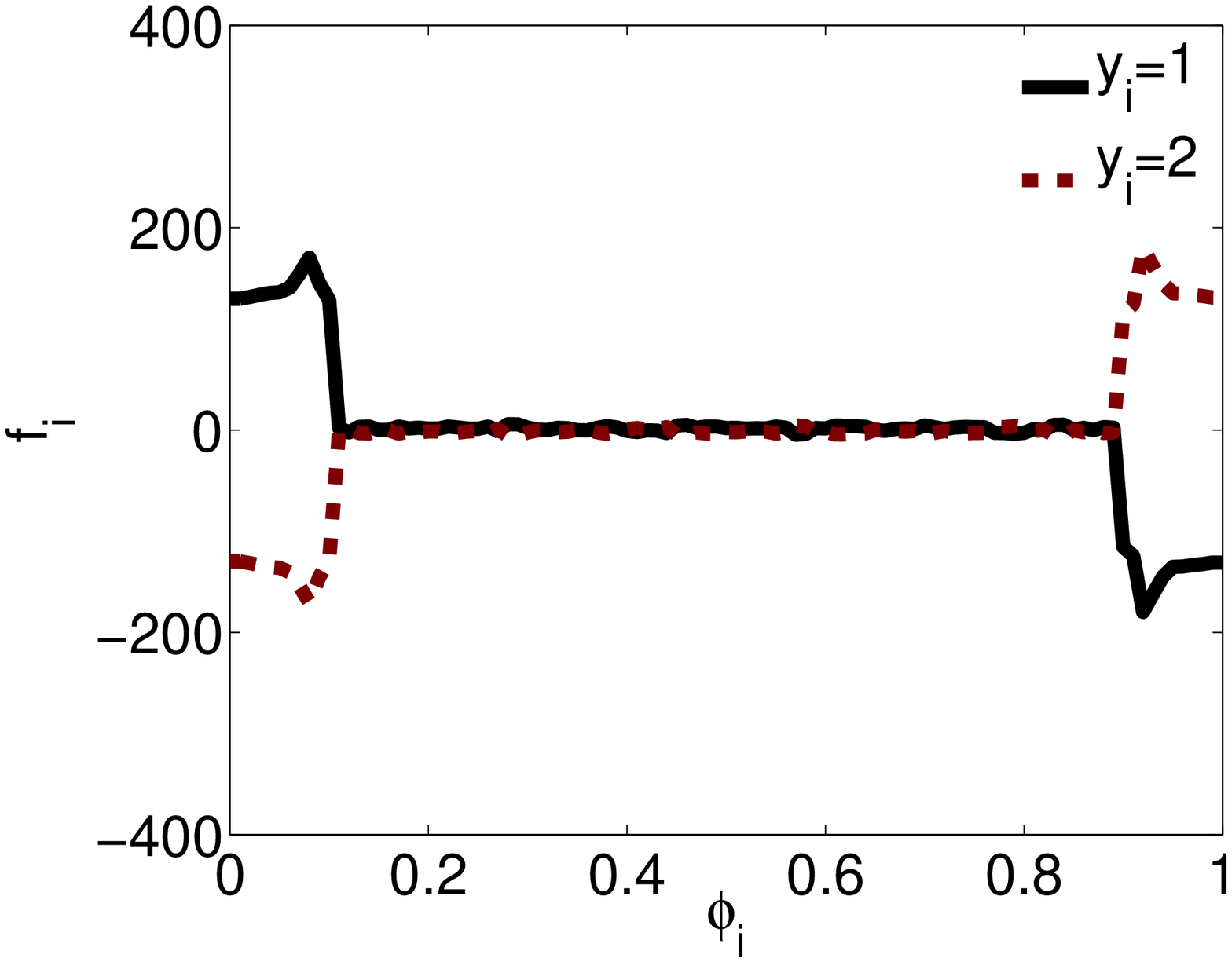}\includegraphics[width=0.33\textwidth]{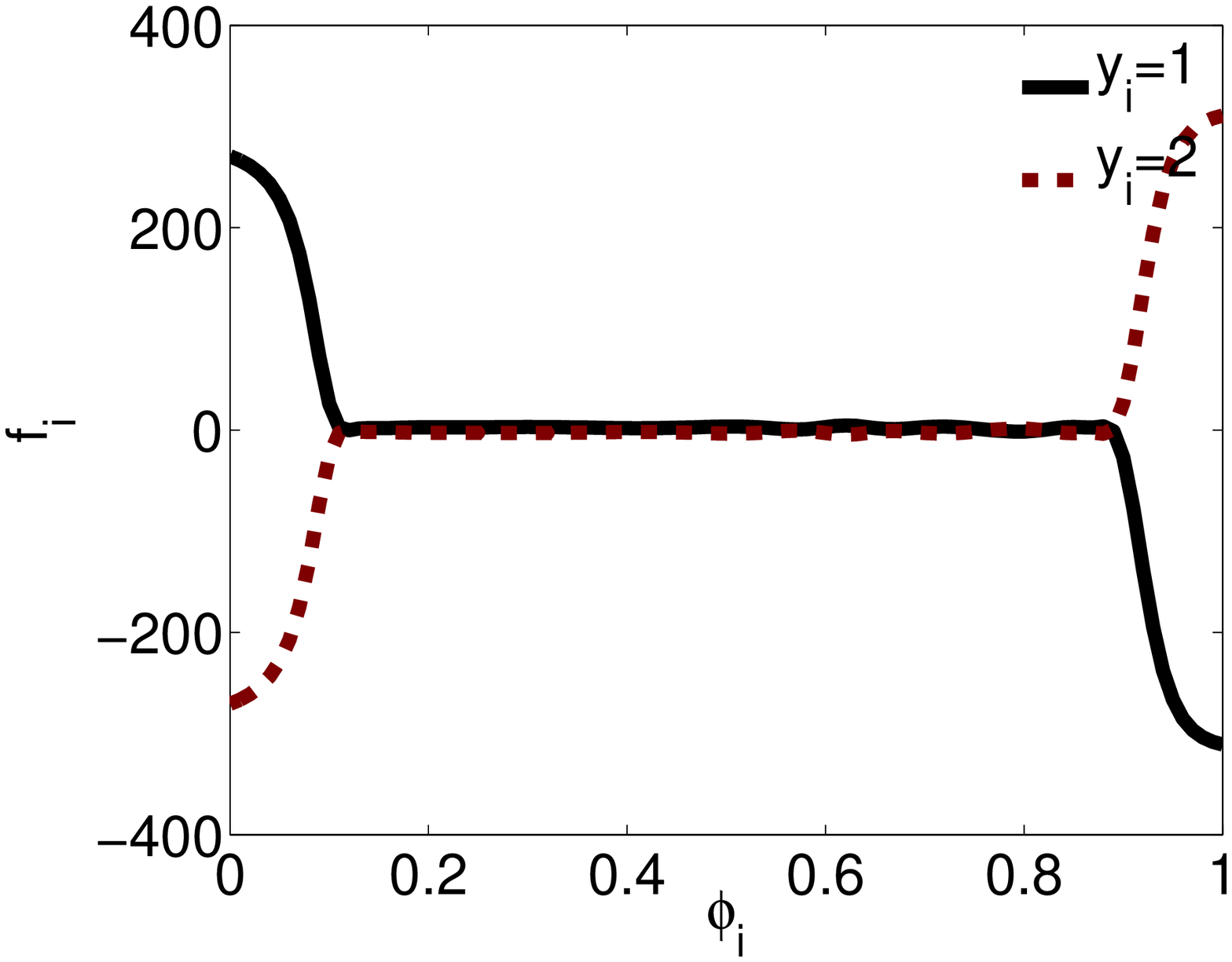}

\includegraphics[width=0.33\textwidth]{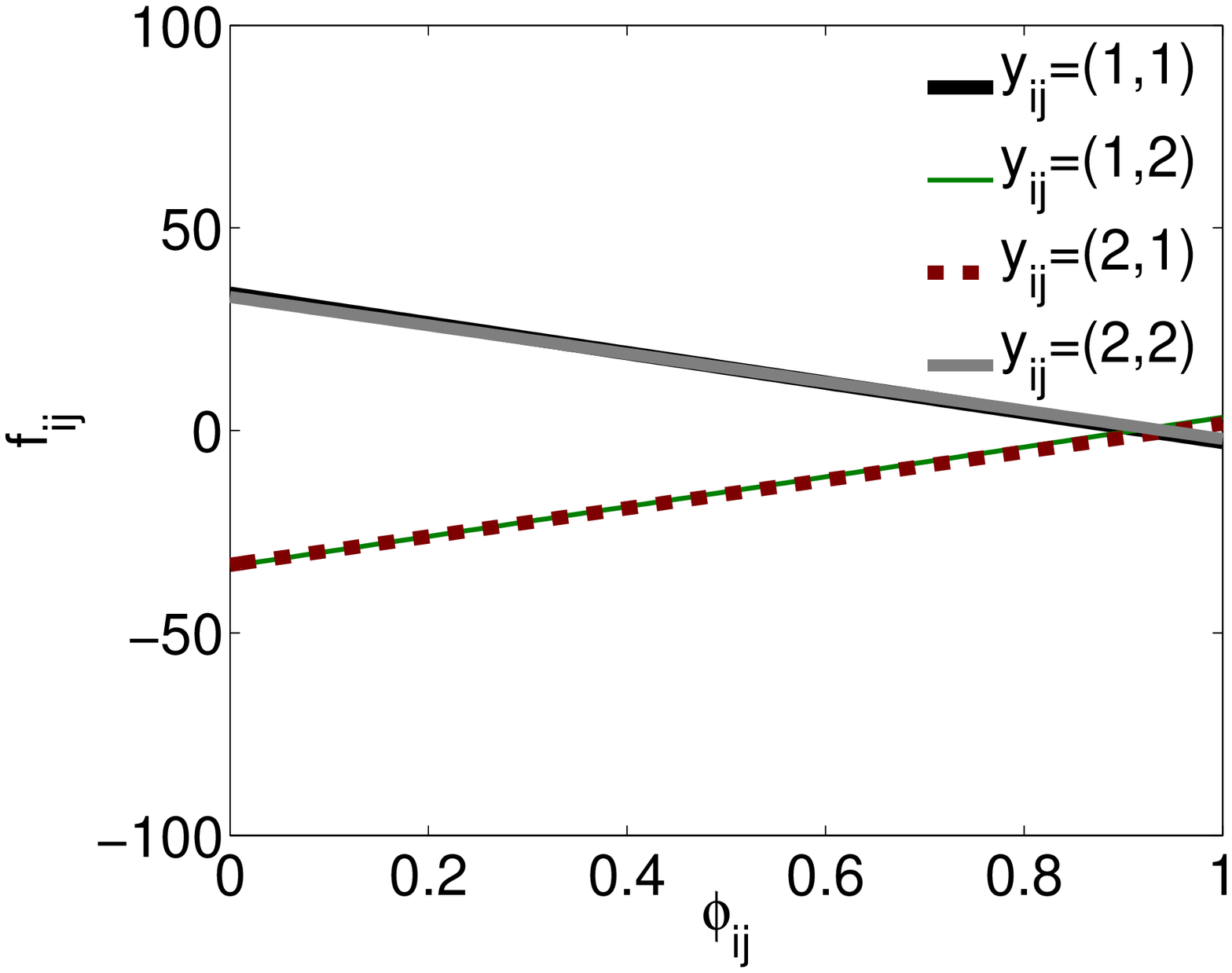}\includegraphics[width=0.33\textwidth]{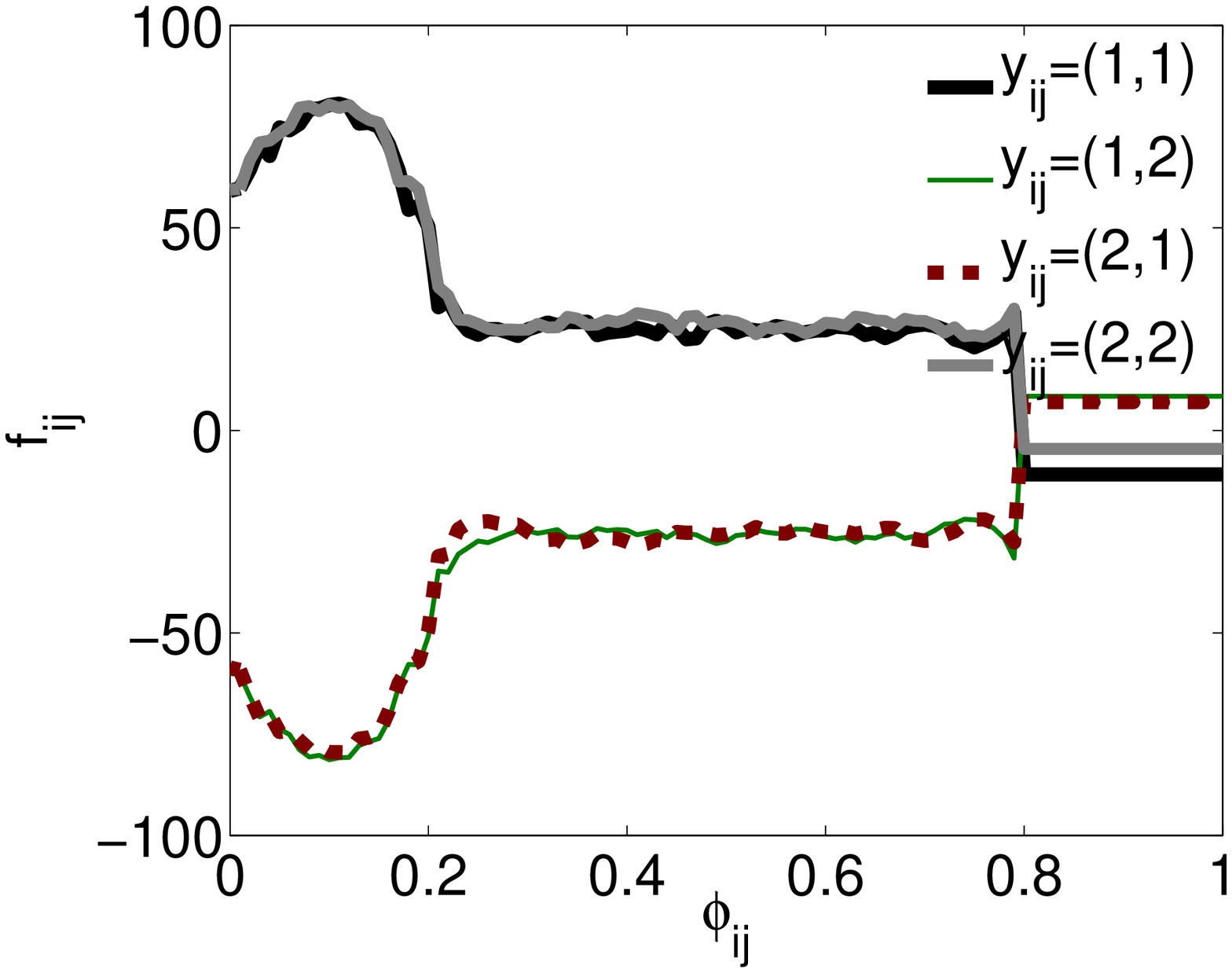}\includegraphics[width=0.33\textwidth]{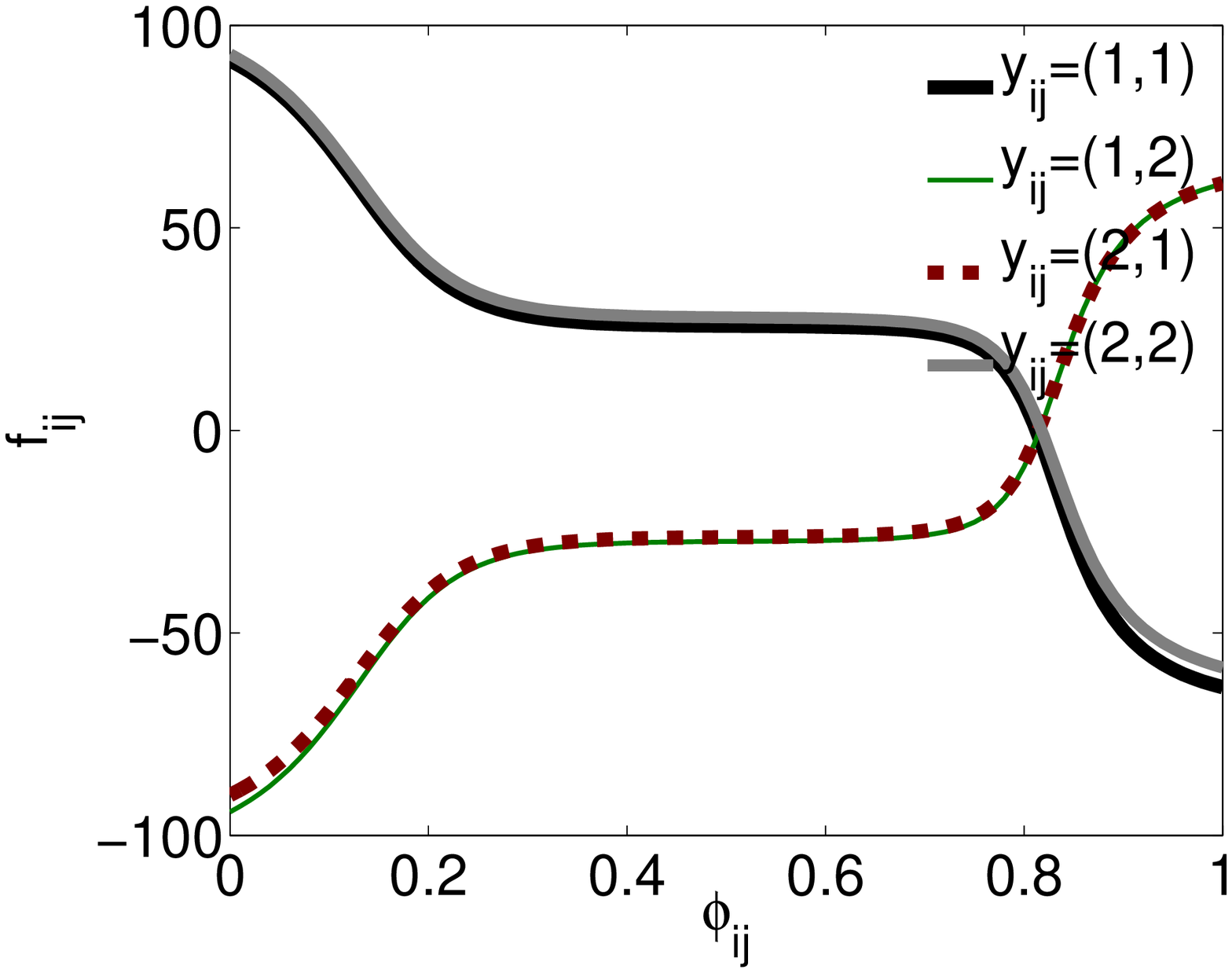}

\begin{centering}
\begin{tabular*}{0.75\textwidth}{@{\extracolsep{\fill}}ccc}
Linear & ~~~~~Boosting & MLP\tabularnewline
\end{tabular*}
\par\end{centering}

\caption{The univariate (top) and pairwise (bottom) energy functions learned
on denoising data. Each column shows the result of training both univariate
and pairwise terms with one function class.\label{fig:denoising_energies}}
\end{figure}
\textcolor{cyan}{}
\begin{table}
\begin{centering}
\setlength\tabcolsep{3pt} %
\begin{tabular}{|c|ccccc|}
\multicolumn{6}{c}{\textbf{Denoising}}\tabularnewline
\hline 
$\mathcal{F}_{i}$ \textbackslash{} $\mathcal{F}_{ij}$  & Zero & Const. & Linear & Boost. & MLP\tabularnewline
\hline 
Zero & .502 & .502 & .502 & .511 & .502\tabularnewline
Const. & .502 & .502 & .502 & .510 & .502\tabularnewline
Linear & .444 & .077 & .059 & .049 & .034\tabularnewline
Boost. & .444 & .034 & .015 & .009 & .007\tabularnewline
MLP & .445 & .032 & .015 & .009 & .008\tabularnewline
\hline 
\end{tabular}~~~%
\begin{tabular}{|c|ccccc|}
\multicolumn{6}{c}{\textbf{Horses}}\tabularnewline
\hline 
$\mathcal{F}_{i}$ \textbackslash{} $\mathcal{F}_{ij}$  & Zero & Const. & Linear & Boost. & MLP\tabularnewline
\hline 
Zero & .246 & .246 & .247 & .244 & .245\tabularnewline
Const. & .246 & .246 & .247 & .244 & .245\tabularnewline
Linear & .185 & .185 & .168 & .154 & .156\tabularnewline
Boost. & .103 & .098 & .092 & .084 & .086\tabularnewline
MLP & .096 & .094 & .087 & .080 & .081\tabularnewline
\hline 
\end{tabular}
\par\end{centering}

\caption{Univariate Test Error Rates (Train Errors in Appendix)\label{tab:Test-Error-Rates}}
\end{table}
\begin{figure}
\begin{minipage}[t]{0.5\columnwidth}%
\scalebox{.785}{\setlength\tabcolsep{0pt} %
\begin{tabular}{ccrc}
\multicolumn{4}{c}{\textbf{\large{Denoising}}}\tabularnewline
Linear & Boosting & \multicolumn{2}{r}{~~~~~~~~~~~MLP~~~$\mathcal{F}_{i}$ \textbackslash{}
$\mathcal{F}_{ij}$}\tabularnewline
\includegraphics[bb=22bp 249bp 544bp 604bp,clip,scale=0.17]{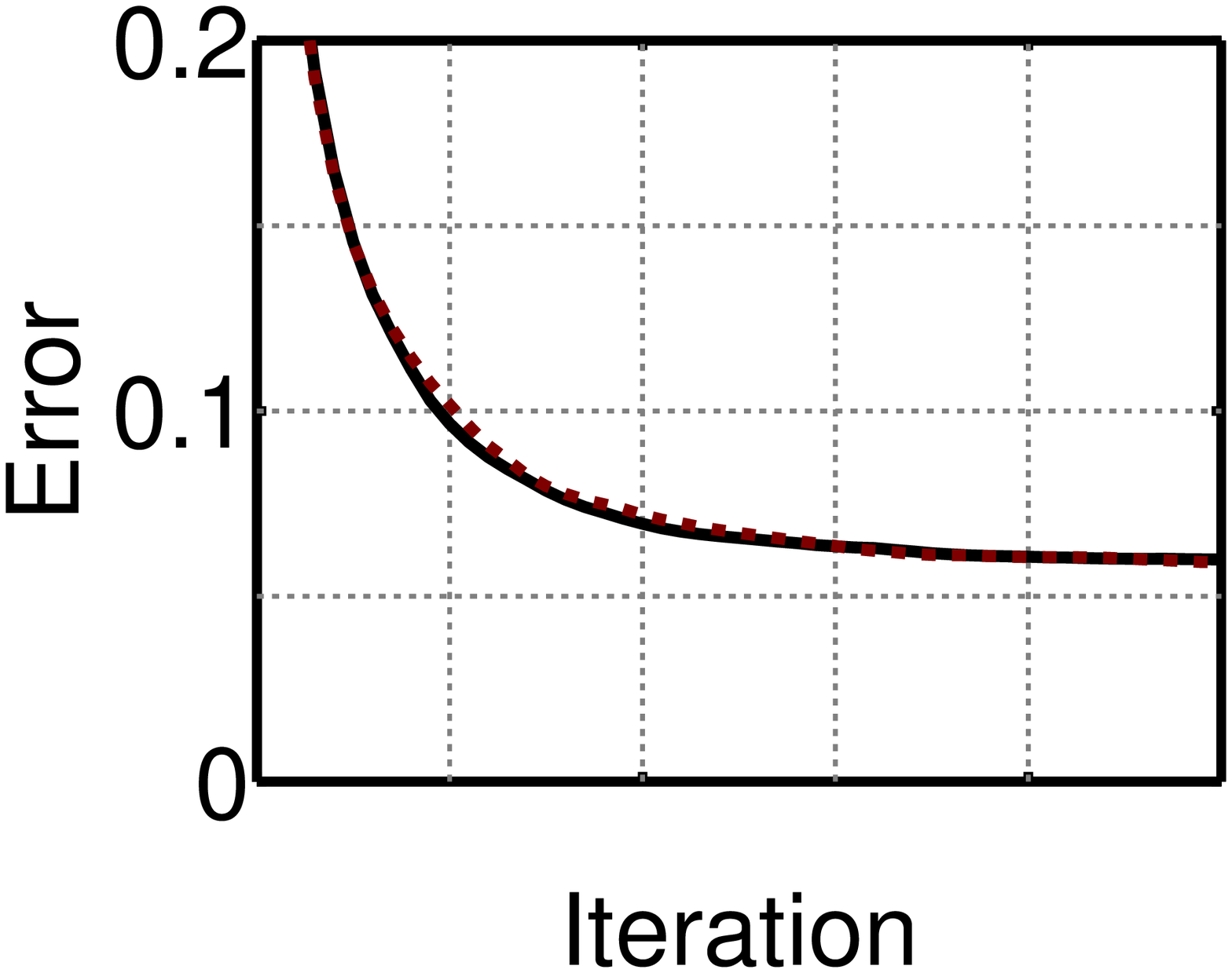} & \includegraphics[bb=112bp 249bp 544bp 604bp,clip,scale=0.17]{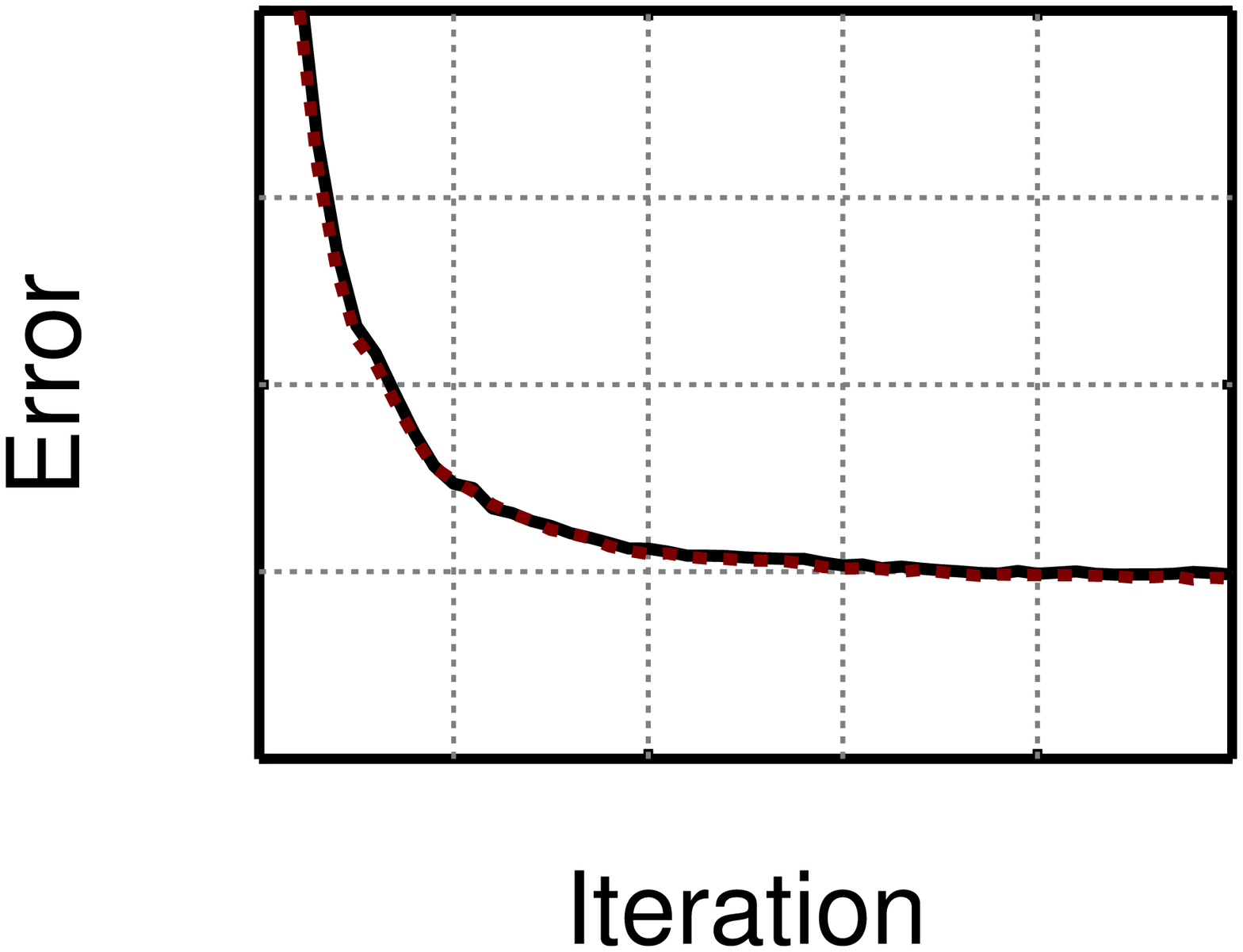} & \includegraphics[bb=112bp 249bp 544bp 604bp,clip,scale=0.17]{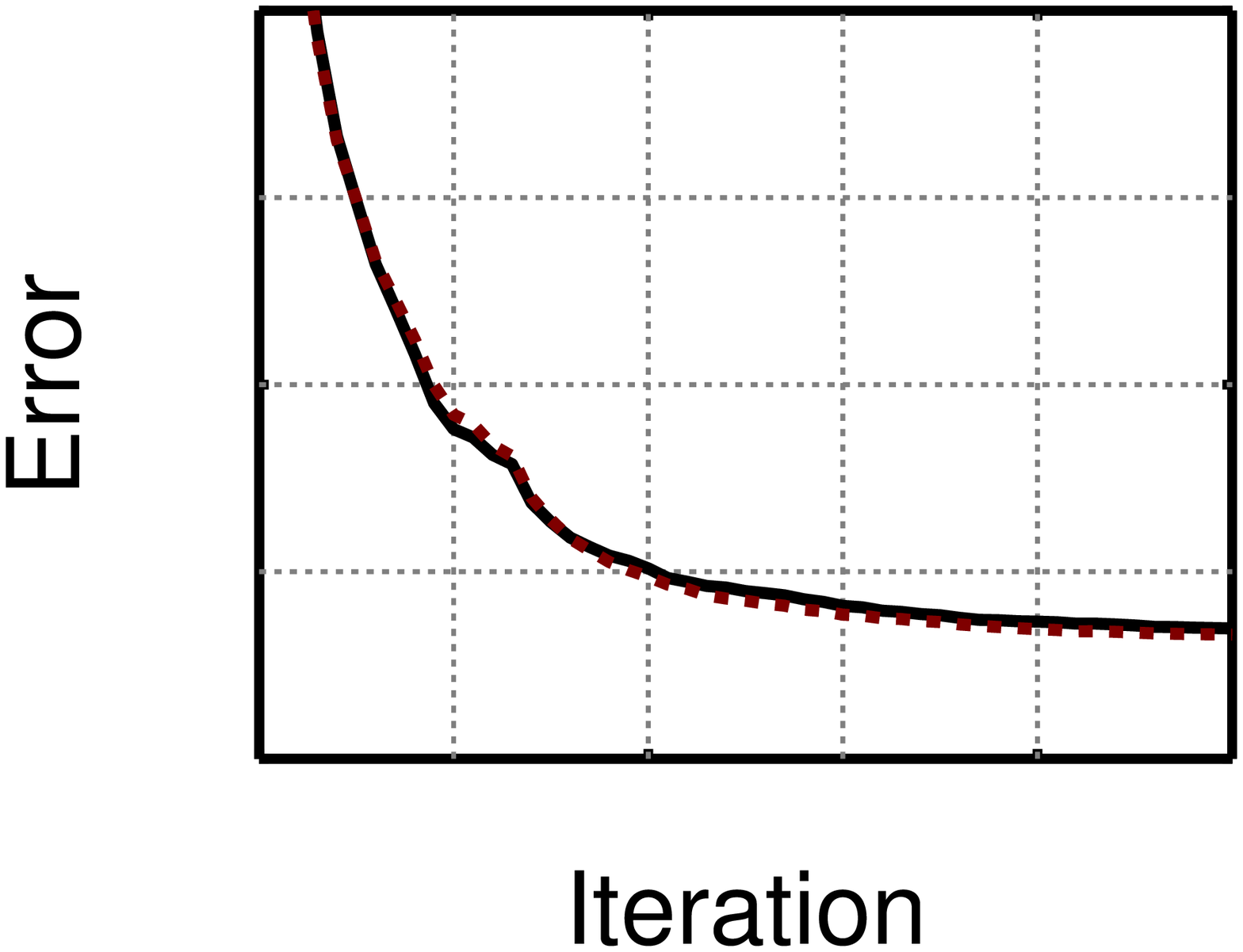} & ~\begin{sideways}~~~~~~Linear\end{sideways}\tabularnewline
\includegraphics[bb=22bp 249bp 544bp 604bp,clip,scale=0.17]{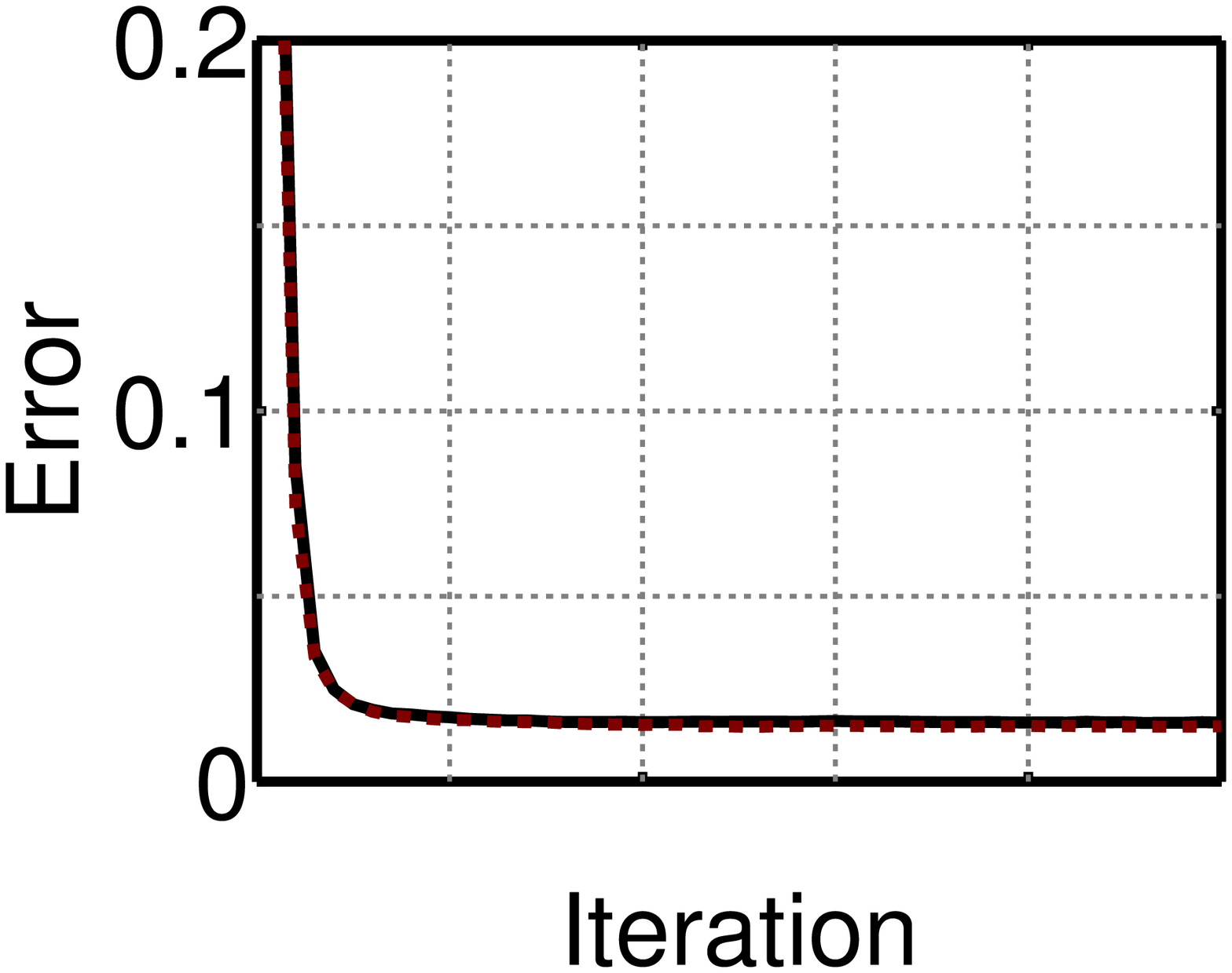} & \includegraphics[bb=112bp 249bp 544bp 604bp,clip,scale=0.17]{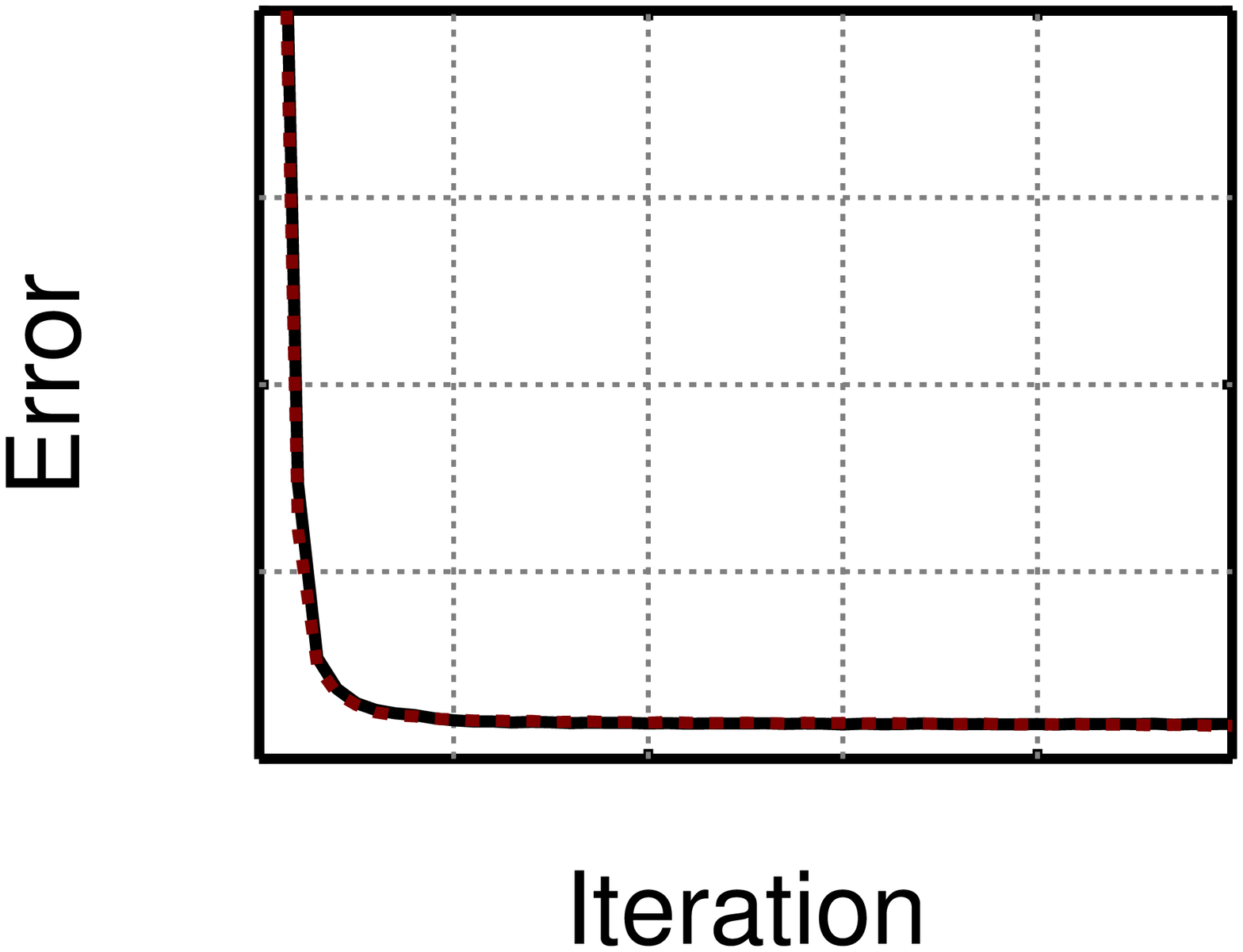} & \includegraphics[bb=112bp 249bp 544bp 604bp,clip,scale=0.17]{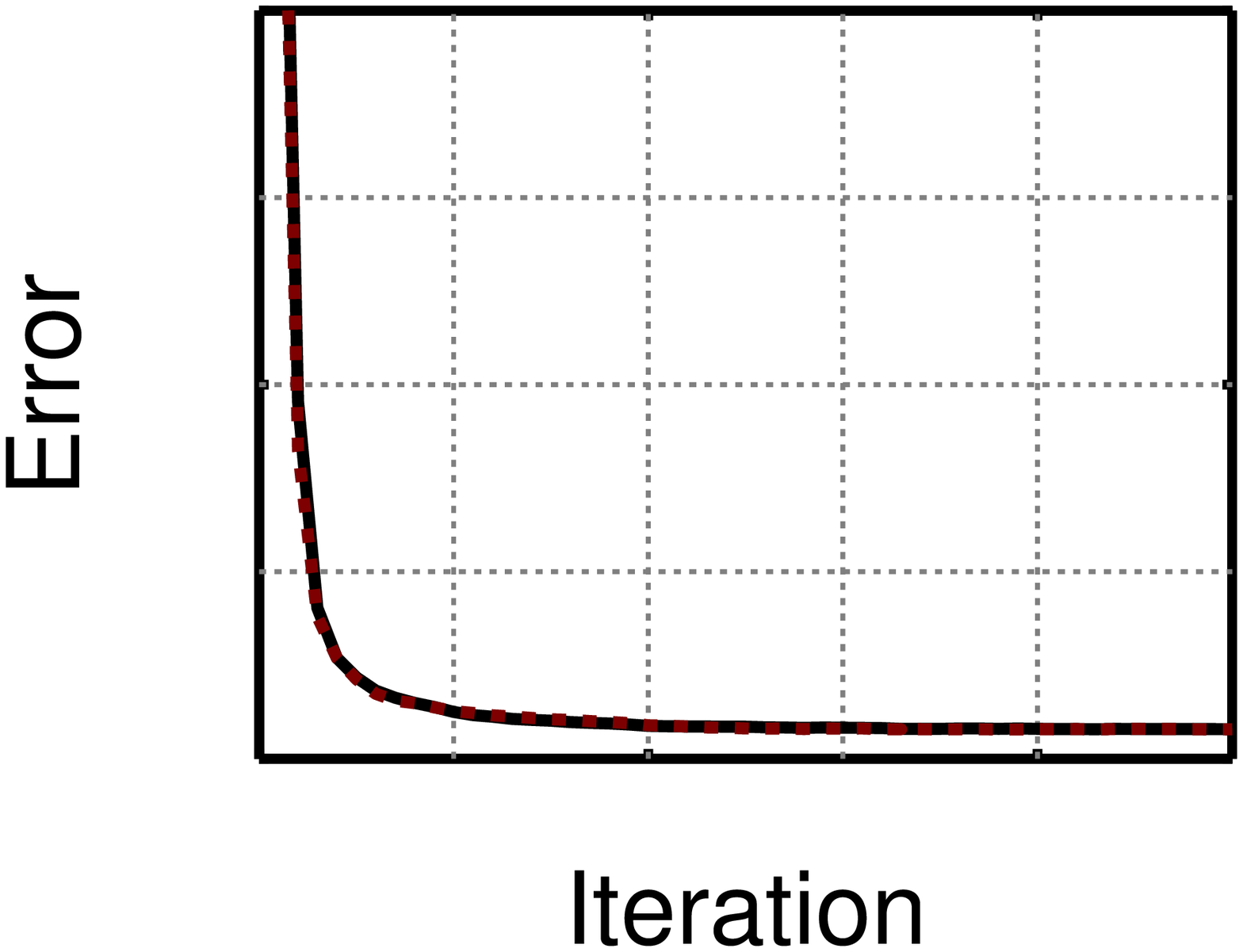} & ~\begin{sideways}~~~~Boosting\end{sideways}\tabularnewline
\includegraphics[bb=22bp 179bp 544bp 604bp,scale=0.17]{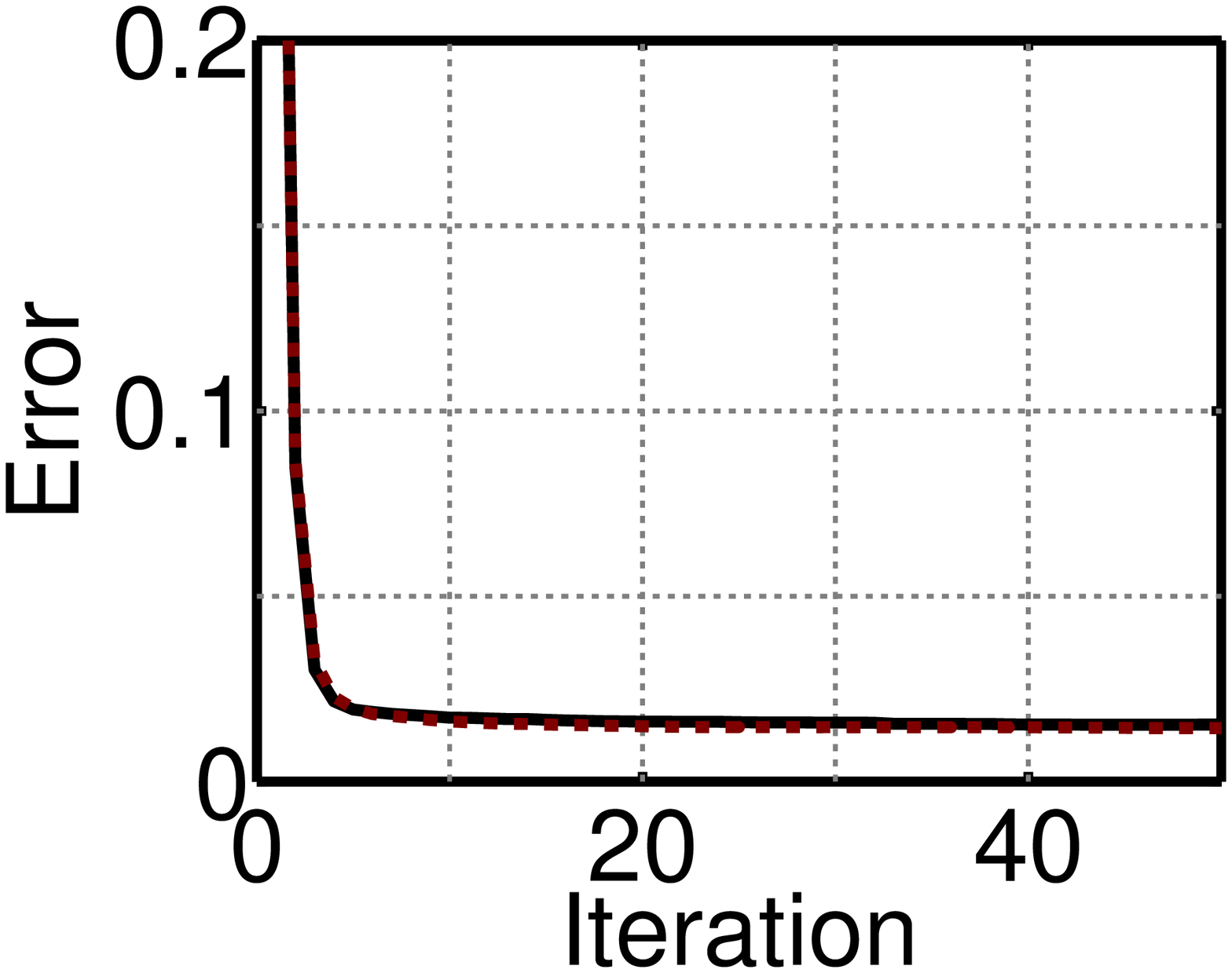} & \includegraphics[bb=112bp 179bp 544bp 604bp,clip,scale=0.17]{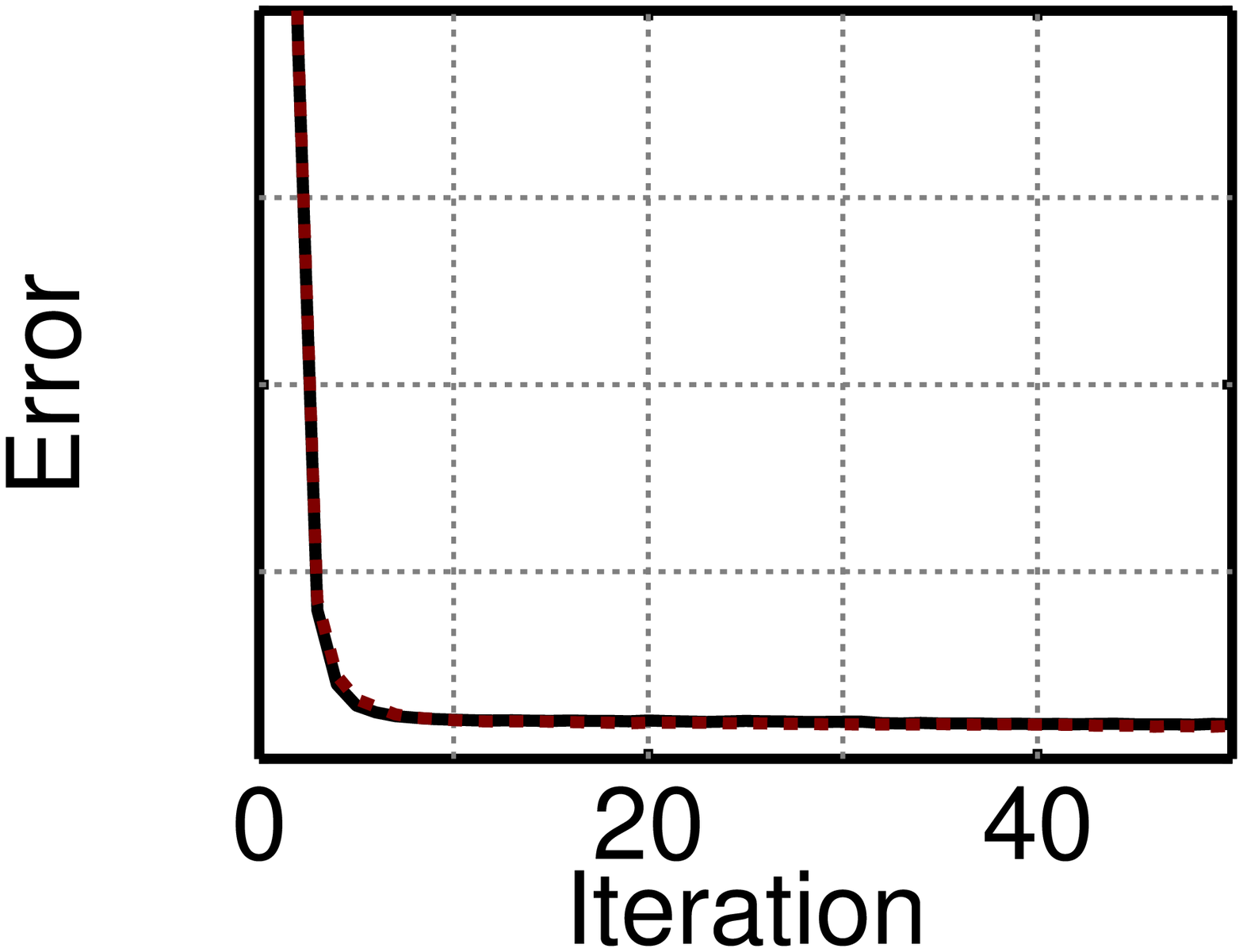} & \includegraphics[bb=112bp 179bp 544bp 604bp,clip,scale=0.17]{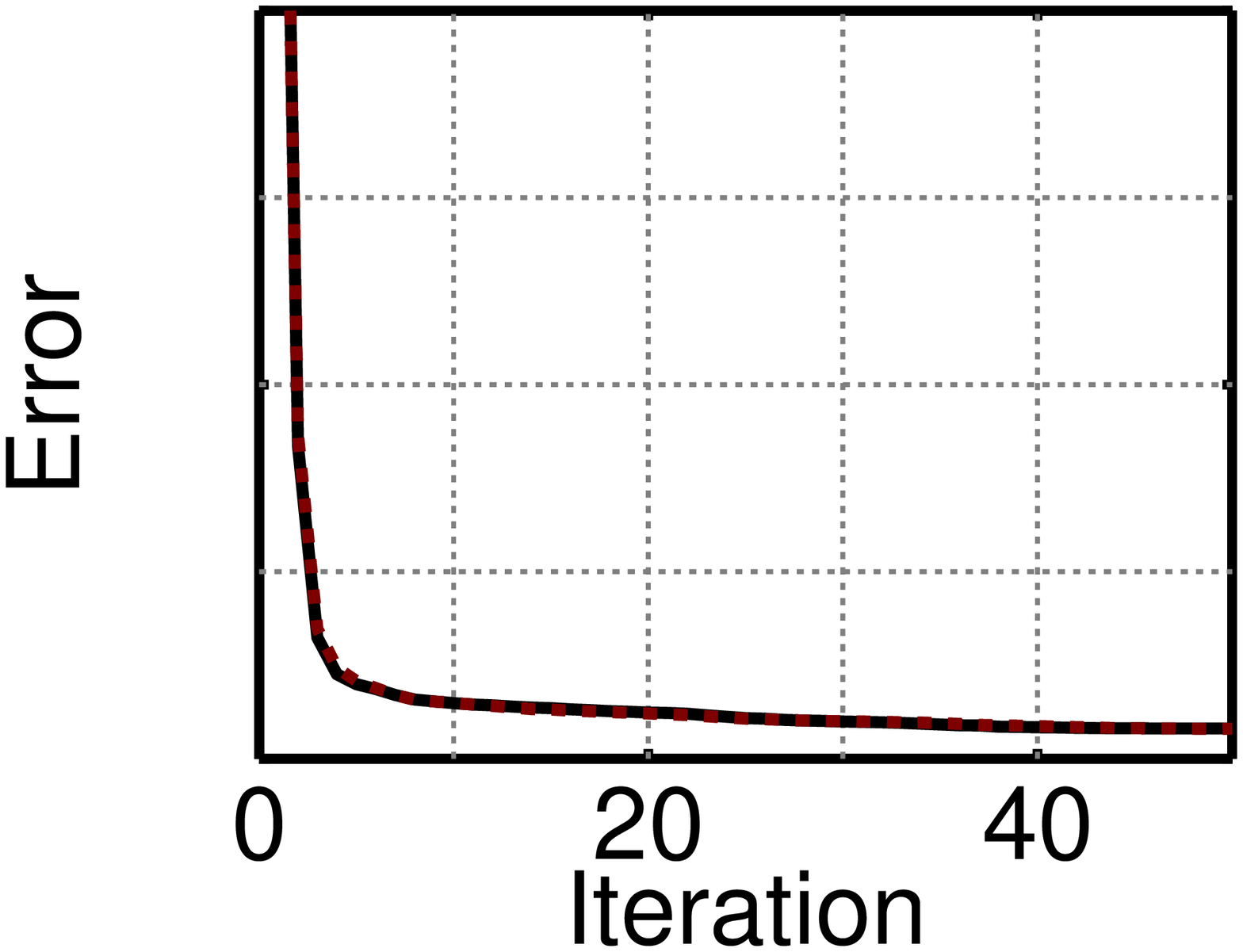} & ~\begin{sideways}~~~~~~~~~~~~~~MLP\end{sideways}\tabularnewline
\end{tabular}}%
\end{minipage}~%
\begin{minipage}[t]{0.5\columnwidth}%
\scalebox{.785}{\setlength\tabcolsep{0pt} %
\begin{tabular}{cccc}
\multicolumn{4}{c}{\textbf{\large{Horses}}}\tabularnewline
Linear & Boosting & \multicolumn{2}{c}{~~~~~~~~~~~MLP~~~$\mathcal{F}_{i}$ \textbackslash{}
$\mathcal{F}_{ij}$}\tabularnewline
\includegraphics[bb=22bp 249bp 544bp 604bp,clip,scale=0.17]{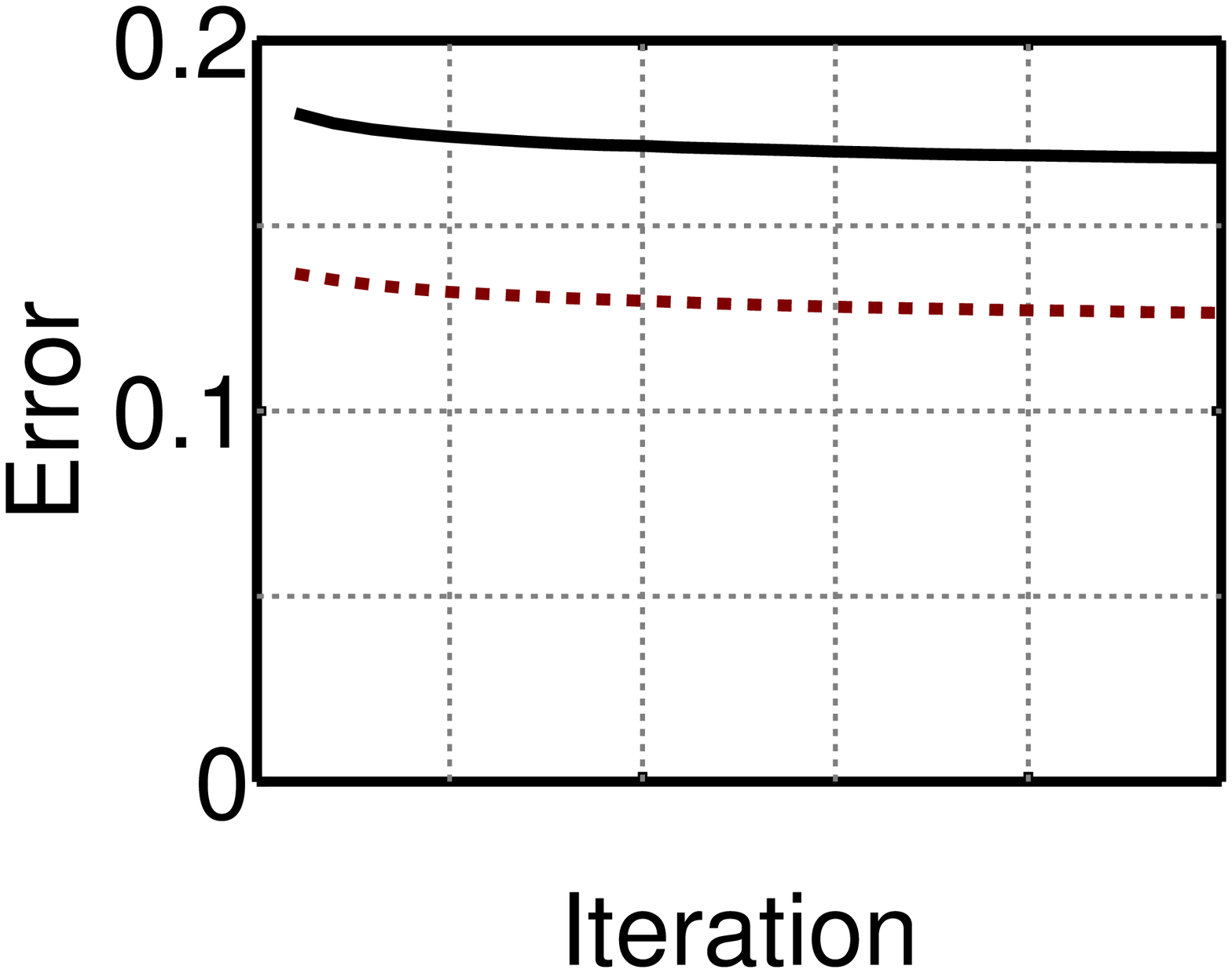} & \includegraphics[bb=112bp 249bp 544bp 604bp,clip,scale=0.17]{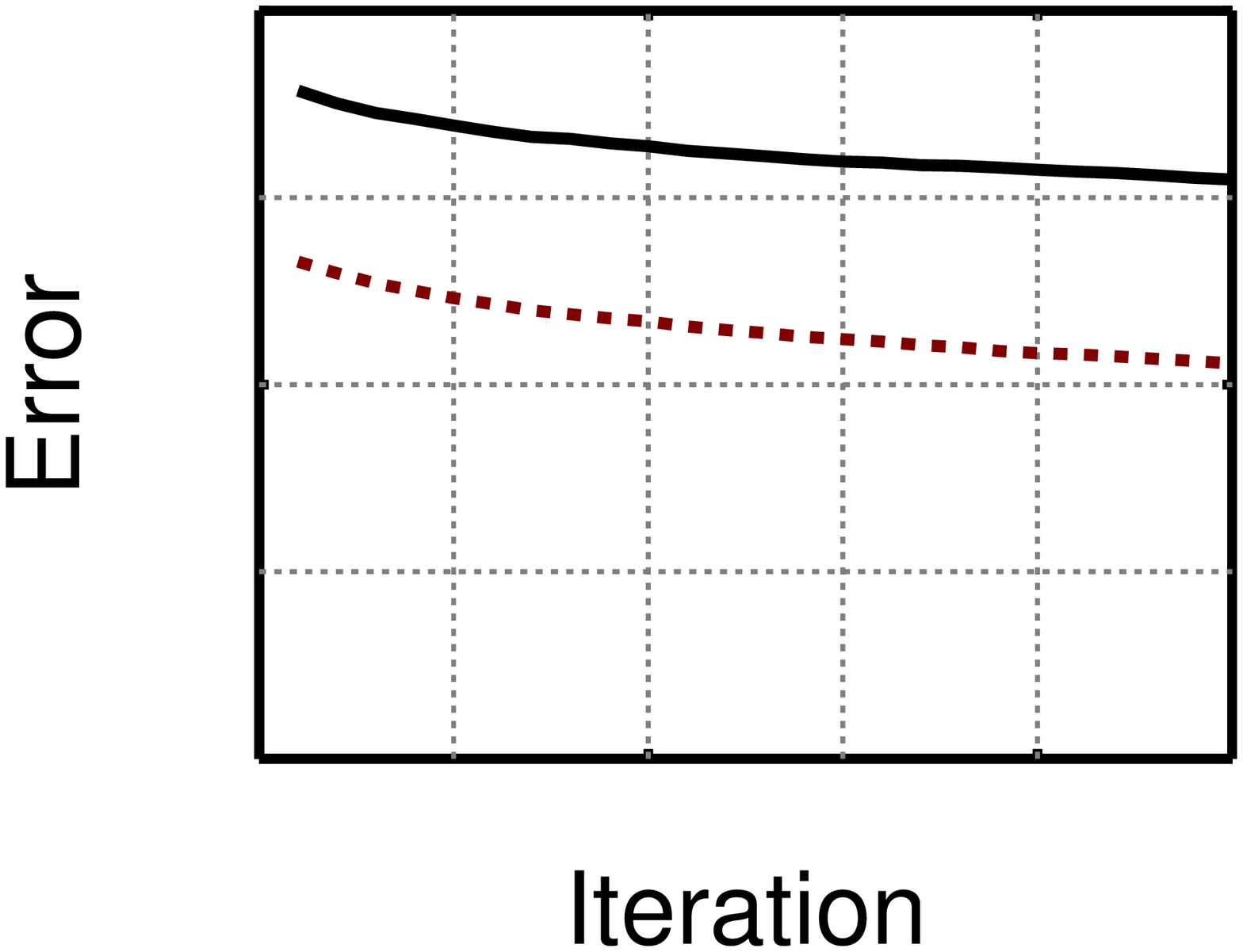} & \includegraphics[bb=112bp 249bp 544bp 604bp,clip,scale=0.17]{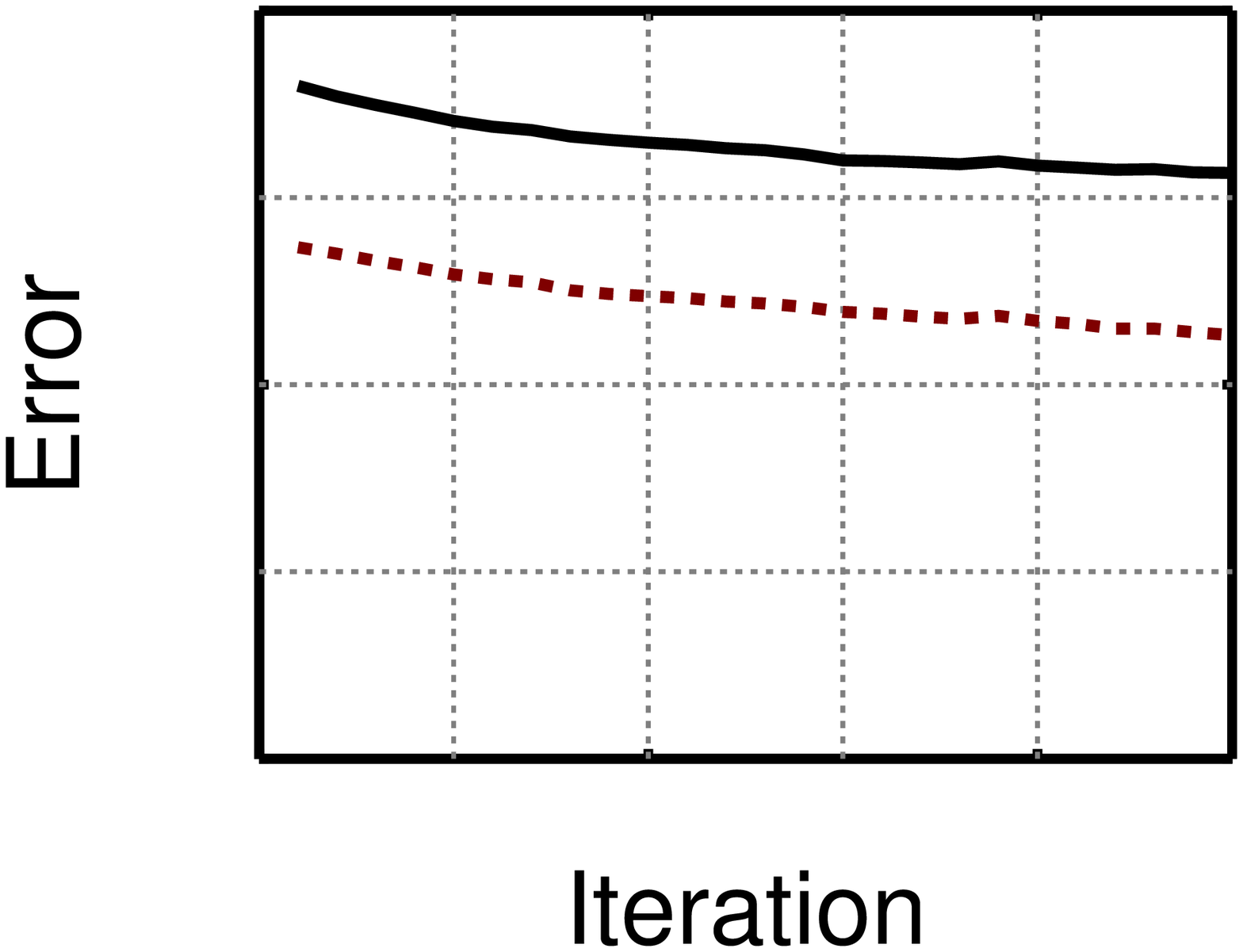} & \begin{sideways}~~~~~~Linear\end{sideways}\tabularnewline
\includegraphics[bb=22bp 249bp 544bp 604bp,clip,scale=0.17]{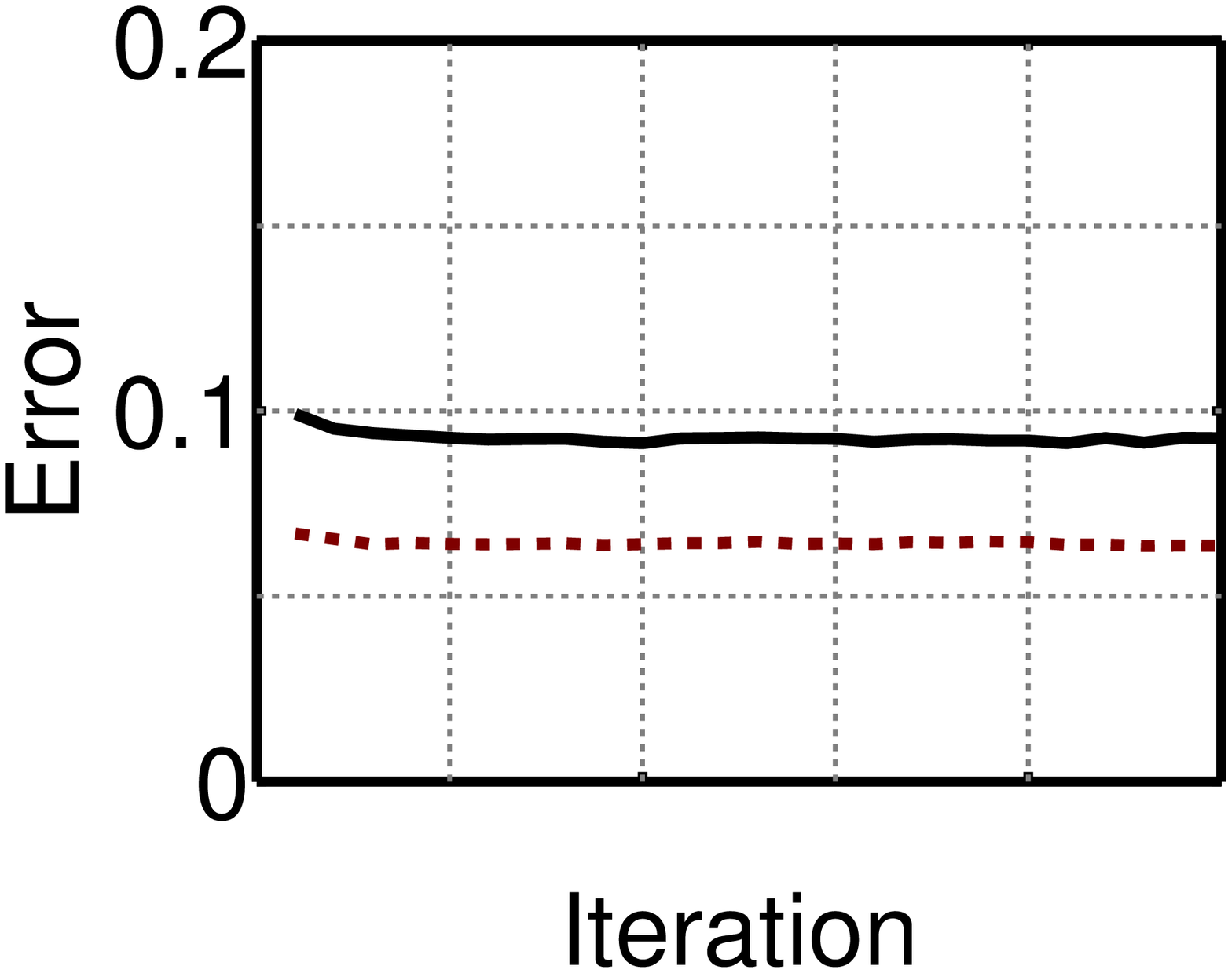} & \includegraphics[bb=112bp 249bp 544bp 604bp,clip,scale=0.17]{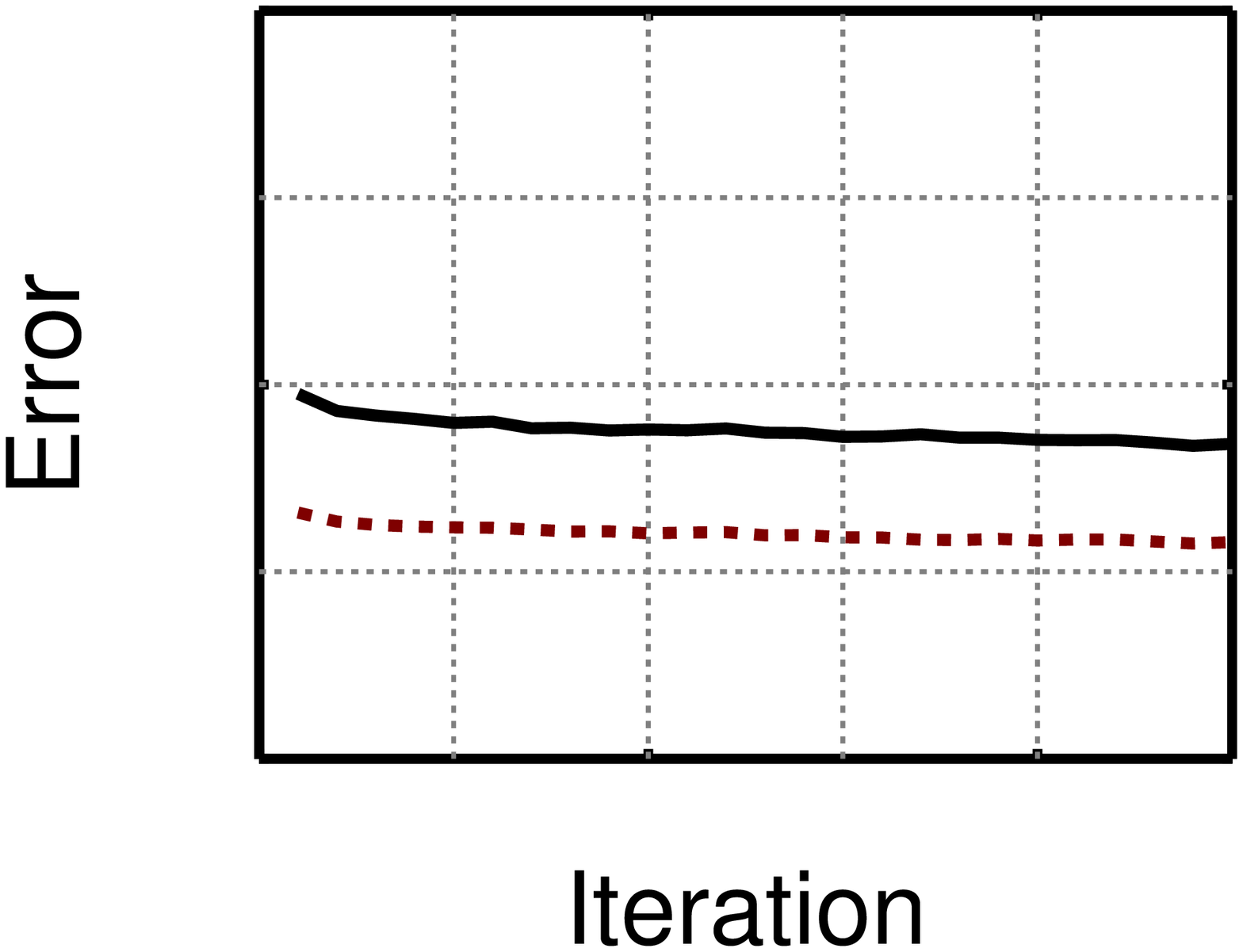} & \includegraphics[bb=112bp 249bp 544bp 604bp,clip,scale=0.17]{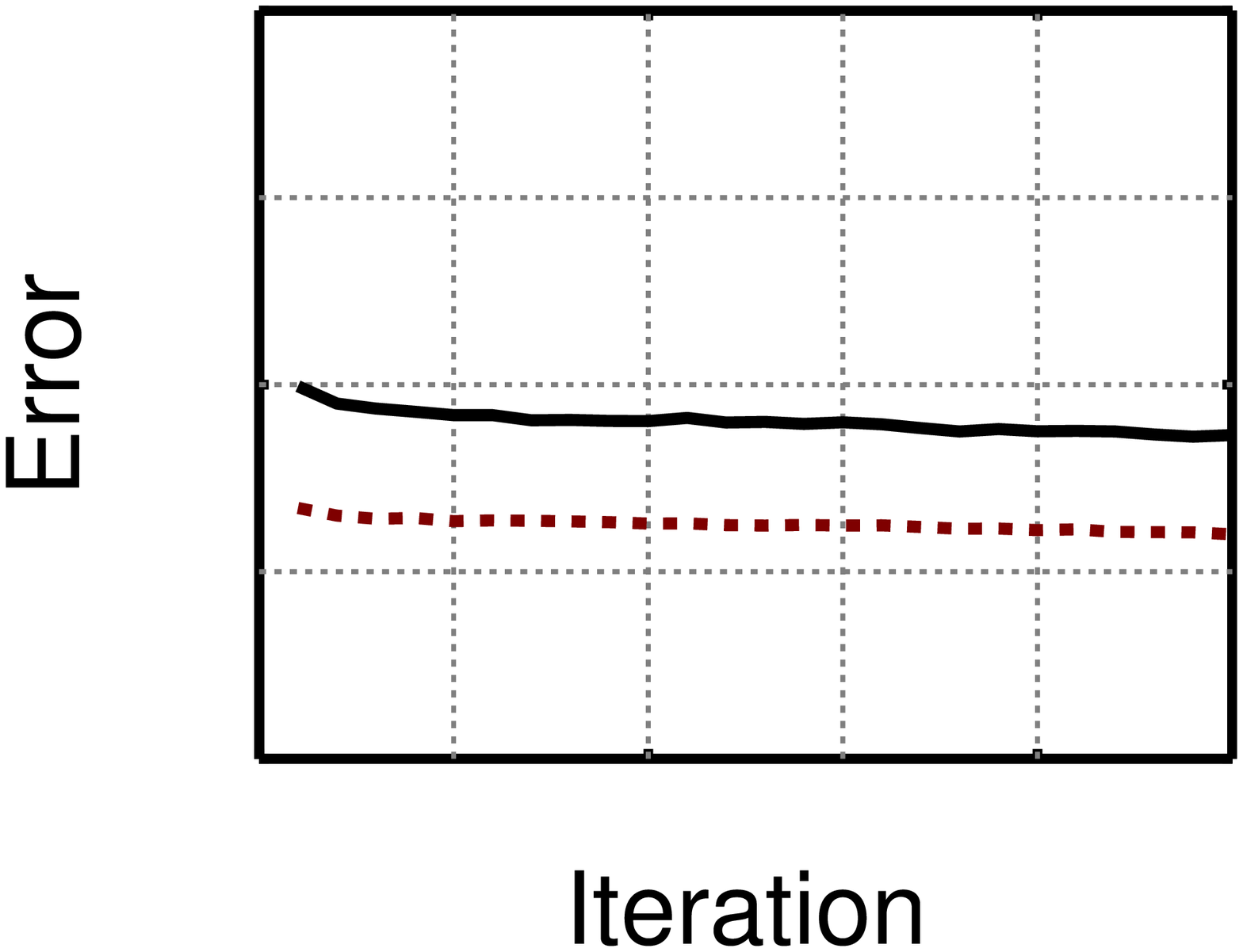} & \begin{sideways}~~~~Boosting\end{sideways}\tabularnewline
\includegraphics[bb=22bp 179bp 544bp 604bp,scale=0.17]{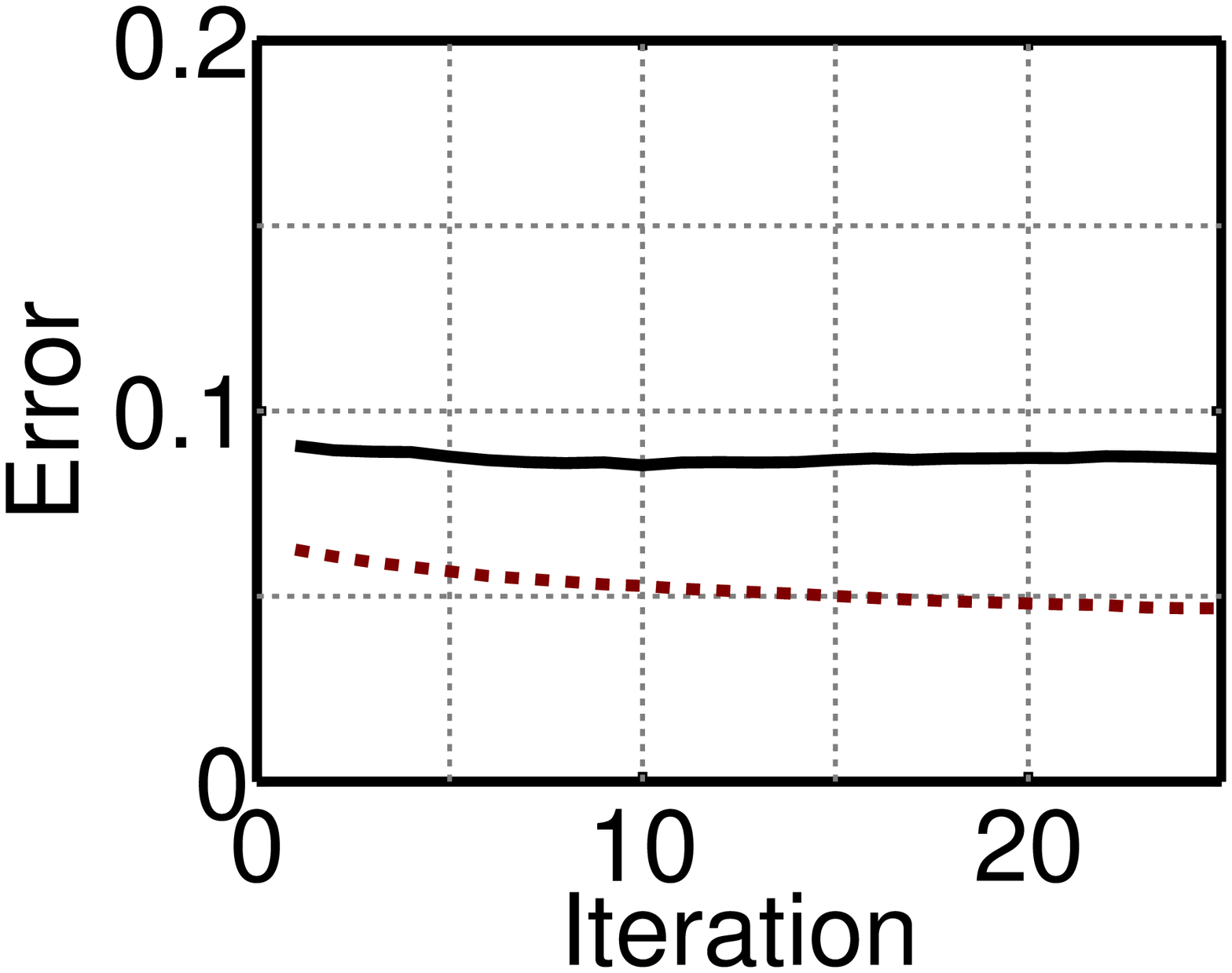} & \includegraphics[bb=112bp 179bp 544bp 604bp,clip,scale=0.17]{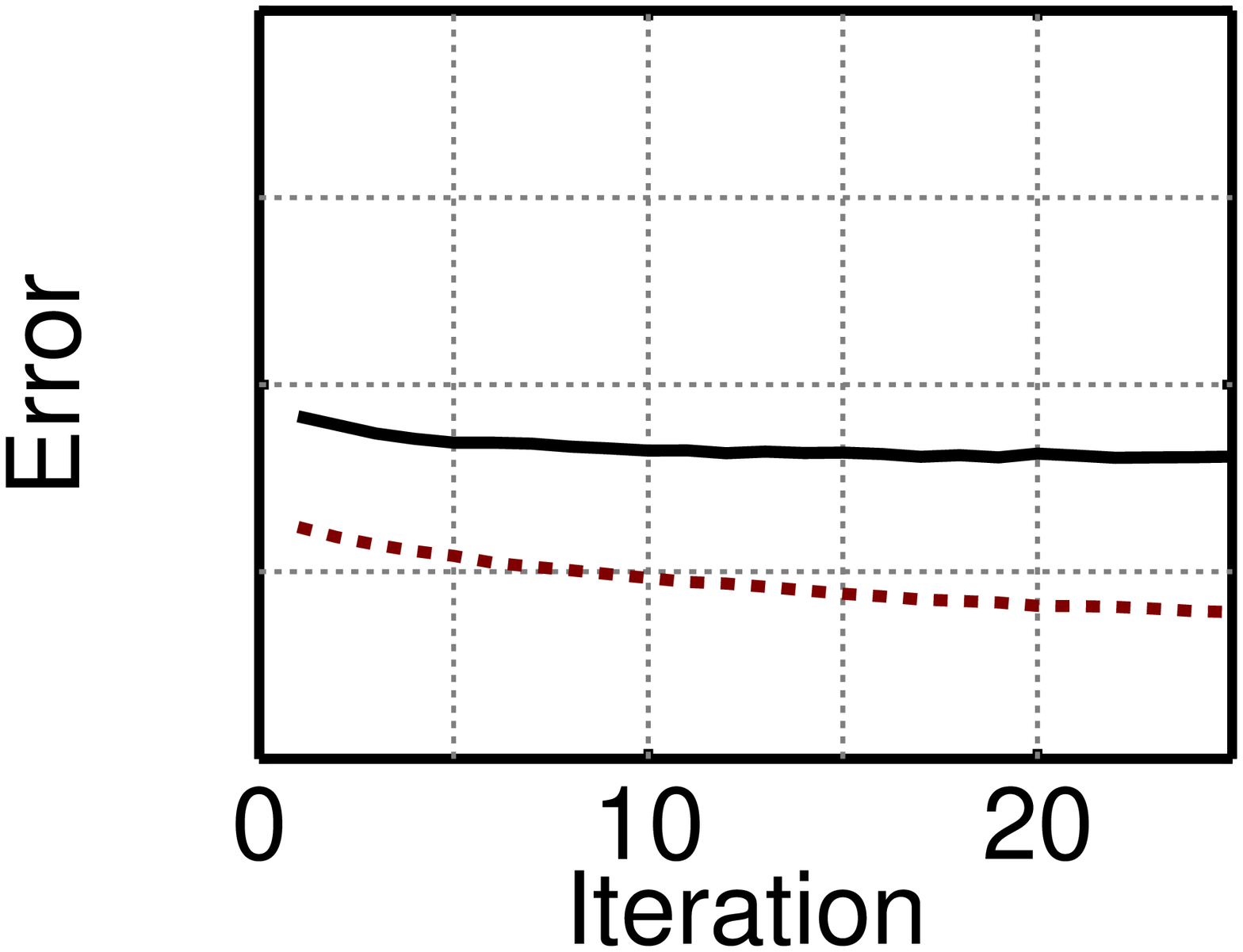} & \includegraphics[bb=112bp 179bp 544bp 604bp,clip,scale=0.17]{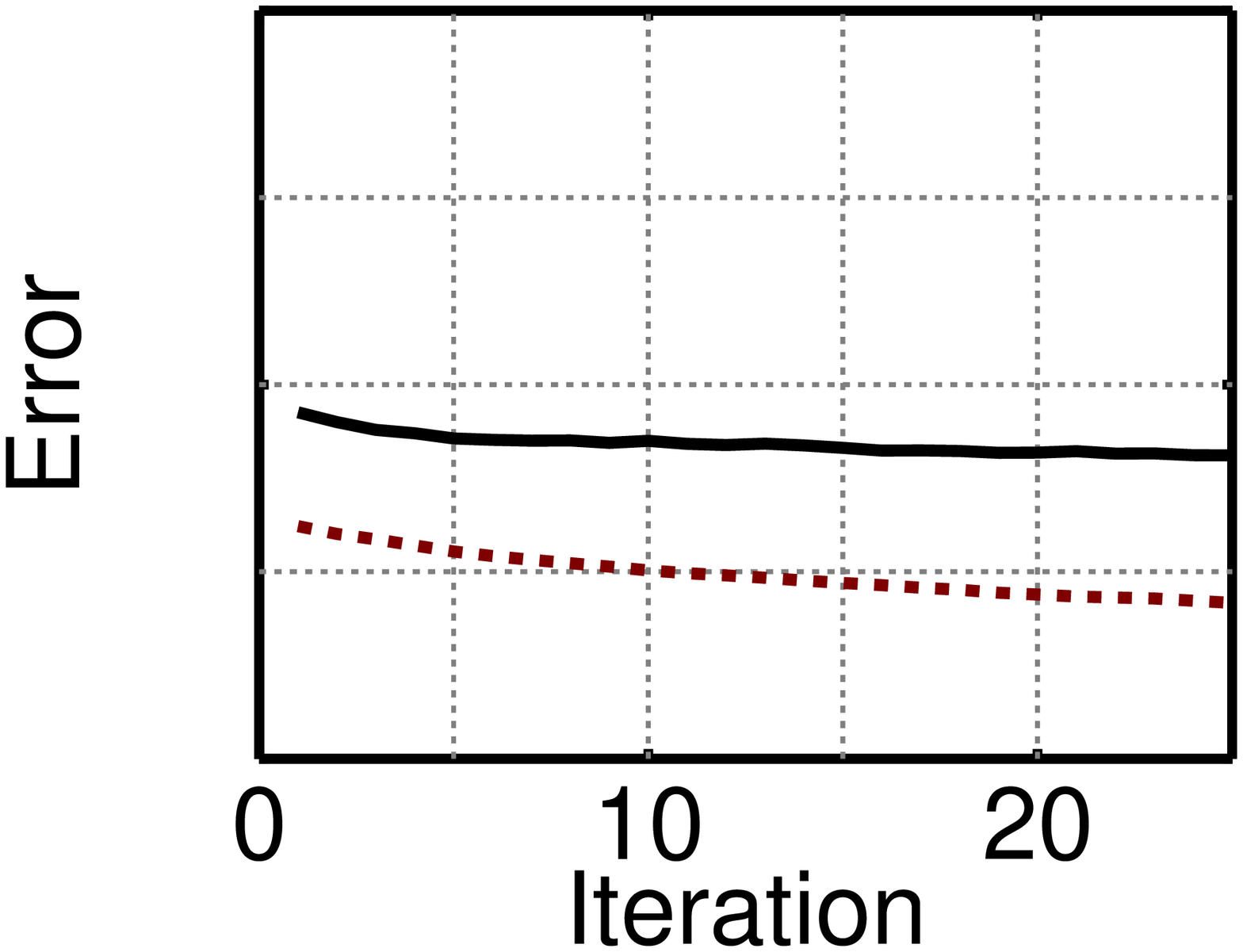} & \begin{sideways}~~~~~~~~~~~~~~MLP\end{sideways}\tabularnewline
\end{tabular}}%
\end{minipage}

\caption{Dashed/Solid lines show univariate train/test error rates as a function
of learning iterations for varying univariate (rows) and pairwise
(columns) classifiers.\label{fig:Learning-curves}}
\end{figure}
\begin{figure}
\begin{minipage}[t]{0.5\columnwidth}%
\scalebox{1}{\setlength\tabcolsep{0pt} %
\begin{tabular}{ccccc}
\multicolumn{5}{c}{\textbf{\large{Denoising}}}\tabularnewline
Input & True & Linear & Boosting & MLP\tabularnewline
\includegraphics[scale=0.4]{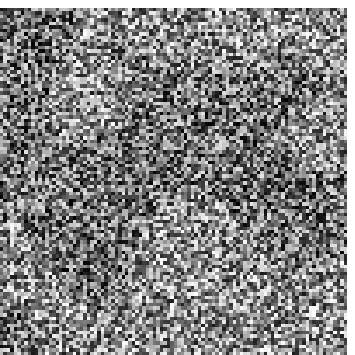} & \includegraphics[scale=0.4]{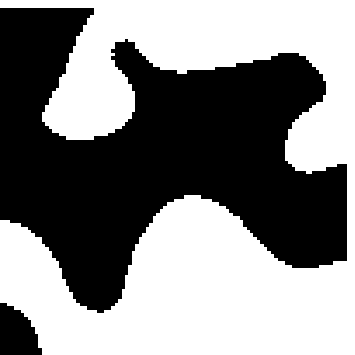} & \includegraphics[scale=0.4]{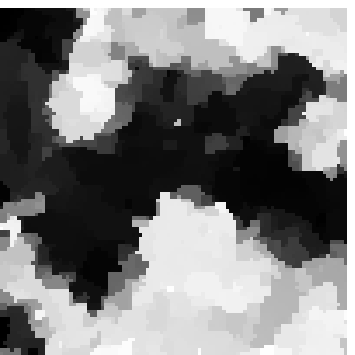} & \includegraphics[scale=0.4]{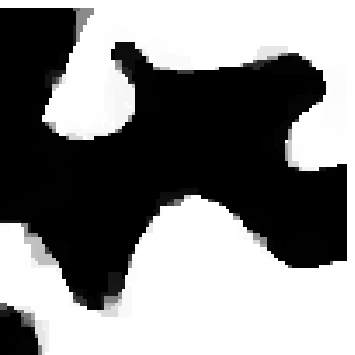} & \includegraphics[scale=0.4]{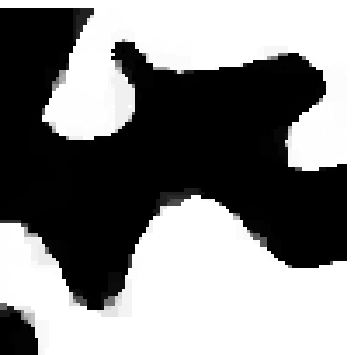}\tabularnewline
\includegraphics[scale=0.4]{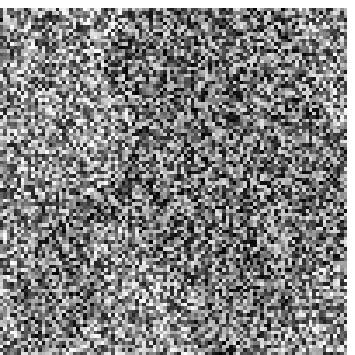} & \includegraphics[scale=0.4]{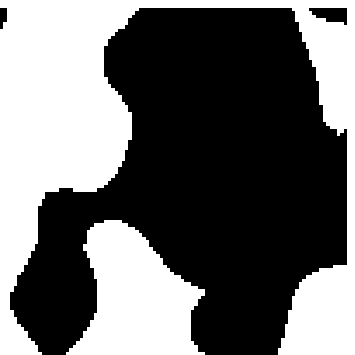} & \includegraphics[scale=0.4]{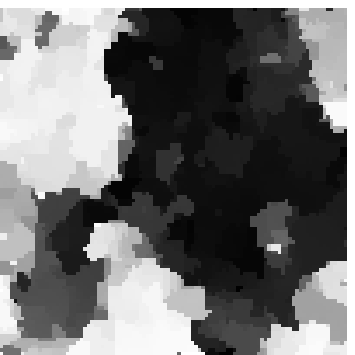} & \includegraphics[scale=0.4]{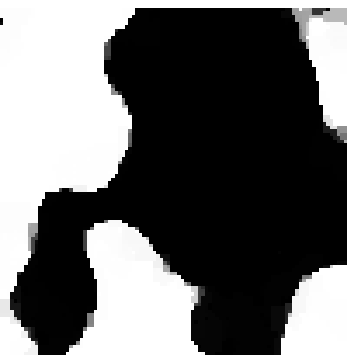} & \includegraphics[scale=0.4]{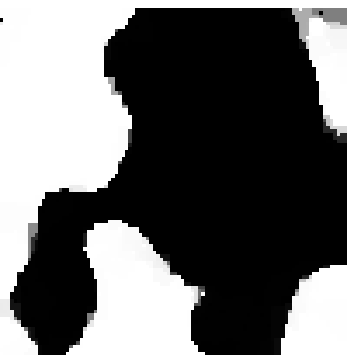}\tabularnewline
\end{tabular}}%
\end{minipage} %
\begin{minipage}[t]{0.5\columnwidth}%
\scalebox{1}{\setlength\tabcolsep{0pt} %
\begin{tabular}{cccccc}
\multicolumn{6}{c}{\textbf{\large{Horses}}}\tabularnewline
Input & True & Linear & Boosting & MLP & \tabularnewline
\includegraphics[scale=0.3]{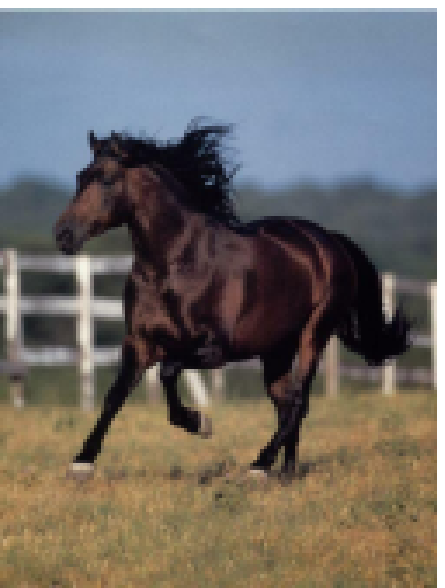} & \includegraphics[scale=0.3]{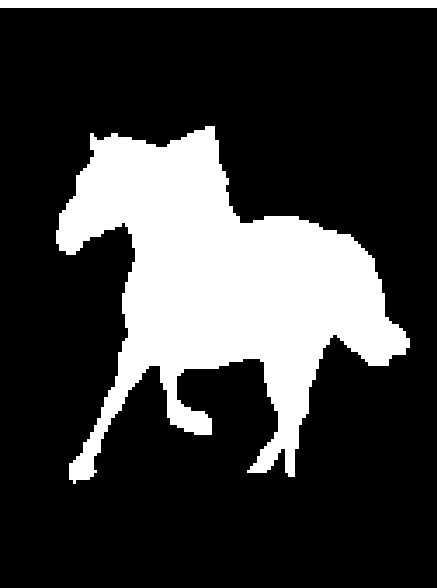} & \includegraphics[scale=0.3]{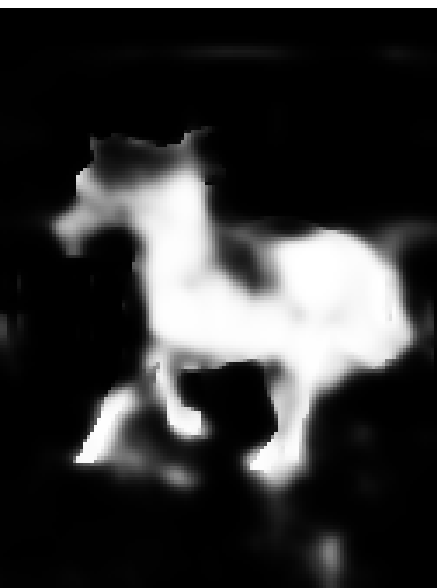} & \includegraphics[scale=0.3]{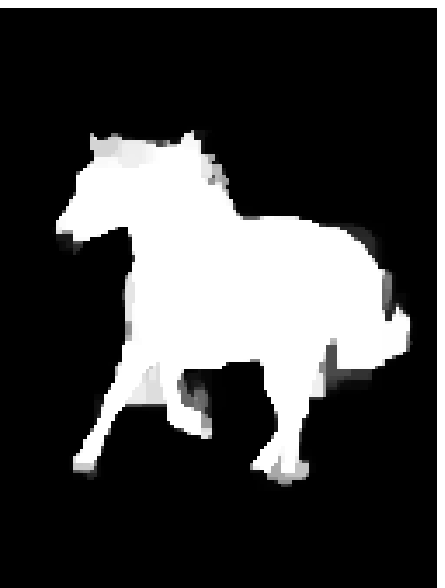} & \includegraphics[scale=0.3]{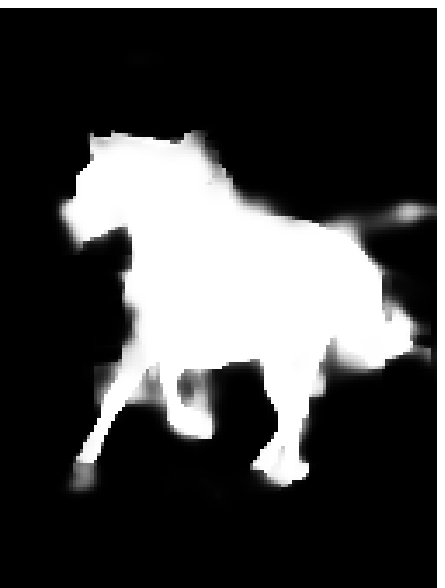} & \tabularnewline
\includegraphics[scale=0.3]{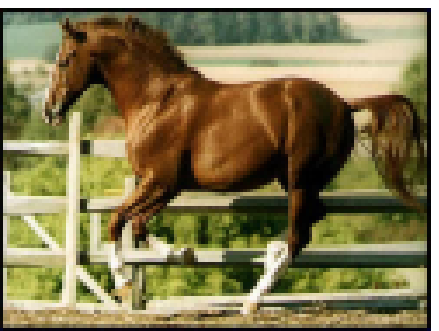} & \includegraphics[scale=0.3]{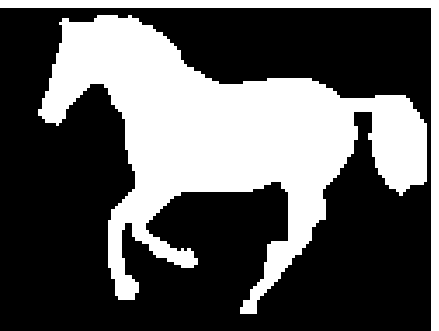} & \includegraphics[scale=0.3]{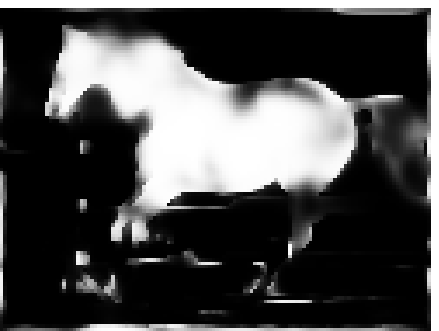} & \includegraphics[scale=0.3]{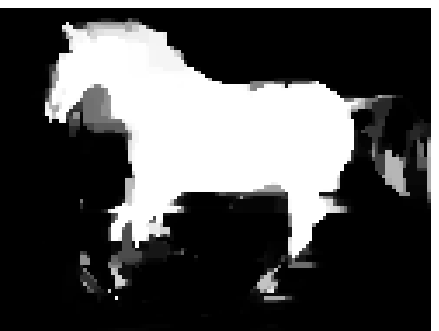} & \includegraphics[scale=0.3]{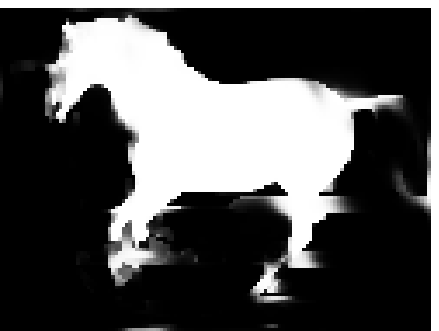} & \tabularnewline
\end{tabular}}%
\end{minipage}

\caption{Example Predictions on Test Images (More in Appendix)\label{fig:Example-Results}}
\end{figure}

These experiments consider three different function classes: linear,
boosted decision trees, and multi-layer perceptrons. To maximize Eq.
\ref{eq:logistic-regression-general} under linear functions $f(x,y)=(Wx)_{y}$,
we simply compute the gradient with respect to $W$ and use batch
L-BFGS. For a multi-layer perceptron, we fit the function $f(x,y)=(W\sigma(Ux))_{y}$
using stochastic gradient descent with momentum%
\footnote{At each time, the new step is a combination of .1 times the new gradient
plus .9 times the old step.%
} on mini-batches of size 1000, using a step size of .25 for univariate
classifiers and .05 for pairwise. Boosted decision trees use stochastic
gradient boosting \cite{StochasticGradientBoosting}: the gradient
of the logistic loss is computed for each exemplar, and a regression
tree is induced to fit this (one tree for each class). To control
overfitting, each leaf node must contain at least 5\% of the data.
Then, an optimization adjusts the values of leaf nodes to optimize
the logistic loss. Finally, the tree values are multiplied by .25
and added to the ensemble. For reference, we also consider the ``zero''
classifier, and a ``constant'' classifier that ignores the input--
equivalent to a linear classifier with a single constant feature.

All examples use $\epsilon=0.1$. Each learning iteration consists
of updating $f_{i}$, performing 25 iterations of message passing,
updating $f_{ij}$ , and then performing another 25 iterations of
message-passing.

The first dataset is a synthetic binary denoising dataset, intended
for the purpose of visualization. To create an example, an image is
generated with each pixel random in $[0,1]$. To generate $y$, this
image is convolved with a Gaussian with standard deviation 10 and
rounded to $\{0,1\}$. Next, if $y_{i}=0$, $\phi_{i}^{k}$ is sampled
uniformly from $[0,.9]$, while if $y_{i}^{k}=1$, $\phi_{i}^{k}$
is sampled from $[.1,1]$. Finally, for a pair $(i,j)$, if $y_{i}^{k}=y_{j}^{k}$,
then $\phi_{ij}^{k}$ is sampled from $[0,.8]$ while if $y_{i}^{k}\not=y_{j}^{k}$
$\phi_{ij}$ is sampled from $[.2,1]$. A constant feature is also
added to both $\phi_{i}^{k}$ and $\phi_{ij}^{k}$.

There are 16 $100\times100$ images each training and testing. Test
errors for each classifier combination are in Table \ref{tab:Test-Error-Rates},
learning curves are in Fig. \ref{fig:Learning-curves}, and example
results in Fig. \ref{fig:Example-Results}. The nonlinear classifiers
result in both lower asymptotic training and testing errors and faster
convergence rates. Boosting converges particularly quickly. Finally,
because there is only a single input feature for univariate and pairwise
terms, the resulting functions are plotted in Fig. \ref{fig:denoising_energies}.

Second, as a more realistic example, we use the Weizmann horses dataset.
We use 42 univariate features $f_{i}^{k}$ consisting of a constant
(1) the RBG values of the pixel (3), the vertical and horizontal position
(2) and a histogram of gradients \cite{HistogramsOfOrientedGradients}
(36). There are three edge features, consisting of a constant, the
$l_{2}$ distance\textbf{ }of the RBG vectors for the two pixels,
and the output of a Sobel edge filter. Results are show in Table \ref{tab:Test-Error-Rates}
and Figures \ref{fig:Learning-curves} and \ref{fig:Example-Results}.
Again, we see benefits in using nonlinear classifiers, both in convergence
rate and asymptotic error.

\section{Discussion}

This paper observes that in the structured learning setting, the optimization
with respect to energy can be formulated as a logistic regression
problem for each factor, ``biased'' by the current messages. Thus,
it is possible to use any function class where an ``oracle'' exists
to optimize a logistic loss. Besides the possibility of using more
general classes of energies, another advantage of the proposed method
is the ``software engineering'' benefit of having the algorithm
for fitting the energy modularized from the rest of the learning procedure.
The ability to easily define new energy functions for individual problems
could have practical impact.

Future work could consider convergence rates of the overall learning
optimization, systematically investigate the choice of $\epsilon$,
or consider more general entropy approximations, such as the Bethe
approximation used with loopy belief propagation.

In related work, Hazan and Urtasun \cite{EfficientLearningofStructuredPredictorsinGeneralGraphicalModels}
use a linear energy, and alternate between updating all inference
variables and a gradient descent update to parameters, using an entropy-smoothed
inference objective. Meshi et al. \cite{LearningEfficientlyWithApproximateInferenceViaDualLosses}
also use a linear energy, with a stochastic algorithm updating inference
variables and taking a stochastic gradient step on parameters for
one exemplar at a time, with a pure LP-relaxation of inference. The
proposed method iterates between updating all inference variables
and performing a full optimization of the energy. This is a ``batch''
algorithm in the sense of making repeated passes over the data, and
so is expected to be slower than an online method for large datasets.
In practice, however, inference is easily parallelized over the data,
and the majority of computational time is spent in the logistic regression
subproblems. A stochastic solver can easily be used for these, as
was done for MLPs above, giving a partially stochastic learning method.

Another related work is Gradient Tree Boosting \cite{GradientTreeBoosting}
in which to train a CRF, the functional gradient of the conditional
likelihood is computed, and a regression tree is induced. This is
iterated to produce an ensemble. The main limitation is the assumption
that inference can be solved exactly. It appears possible to extend
this to inexact inference, where the tree is induced to improve a
dual bound, but this has not been done so far. Experimentally, however,
simply inducing a tree on the loss gradient leads to much slower learning
if the leaf nodes are not modified to optimize the logistic loss.
Thus, it is likely that such a strategy would still benefit from using
the logistic regression reformulation.

\clearpage{}

\newpage{}

{\small\bibliographystyle{plain}
\bibliography{bibliography_pamipaper,bibliography}
}

\clearpage{}

\newpage{}

\section*{Appendix for paper: Structured Learning via Logistic Regression}

\textcolor{cyan}{}
\begin{table}[b]
\begin{centering}
\setlength\tabcolsep{3pt} %
\begin{tabular}{|c|ccccc|}
\multicolumn{6}{c}{\textbf{Denoising}}\tabularnewline
\hline 
$\mathcal{F}_{i}$ \textbackslash{} $\mathcal{F}_{ij}$  & Zero & Const. & Linear & Boost. & MLP\tabularnewline
\hline 
Zero & .490 & .490 & .490 & .441 & .490\tabularnewline
Const. & .490 & .490 & .490 & .440 & .490\tabularnewline
Linear & .443 & .077 & .059 & .048 & .033\tabularnewline
Boost. & .429 & .032 & .014 & .008 & .008\tabularnewline
MLP & .435 & .031 & .014 & .008 & .008\tabularnewline
\hline 
\end{tabular}~~~%
\begin{tabular}{|c|ccccc|}
\multicolumn{6}{c}{\textbf{Horses}}\tabularnewline
\hline 
$\mathcal{F}_{i}$ \textbackslash{} $\mathcal{F}_{ij}$  & Zero & Const. & Linear & Boost. & MLP\tabularnewline
\hline 
Zero & .211 & .211 & .212 & .209 & .210\tabularnewline
Const. & .211 & .211 & .212 & .209 & .210\tabularnewline
Linear & .141 & .139 & .126 & .105 & .113\tabularnewline
Boost. & .074 & .068 & .063 & .057 & .060\tabularnewline
MLP & .054 & .051 & .046 & .039 & .041\tabularnewline
\hline 
\end{tabular}
\par\end{centering}

\caption{Univariate Training Error Rates}
\end{table}

\begin{thm}
The difference of $l$ and $l_{1}$ is bounded by

\[
l_{1}(x,y,F)\leq l(x,y,F)\leq l_{1}(x,y,F)+\epsilon H_{\max},\,\,\,\, H_{\max}=\sum_{\alpha}\log|y_{\alpha}|.
\]
\end{thm}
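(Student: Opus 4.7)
The plan is to compare the two losses by looking directly at the only term in which they differ, namely the smoothed versus unsmoothed inner maximization over $\mathcal{M}$. After subtracting the common $-F(x^k,y^k)$ from both sides, the claim reduces to the two-sided estimate
\[
\max_{\mu\in\mathcal{M}} \theta_F^k\!\cdot\!\mu \;\leq\; \max_{\mu\in\mathcal{M}}\!\Bigl(\theta_F^k\!\cdot\!\mu + \epsilon\sum_\alpha H(\mu_\alpha)\Bigr) \;\leq\; \max_{\mu\in\mathcal{M}} \theta_F^k\!\cdot\!\mu + \epsilon H_{\max},
\]
so the entire argument is about how much the entropy penalty can shift the value of the LP.

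For the left-hand inequality I would simply observe that for any $\mu\in\mathcal{M}$ the components $\mu_\alpha$ are probability distributions over the configurations of $y_\alpha$, hence $H(\mu_\alpha)\geq 0$. Adding a non-negative quantity to the objective cannot decrease its maximum, giving $l_1 \leq l$ at once.

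For the right-hand inequality I would use the elementary fact that the maximum of a sum is bounded by the sum of the maxima, taken in two stages. First, for any fixed $\mu\in\mathcal{M}$,
\[
\theta_F^k\!\cdot\!\mu + \epsilon\sum_\alpha H(\mu_\alpha) \;\leq\; \theta_F^k\!\cdot\!\mu + \epsilon\sum_\alpha \max_{\mu_\alpha} H(\mu_\alpha),
\]
where the inner maximum is taken over normalized non-negative vectors on the configurations of $y_\alpha$. It is a standard fact that the entropy of a distribution on a finite set of size $|y_\alpha|$ is maximized by the uniform distribution and equals $\log|y_\alpha|$, so $\max_{\mu_\alpha} H(\mu_\alpha)=\log|y_\alpha|$. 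Summing over $\alpha$ produces the constant $H_{\max}=\sum_\alpha \log|y_\alpha|$, which can then be pulled out of the maximization over $\mu\in\mathcal{M}$, yielding the desired upper bound.

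There is essentially no obstacle: both bounds follow from non-negativity and the uniform-distribution characterization of maximum entropy, and the local polytope constraints $\mu_\alpha(y_\alpha)\geq 0$, $\sum_{y_\alpha}\mu_\alpha(y_\alpha)=1$ (which are inherited by $\mathcal{M}$) are exactly what is needed to invoke that characterization. The only mildly subtle point to flag is that the marginalization-consistency constraints defining $\mathcal{M}$ generally prevent every $\mu_\alpha$ from simultaneously being uniform, so the upper bound is typically not tight; however, tightness is not claimed, and the sub-additivity step already absorbs this slack.
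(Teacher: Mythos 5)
Your proof is correct and uses essentially the same ingredients as the paper's: non-negativity of entropy for the lower bound and the maximum-entropy value $\log|y_\alpha|$ for the upper bound. The only cosmetic difference is that you bound the entropy term pointwise before maximizing over $\mu\in\mathcal{M}$, whereas the paper evaluates the difference at the two maximizers $\mu'$ and $\mu^*$ and then applies the same two facts; the arguments are interchangeable.
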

\begin{proof}
Defining $\mu^{*}=\arg\max_{\mu\in\mathcal{M}}\theta\cdot\mu$ and
$\mu'=\arg\max_{\mu\in\mathcal{M}}\theta\cdot\mu+\epsilon\sum_{\alpha}H(\mu_{\alpha}),$
one can write
\begin{eqnarray*}
l(x,y;F)-l_{1}(x,y;F) & = & -F(x,y)+\max_{\mu\in\mathcal{M}}\left(\theta\cdot\mu+\sum_{\alpha}\epsilon H(\mu_{\alpha})\right)+F(x,y)-\max_{\mu\in\mathcal{M}}\theta\cdot\mu\\
 & = & \max_{\mu\in\mathcal{M}}\left(\theta\cdot\mu+\sum_{\alpha}\epsilon H(\mu_{\alpha})\right)-\max_{\mu\in\mathcal{M}}\theta\cdot\mu\\
 & = & \theta\cdot\mu'-\theta\cdot\mu^{*}+\sum_{\alpha}\epsilon H(\mu_{\alpha}')\\
 & \leq & \epsilon\sum_{\alpha}\log|y_{\alpha}|.
\end{eqnarray*}
The last line follows from the fact that $\theta\cdot\mu^{*}\geq\theta\cdot\mu'$,
and that $H(\mu_{\alpha}')\leq\log|y_{\alpha}|.$
\end{proof}
\begin{figure}
\begin{centering}
\scalebox{1}{\setlength\tabcolsep{0pt} %
\begin{tabular}{cccccc}
Zero & Const & Linear & Boosting & \multicolumn{2}{c}{~~~~~~~~MLP~~$\mathcal{F}_{ij}$ \textbackslash{} $\mathcal{F}_{i}$}\tabularnewline
 &  &  &  &  & \tabularnewline
\includegraphics[width=0.17\textwidth]{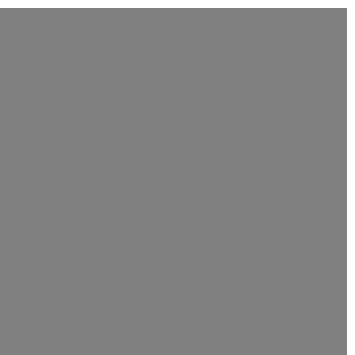} & \includegraphics[width=0.17\textwidth]{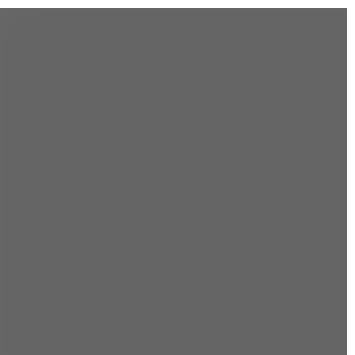} & \includegraphics[width=0.17\textwidth]{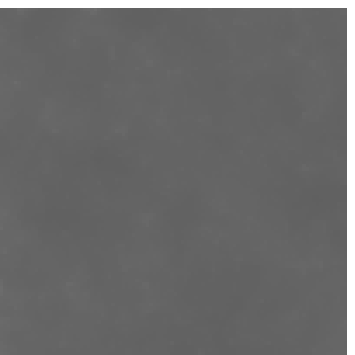} & \includegraphics[width=0.17\textwidth]{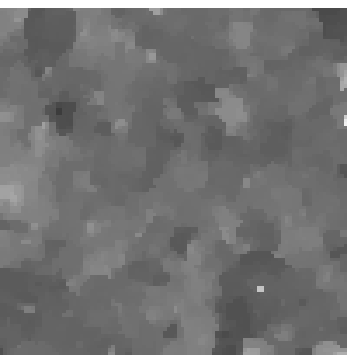} & \includegraphics[width=0.17\textwidth]{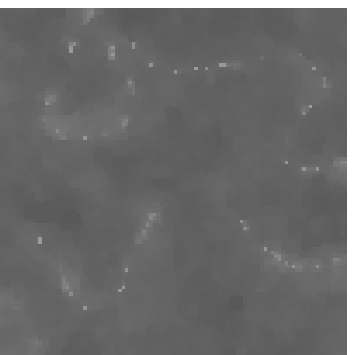} & ~~~~\begin{sideways}~~~~~Zero\end{sideways}\tabularnewline
\includegraphics[width=0.17\textwidth]{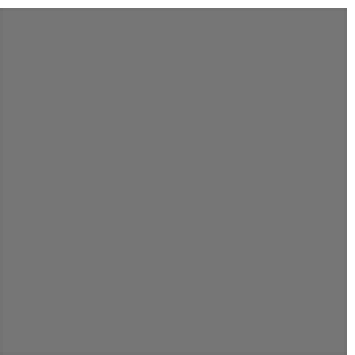} & \includegraphics[width=0.17\textwidth]{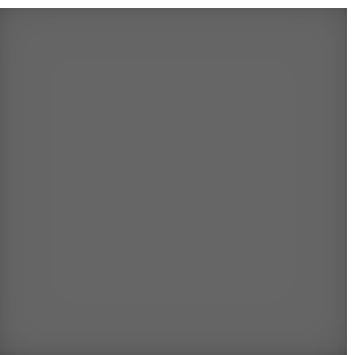} & \includegraphics[width=0.17\textwidth]{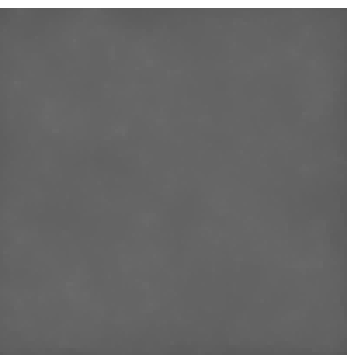} & \includegraphics[width=0.17\textwidth]{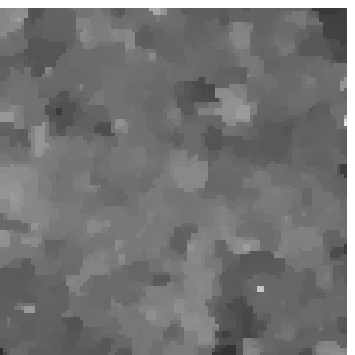} & \includegraphics[width=0.17\textwidth]{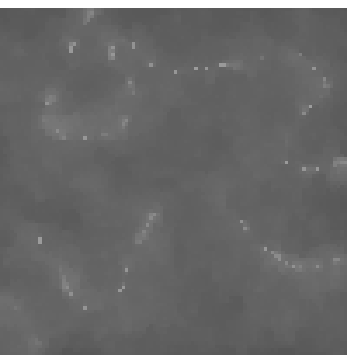} & ~~~~\begin{sideways}~~~~Const\end{sideways}\tabularnewline
\includegraphics[width=0.17\textwidth]{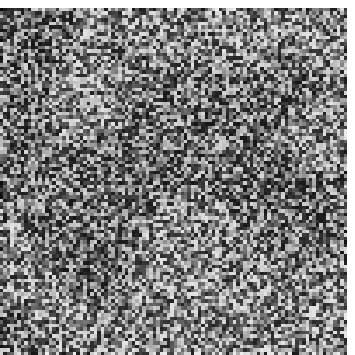} & \includegraphics[width=0.17\textwidth]{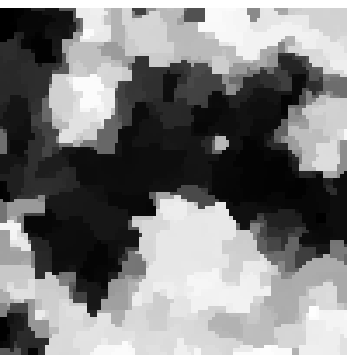} & \includegraphics[width=0.17\textwidth]{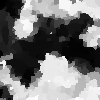} & \includegraphics[width=0.17\textwidth]{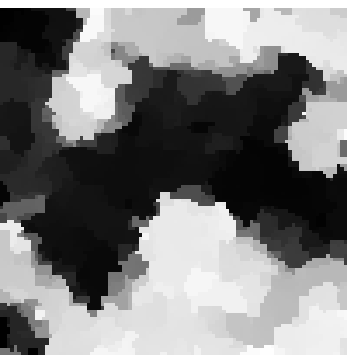} & \includegraphics[width=0.17\textwidth]{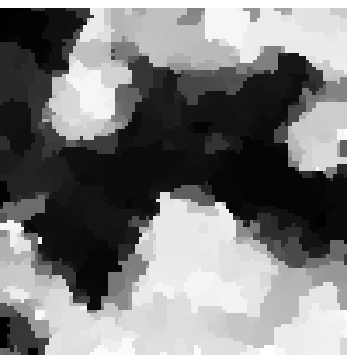} & ~~~~\begin{sideways}~~~Linear\end{sideways}\tabularnewline
\includegraphics[width=0.17\textwidth]{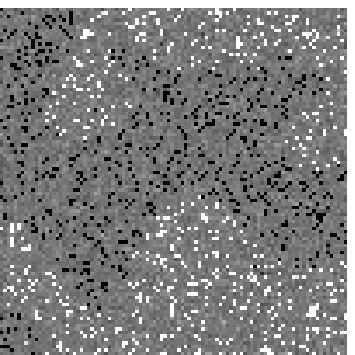} & \includegraphics[width=0.17\textwidth]{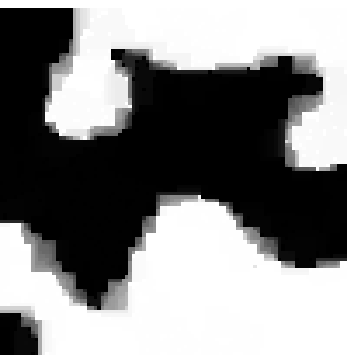} & \includegraphics[width=0.17\textwidth]{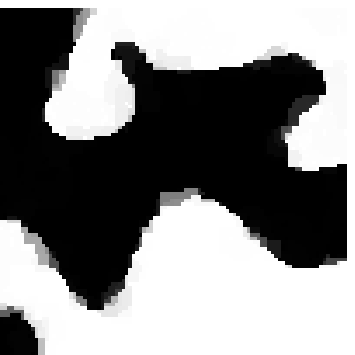} & \includegraphics[width=0.17\textwidth]{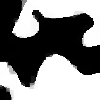} & \includegraphics[width=0.17\textwidth]{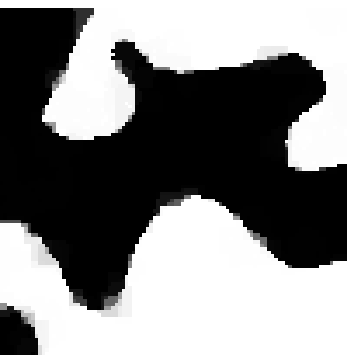} & ~~~~\begin{sideways}~~Boosting\end{sideways}\tabularnewline
\includegraphics[width=0.17\textwidth]{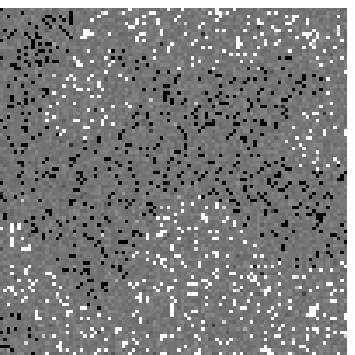} & \includegraphics[width=0.17\textwidth]{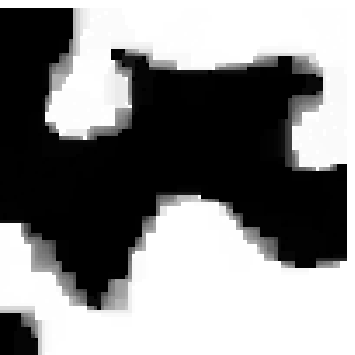} & \includegraphics[width=0.17\textwidth]{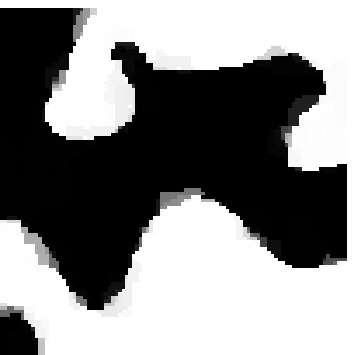} & \includegraphics[width=0.17\textwidth]{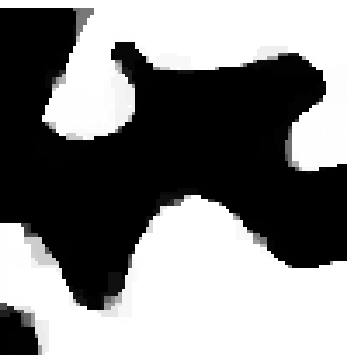} & \includegraphics[width=0.17\textwidth]{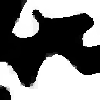} & ~~~~\begin{sideways}~~~~~MLP\end{sideways}\tabularnewline
\multicolumn{5}{c}{\includegraphics[width=0.17\textwidth]{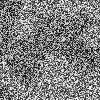}\includegraphics[width=0.17\textwidth]{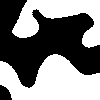}} & ~~~~\begin{sideways}~~~~~~~~True\end{sideways}\tabularnewline
\end{tabular}}
\par\end{centering}

\caption{Example Predictions on the Denoising Dataset}
\end{figure}
\begin{figure}
\begin{centering}
\scalebox{1}{\setlength\tabcolsep{0pt} %
\begin{tabular}{cccccc}
Zero & Const & Linear & Boosting & \multicolumn{2}{c}{~~~~~~~~MLP~~$\mathcal{F}_{ij}$ \textbackslash{} $\mathcal{F}_{i}$}\tabularnewline
 &  &  &  &  & \tabularnewline
\includegraphics[width=0.17\textwidth]{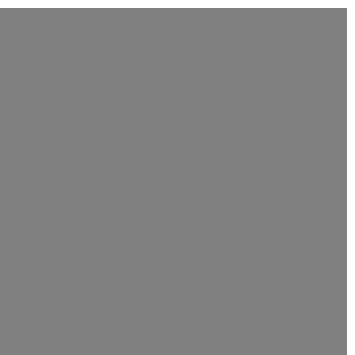} & \includegraphics[width=0.17\textwidth]{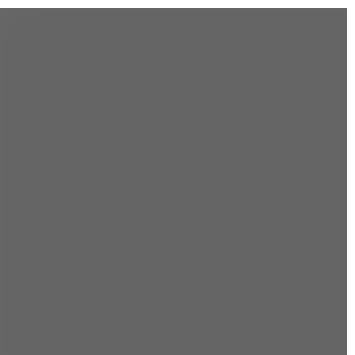} & \includegraphics[width=0.17\textwidth]{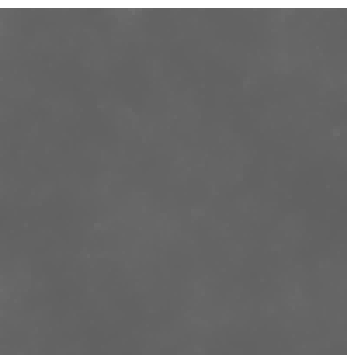} & \includegraphics[width=0.17\textwidth]{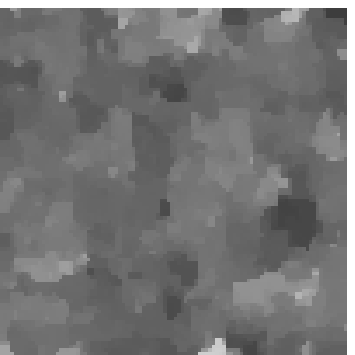} & \includegraphics[width=0.17\textwidth]{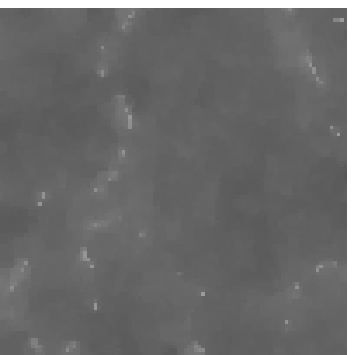} & ~~~~\begin{sideways}~~~~~Zero\end{sideways}\tabularnewline
\includegraphics[width=0.17\textwidth]{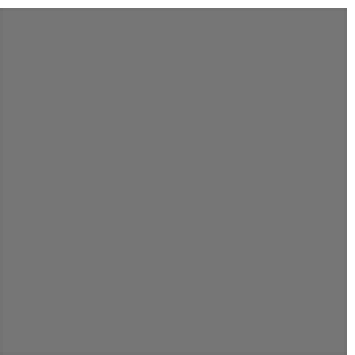} & \includegraphics[width=0.17\textwidth]{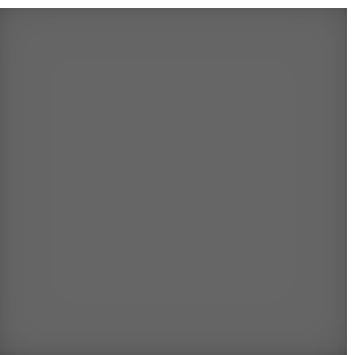} & \includegraphics[width=0.17\textwidth]{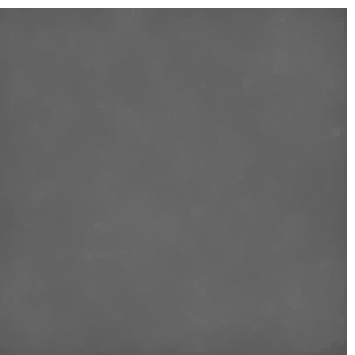} & \includegraphics[width=0.17\textwidth]{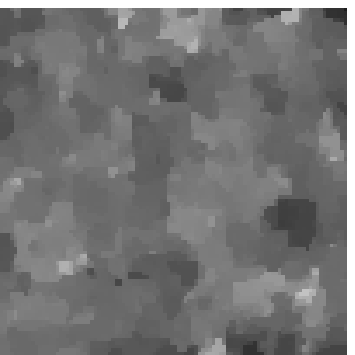} & \includegraphics[width=0.17\textwidth]{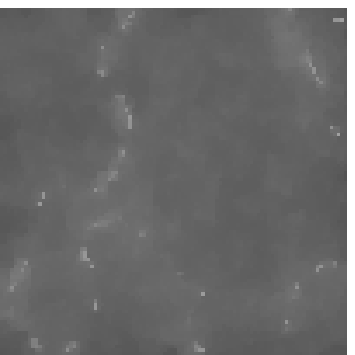} & ~~~~\begin{sideways}~~~~Const\end{sideways}\tabularnewline
\includegraphics[width=0.17\textwidth]{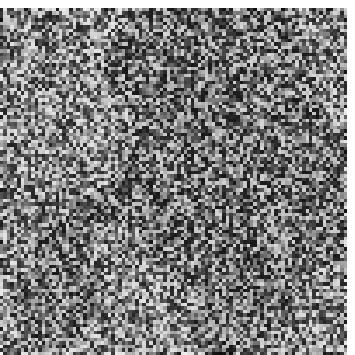} & \includegraphics[width=0.17\textwidth]{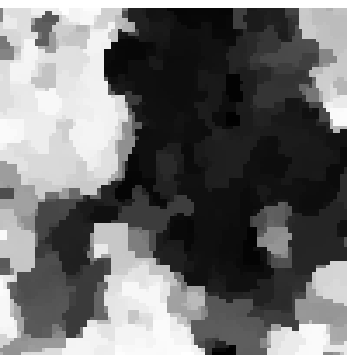} & \includegraphics[width=0.17\textwidth]{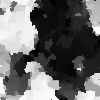} & \includegraphics[width=0.17\textwidth]{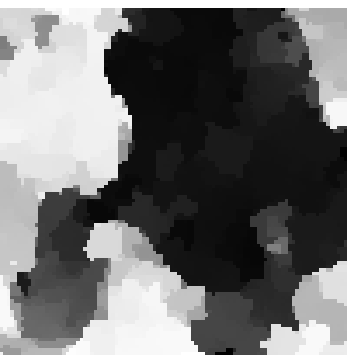} & \includegraphics[width=0.17\textwidth]{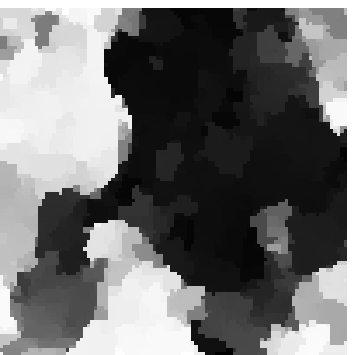} & ~~~~\begin{sideways}~~~Linear\end{sideways}\tabularnewline
\includegraphics[width=0.17\textwidth]{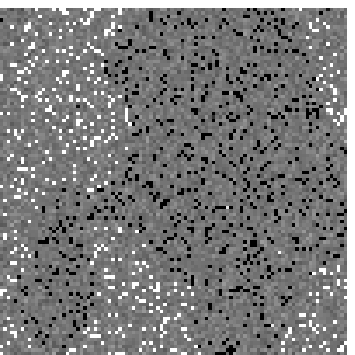} & \includegraphics[width=0.17\textwidth]{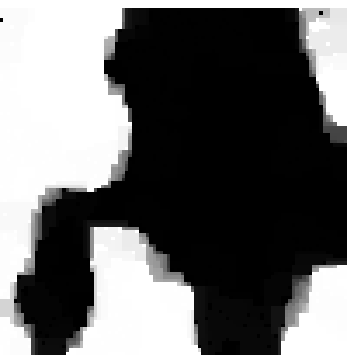} & \includegraphics[width=0.17\textwidth]{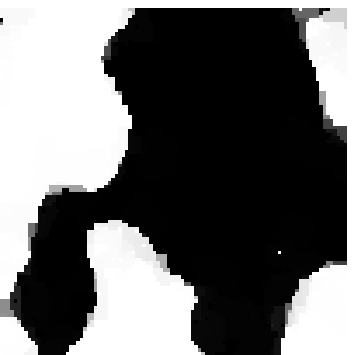} & \includegraphics[width=0.17\textwidth]{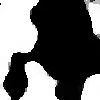} & \includegraphics[width=0.17\textwidth]{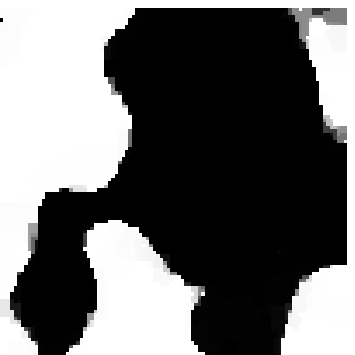} & ~~~~\begin{sideways}~~Boosting\end{sideways}\tabularnewline
\includegraphics[width=0.17\textwidth]{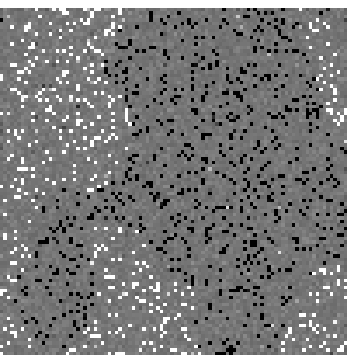} & \includegraphics[width=0.17\textwidth]{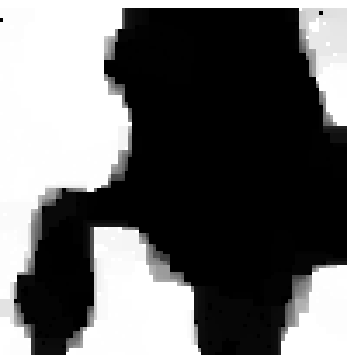} & \includegraphics[width=0.17\textwidth]{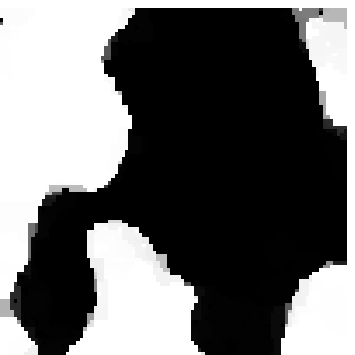} & \includegraphics[width=0.17\textwidth]{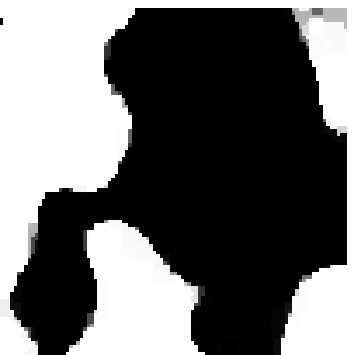} & \includegraphics[width=0.17\textwidth]{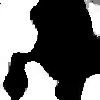} & ~~~~\begin{sideways}~~~~~MLP\end{sideways}\tabularnewline
\multicolumn{5}{c}{\includegraphics[width=0.17\textwidth]{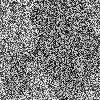}\includegraphics[width=0.17\textwidth]{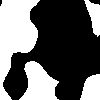}} & ~~~~\begin{sideways}~~~~~~~~True\end{sideways}\tabularnewline
\end{tabular}}
\par\end{centering}

\caption{Example Predictions on the Denoising Dataset}
\end{figure}
\begin{figure}
\begin{centering}
\scalebox{1}{\setlength\tabcolsep{0pt} %
\begin{tabular}{cccccc}
Zero & Const & Linear & Boosting & \multicolumn{2}{c}{~~~~~~~~MLP~~$\mathcal{F}_{ij}$ \textbackslash{} $\mathcal{F}_{i}$}\tabularnewline
 &  &  &  &  & \tabularnewline
\includegraphics[width=0.17\textwidth]{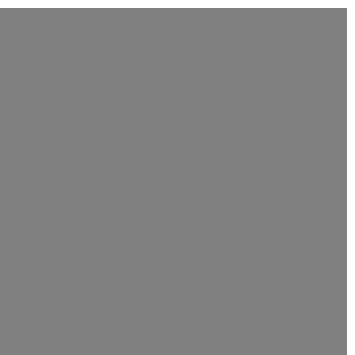} & \includegraphics[width=0.17\textwidth]{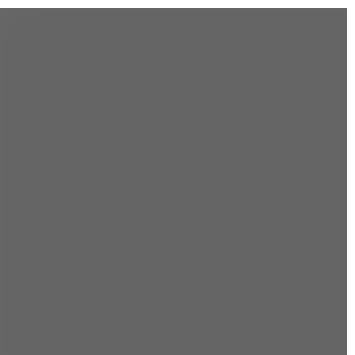} & \includegraphics[width=0.17\textwidth]{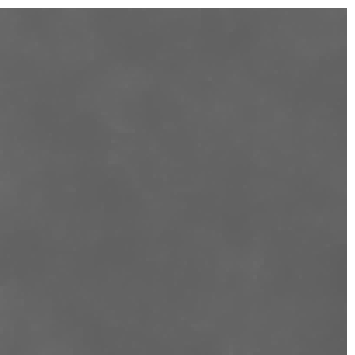} & \includegraphics[width=0.17\textwidth]{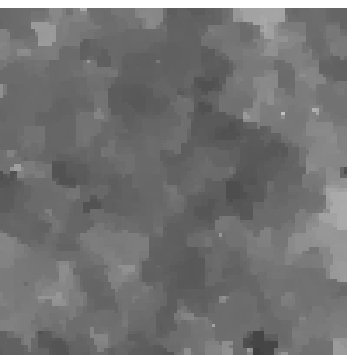} & \includegraphics[width=0.17\textwidth]{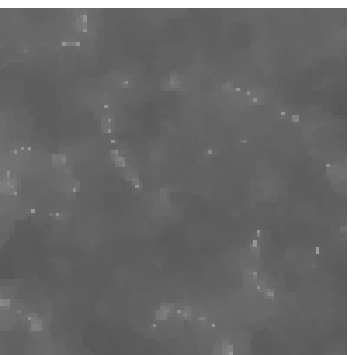} & ~~~~\begin{sideways}~~~~~Zero\end{sideways}\tabularnewline
\includegraphics[width=0.17\textwidth]{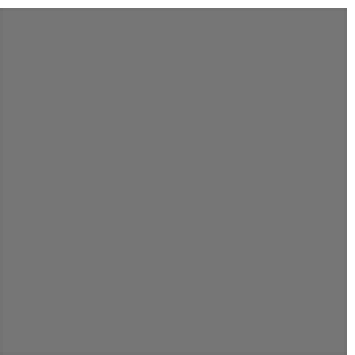} & \includegraphics[width=0.17\textwidth]{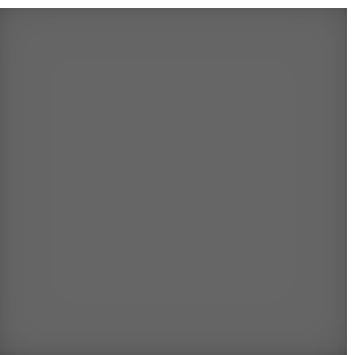} & \includegraphics[width=0.17\textwidth]{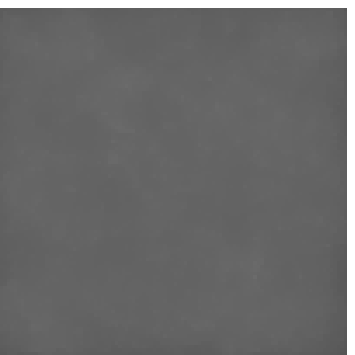} & \includegraphics[width=0.17\textwidth]{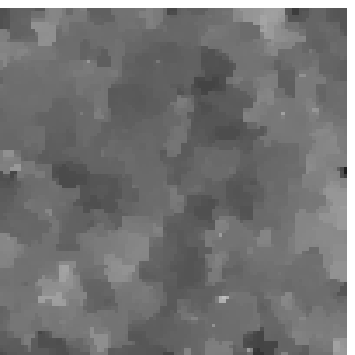} & \includegraphics[width=0.17\textwidth]{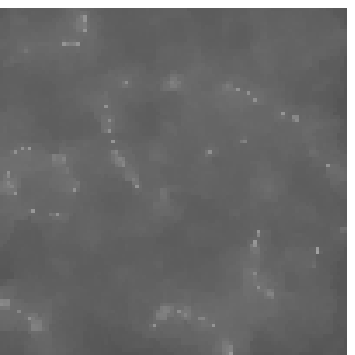} & ~~~~\begin{sideways}~~~~Const\end{sideways}\tabularnewline
\includegraphics[width=0.17\textwidth]{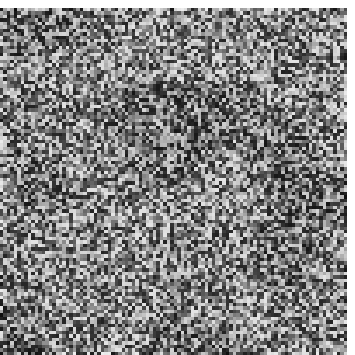} & \includegraphics[width=0.17\textwidth]{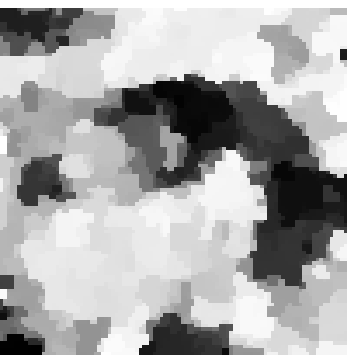} & \includegraphics[width=0.17\textwidth]{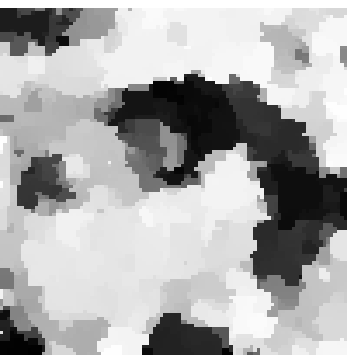} & \includegraphics[width=0.17\textwidth]{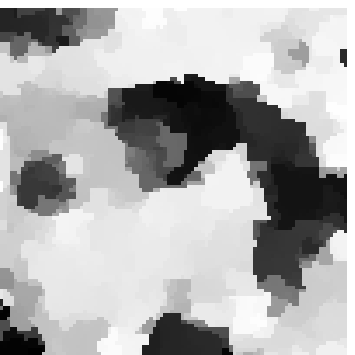} & \includegraphics[width=0.17\textwidth]{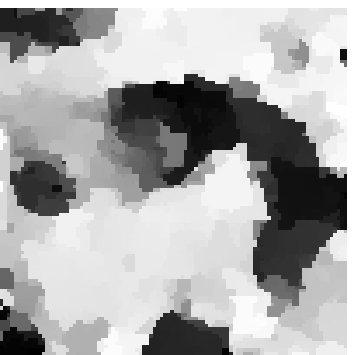} & ~~~~\begin{sideways}~~~Linear\end{sideways}\tabularnewline
\includegraphics[width=0.17\textwidth]{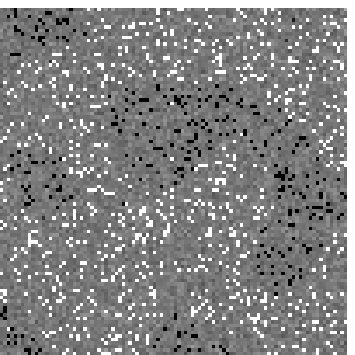} & \includegraphics[width=0.17\textwidth]{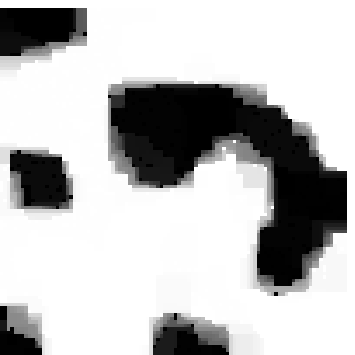} & \includegraphics[width=0.17\textwidth]{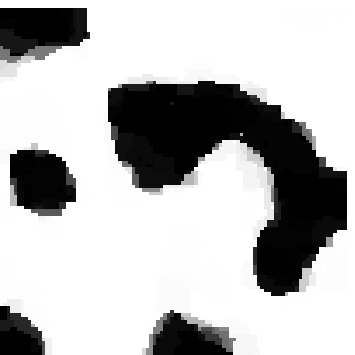} & \includegraphics[width=0.17\textwidth]{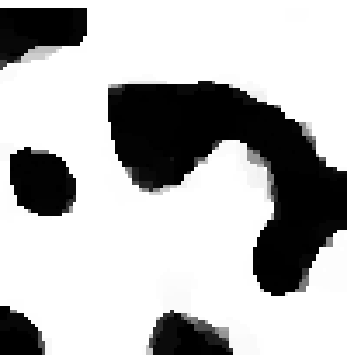} & \includegraphics[width=0.17\textwidth]{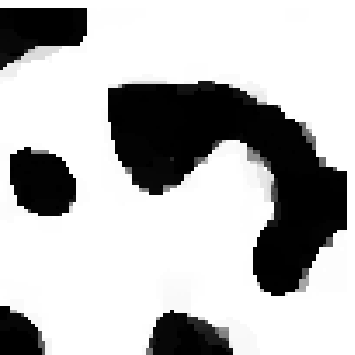} & ~~~~\begin{sideways}~~Boosting\end{sideways}\tabularnewline
\includegraphics[width=0.17\textwidth]{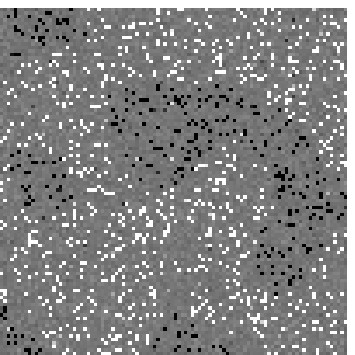} & \includegraphics[width=0.17\textwidth]{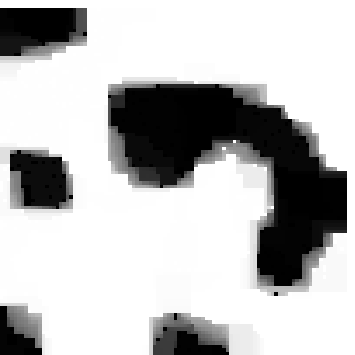} & \includegraphics[width=0.17\textwidth]{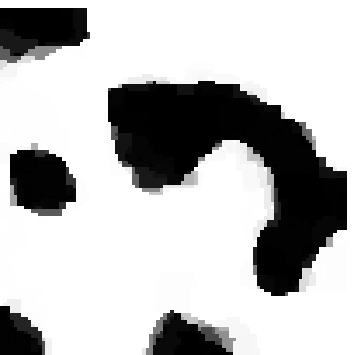} & \includegraphics[width=0.17\textwidth]{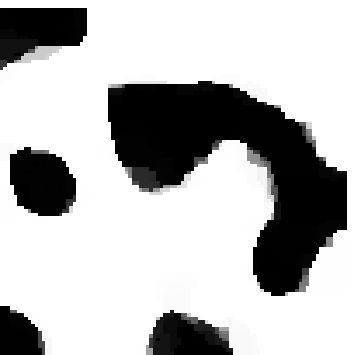} & \includegraphics[width=0.17\textwidth]{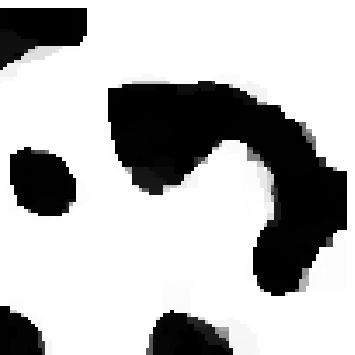} & ~~~~\begin{sideways}~~~~~MLP\end{sideways}\tabularnewline
\multicolumn{5}{c}{\includegraphics[width=0.17\textwidth]{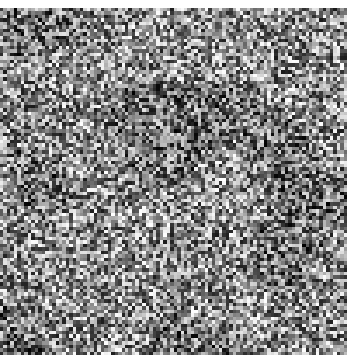}\includegraphics[width=0.17\textwidth]{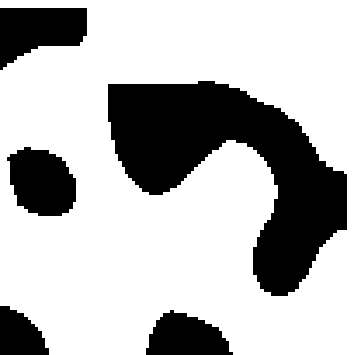}} & ~~~~\begin{sideways}~~~~~~~~True\end{sideways}\tabularnewline
\end{tabular}}
\par\end{centering}

\caption{Example Predictions on the Denoising Dataset}
\end{figure}

\begin{figure}
\begin{centering}
\scalebox{1}{\setlength\tabcolsep{0pt} %
\begin{tabular}{cccccc}
Zero & Const & Linear & Boosting & \multicolumn{2}{c}{~~~~~~~~MLP~~$\mathcal{F}_{ij}$ \textbackslash{} $\mathcal{F}_{i}$}\tabularnewline
 &  &  &  &  & \tabularnewline
\includegraphics[width=0.17\textwidth]{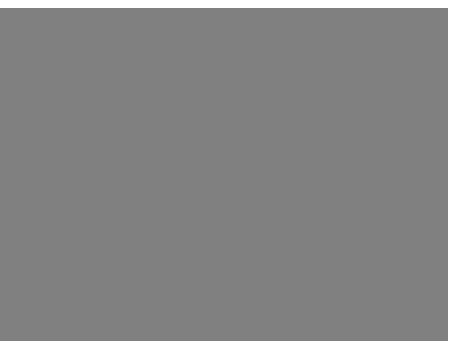} & \includegraphics[width=0.17\textwidth]{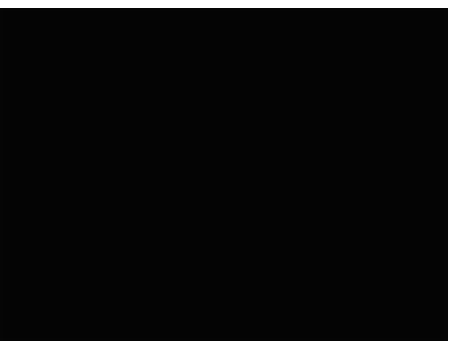} & \includegraphics[width=0.17\textwidth]{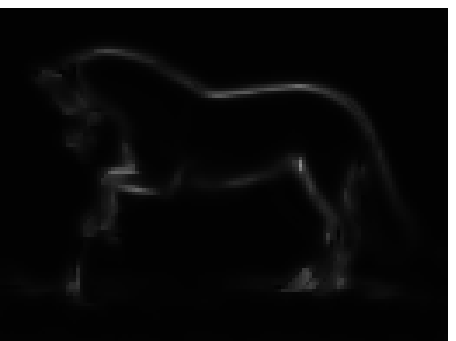} & \includegraphics[width=0.17\textwidth]{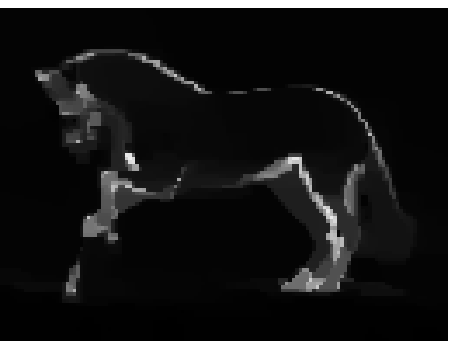} & \includegraphics[width=0.17\textwidth]{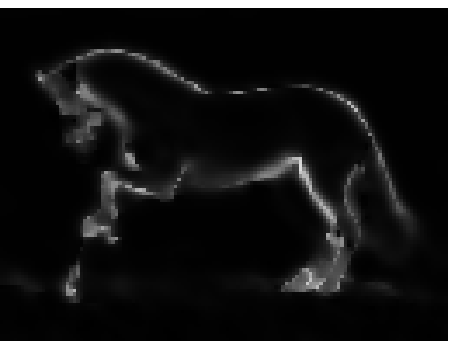} & ~~~~\begin{sideways}~~~~~Zero\end{sideways}\tabularnewline
\includegraphics[width=0.17\textwidth]{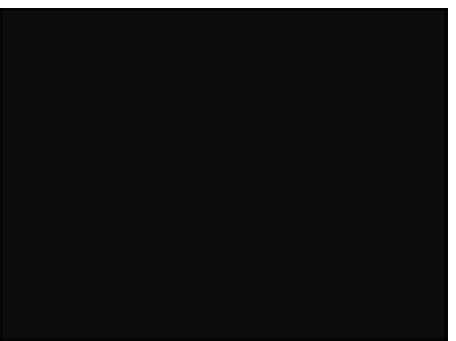} & \includegraphics[width=0.17\textwidth]{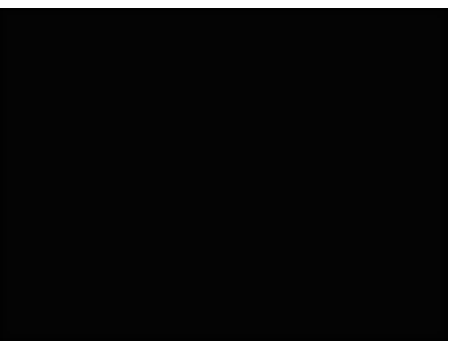} & \includegraphics[width=0.17\textwidth]{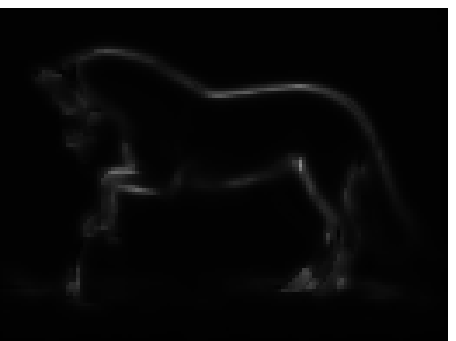} & \includegraphics[width=0.17\textwidth]{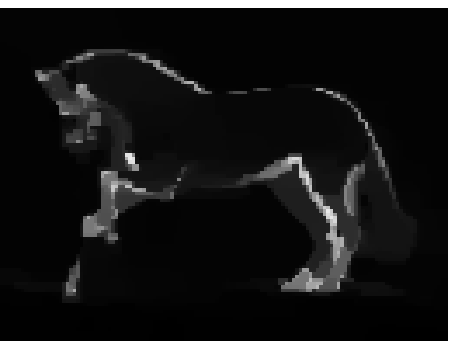} & \includegraphics[width=0.17\textwidth]{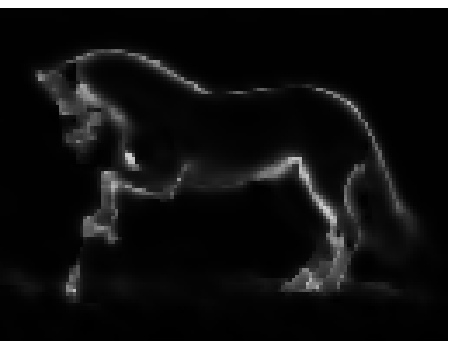} & ~~~~\begin{sideways}~~~~~Const\end{sideways}\tabularnewline
\includegraphics[width=0.17\textwidth]{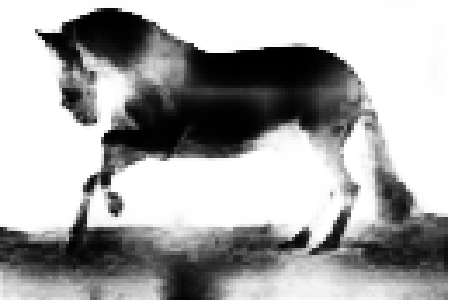} & \includegraphics[width=0.17\textwidth]{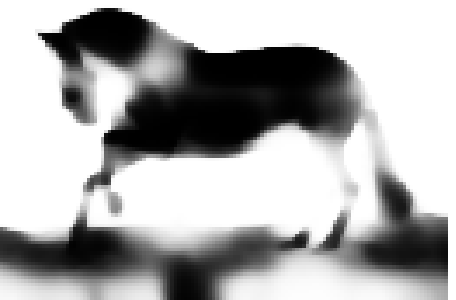} & \includegraphics[width=0.17\textwidth]{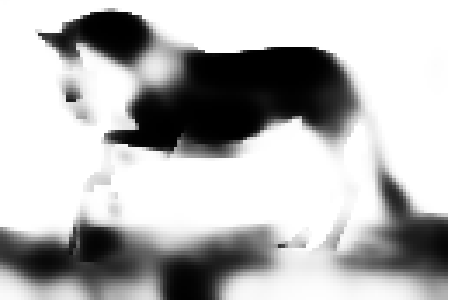} & \includegraphics[width=0.17\textwidth]{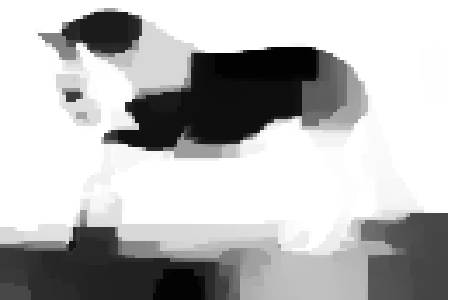} & \includegraphics[width=0.17\textwidth]{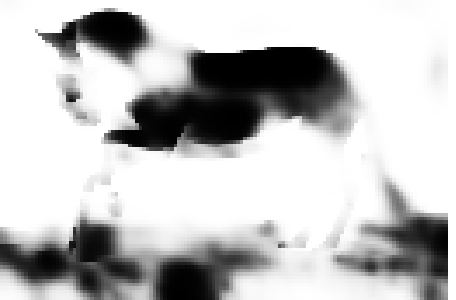} & ~~~~\begin{sideways}~~~~~Linear\end{sideways}\tabularnewline
\includegraphics[width=0.17\textwidth]{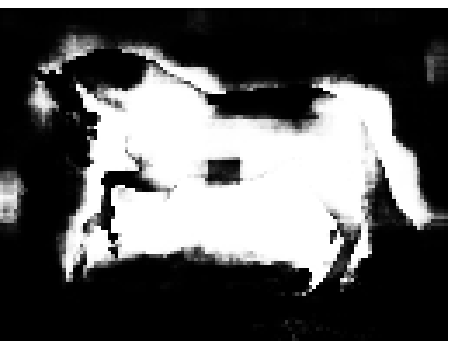} & \includegraphics[width=0.17\textwidth]{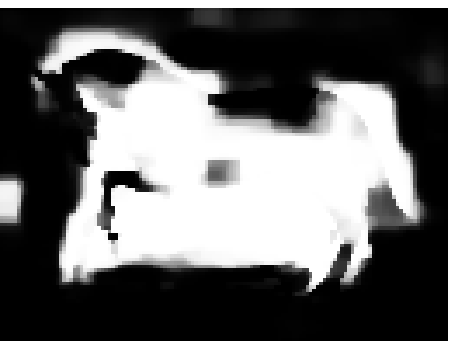} & \includegraphics[width=0.17\textwidth]{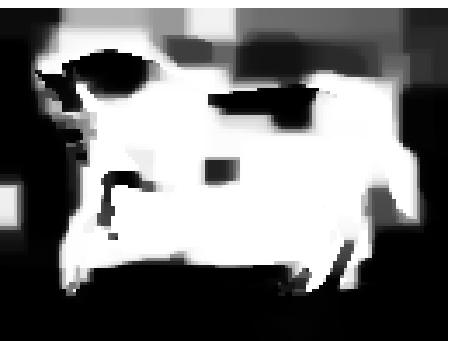} & \includegraphics[width=0.17\textwidth]{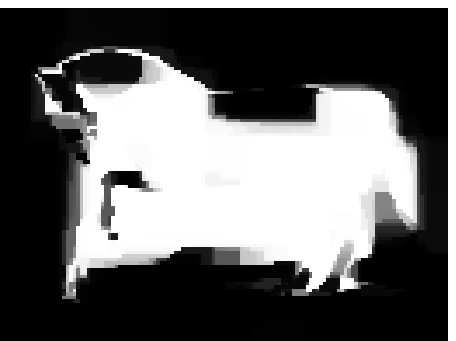} & \includegraphics[width=0.17\textwidth]{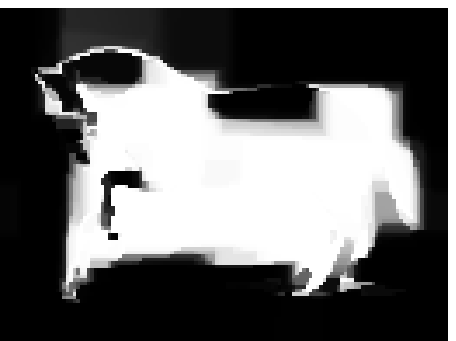} & ~~~~\begin{sideways}Boosting\end{sideways}\tabularnewline
\includegraphics[width=0.17\textwidth]{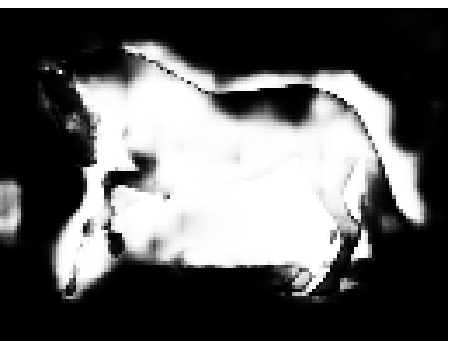} & \includegraphics[width=0.17\textwidth]{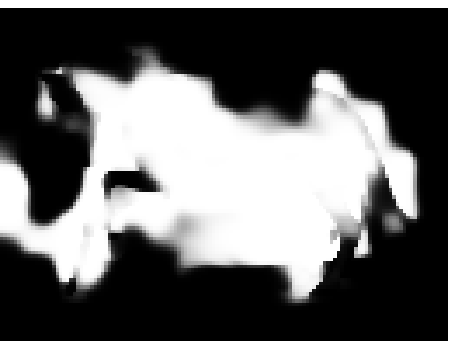} & \includegraphics[width=0.17\textwidth]{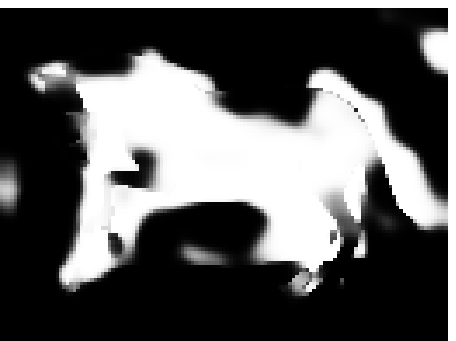} & \includegraphics[width=0.17\textwidth]{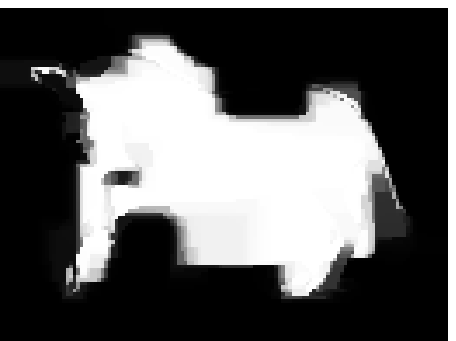} & \includegraphics[width=0.17\textwidth]{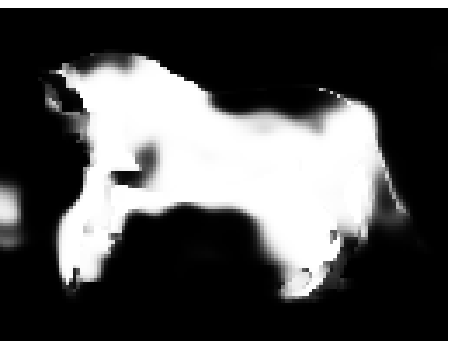} & ~~~~\begin{sideways}~~~~~MLP\end{sideways}\tabularnewline
\multicolumn{5}{c}{\includegraphics[width=0.17\textwidth]{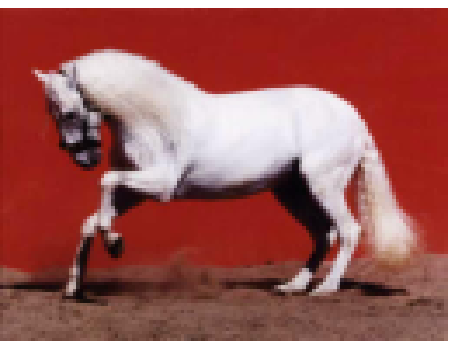}\includegraphics[width=0.17\textwidth]{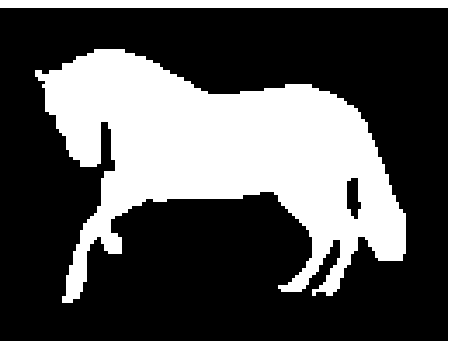}} & ~~~~\begin{sideways}~~~~~True\end{sideways}\tabularnewline
\end{tabular}}
\par\end{centering}

\caption{Example Predictions on the Horses Dataset}
\end{figure}
\begin{figure}
\begin{centering}
\scalebox{1}{\setlength\tabcolsep{0pt} %
\begin{tabular}{cccccc}
Zero & Const & Linear & Boosting & \multicolumn{2}{c}{~~~~~~~~MLP~~$\mathcal{F}_{ij}$ \textbackslash{} $\mathcal{F}_{i}$}\tabularnewline
 &  &  &  &  & \tabularnewline
\includegraphics[width=0.17\textwidth]{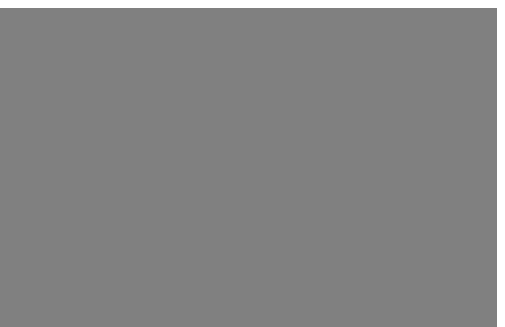} & \includegraphics[width=0.17\textwidth]{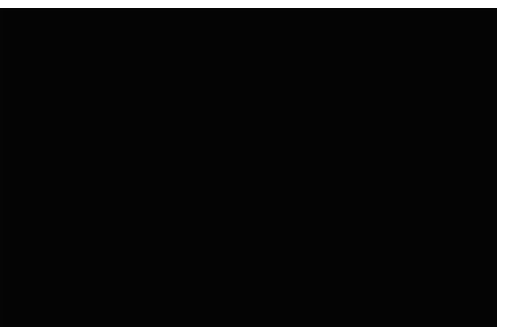} & \includegraphics[width=0.17\textwidth]{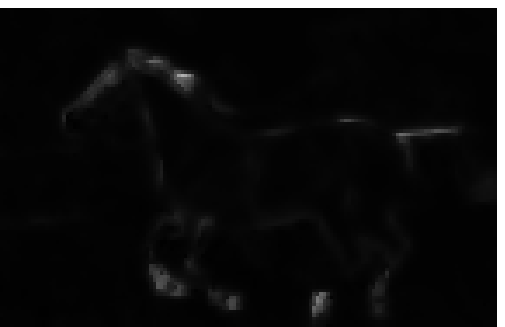} & \includegraphics[width=0.17\textwidth]{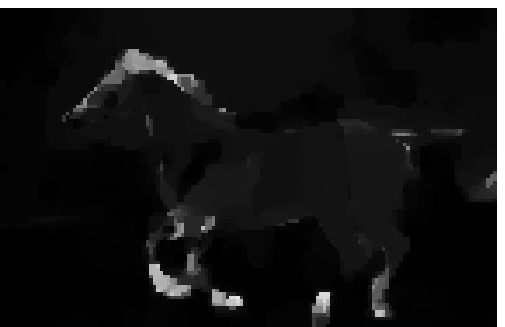} & \includegraphics[width=0.17\textwidth]{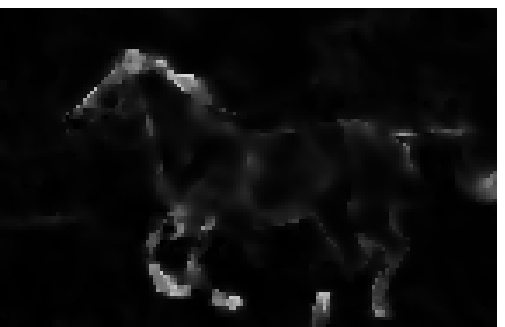} & ~~~~\begin{sideways}~~~~~Zero\end{sideways}\tabularnewline
\includegraphics[width=0.17\textwidth]{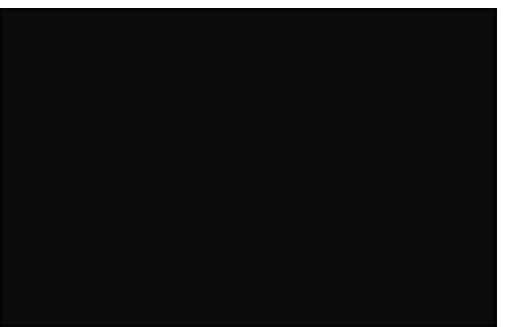} & \includegraphics[width=0.17\textwidth]{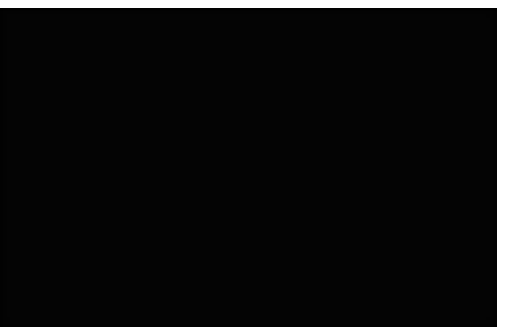} & \includegraphics[width=0.17\textwidth]{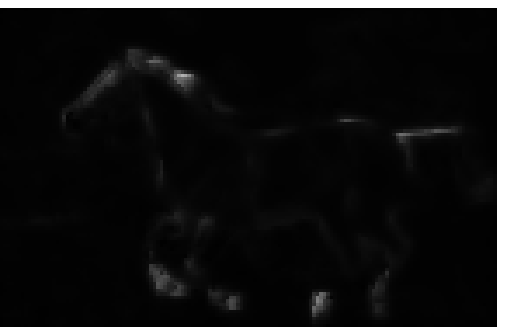} & \includegraphics[width=0.17\textwidth]{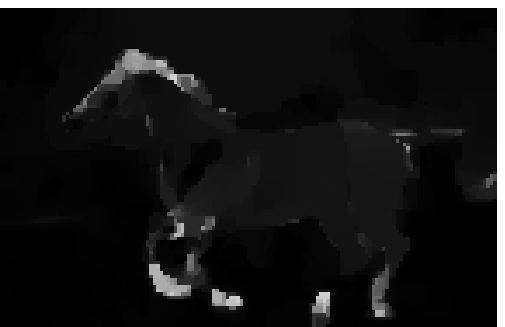} & \includegraphics[width=0.17\textwidth]{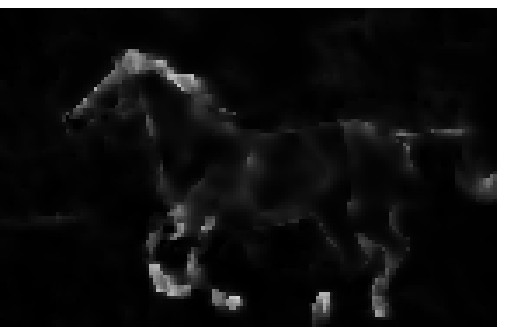} & ~~~~\begin{sideways}~~~~~Const\end{sideways}\tabularnewline
\includegraphics[width=0.17\textwidth]{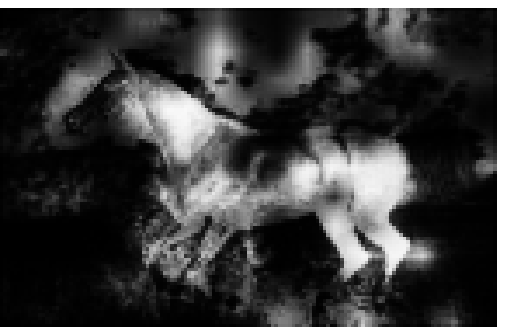} & \includegraphics[width=0.17\textwidth]{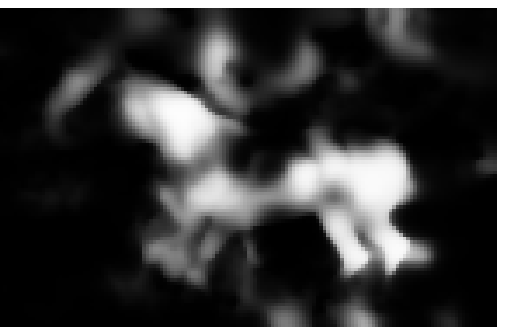} & \includegraphics[width=0.17\textwidth]{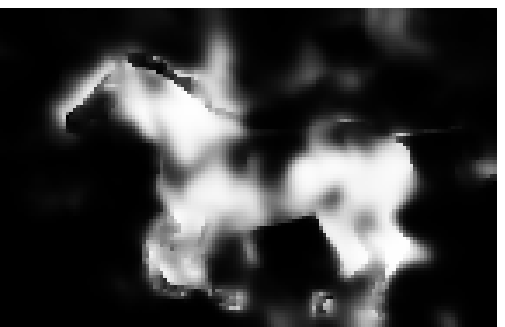} & \includegraphics[width=0.17\textwidth]{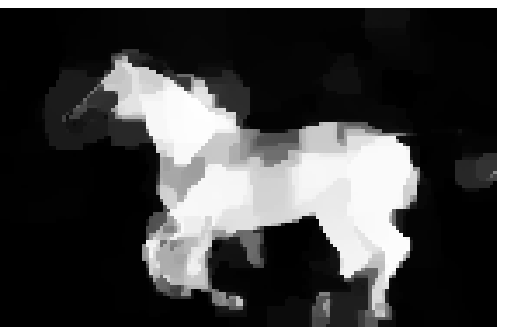} & \includegraphics[width=0.17\textwidth]{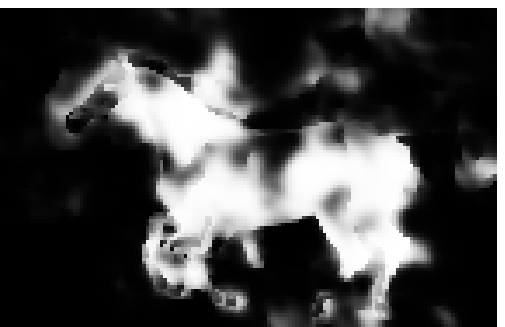} & ~~~~\begin{sideways}~~~~~Linear\end{sideways}\tabularnewline
\includegraphics[width=0.17\textwidth]{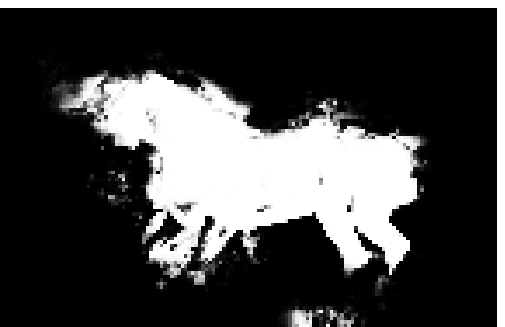} & \includegraphics[width=0.17\textwidth]{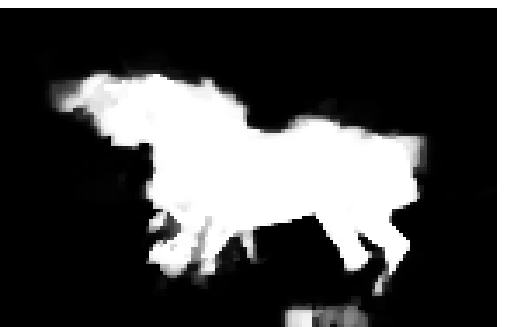} & \includegraphics[width=0.17\textwidth]{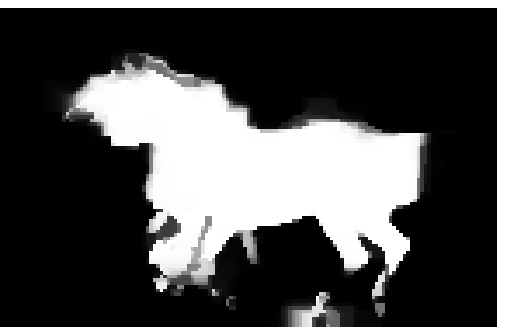} & \includegraphics[width=0.17\textwidth]{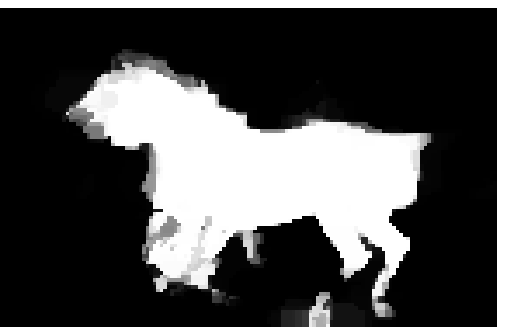} & \includegraphics[width=0.17\textwidth]{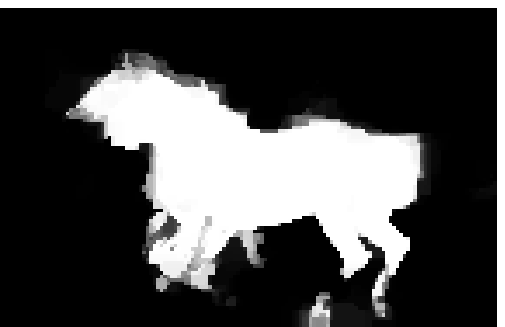} & ~~~~\begin{sideways}Boosting\end{sideways}\tabularnewline
\includegraphics[width=0.17\textwidth]{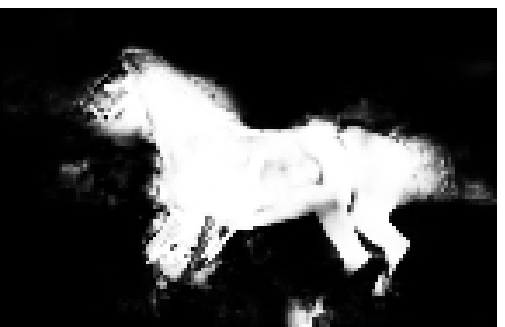} & \includegraphics[width=0.17\textwidth]{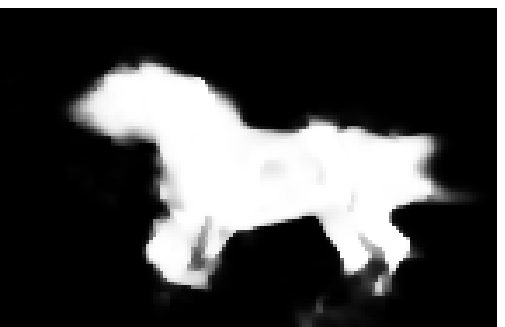} & \includegraphics[width=0.17\textwidth]{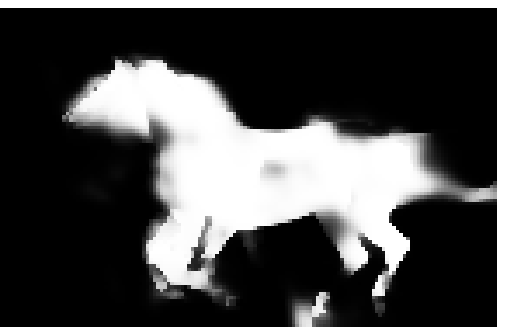} & \includegraphics[width=0.17\textwidth]{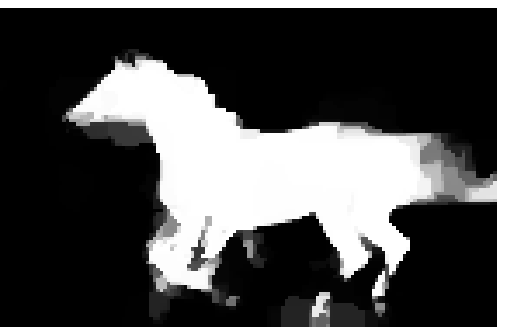} & \includegraphics[width=0.17\textwidth]{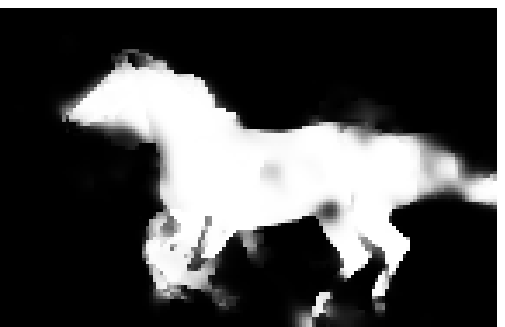} & ~~~~\begin{sideways}~~~~~MLP\end{sideways}\tabularnewline
\multicolumn{5}{c}{\includegraphics[width=0.17\textwidth]{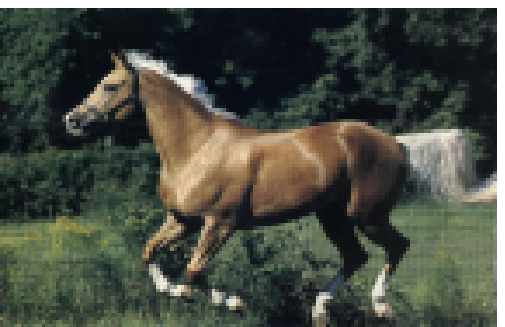}\includegraphics[width=0.17\textwidth]{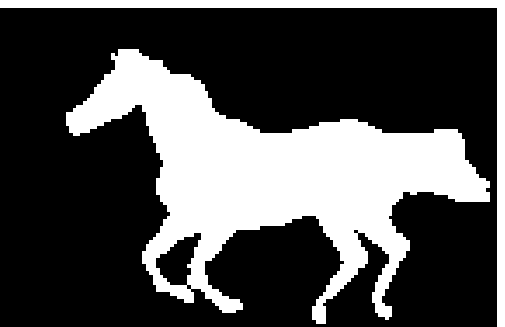}} & ~~~~\begin{sideways}~~~~~True\end{sideways}\tabularnewline
\end{tabular}}
\par\end{centering}

\caption{Example Predictions on the Horses Dataset}
\end{figure}
\begin{figure}
\begin{centering}
\scalebox{1}{\setlength\tabcolsep{0pt} %
\begin{tabular}{cccccc}
Zero & Const & Linear & Boosting & \multicolumn{2}{c}{~~~~~~~~MLP~~$\mathcal{F}_{ij}$ \textbackslash{} $\mathcal{F}_{i}$}\tabularnewline
 &  &  &  &  & \tabularnewline
\includegraphics[width=0.17\textwidth]{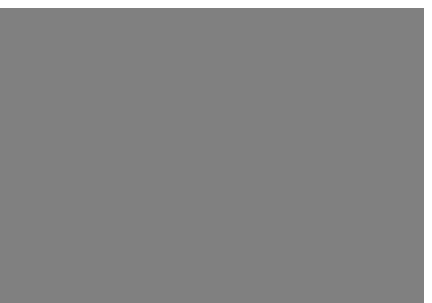} & \includegraphics[width=0.17\textwidth]{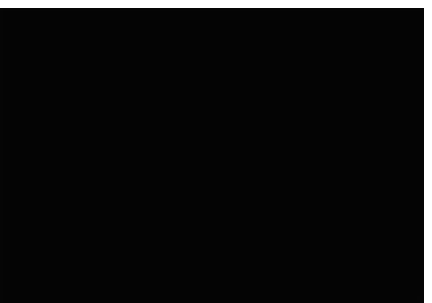} & \includegraphics[width=0.17\textwidth]{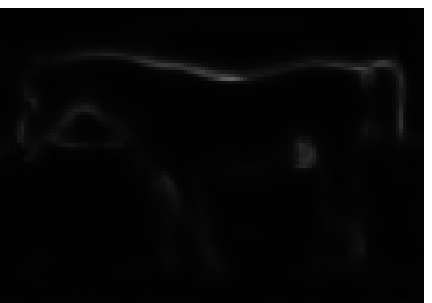} & \includegraphics[width=0.17\textwidth]{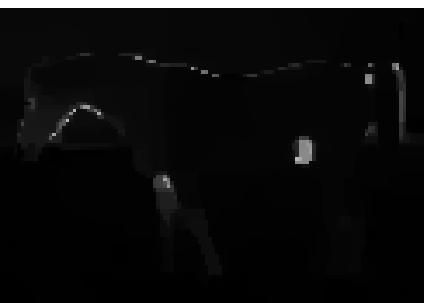} & \includegraphics[width=0.17\textwidth]{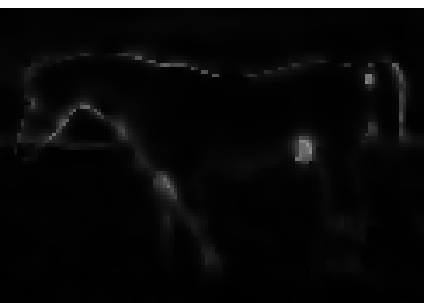} & ~~~~\begin{sideways}~~~~~Zero\end{sideways}\tabularnewline
\includegraphics[width=0.17\textwidth]{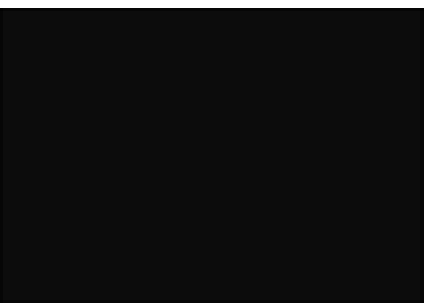} & \includegraphics[width=0.17\textwidth]{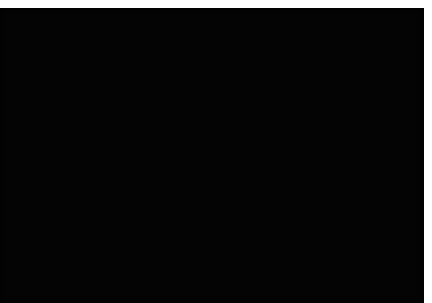} & \includegraphics[width=0.17\textwidth]{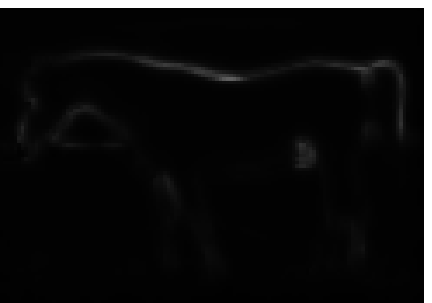} & \includegraphics[width=0.17\textwidth]{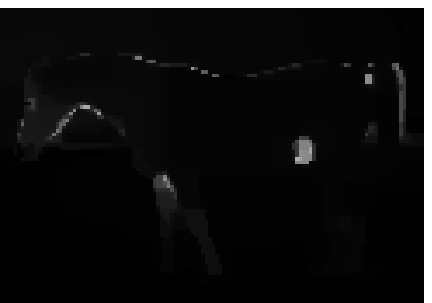} & \includegraphics[width=0.17\textwidth]{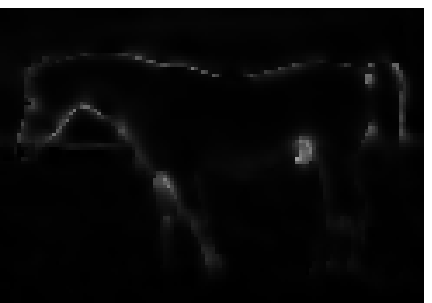} & ~~~~\begin{sideways}~~~~~Const\end{sideways}\tabularnewline
\includegraphics[width=0.17\textwidth]{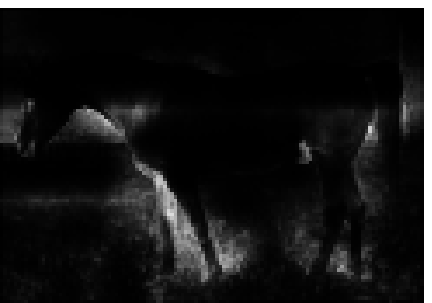} & \includegraphics[width=0.17\textwidth]{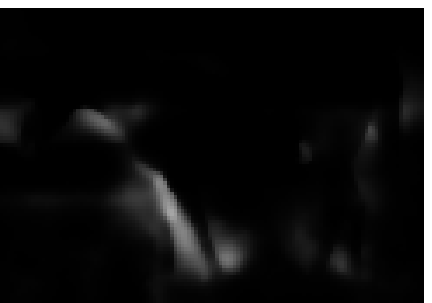} & \includegraphics[width=0.17\textwidth]{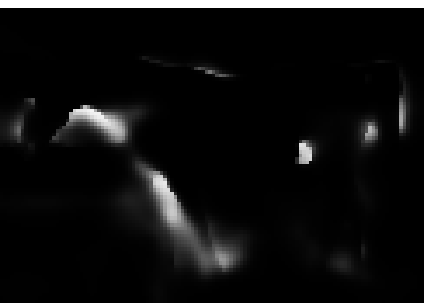} & \includegraphics[width=0.17\textwidth]{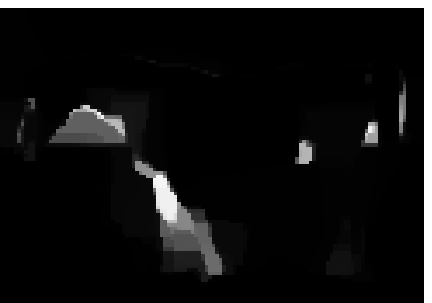} & \includegraphics[width=0.17\textwidth]{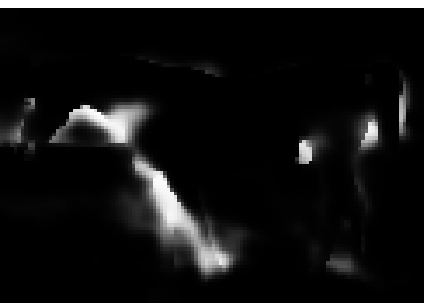} & ~~~~\begin{sideways}~~~~~Linear\end{sideways}\tabularnewline
\includegraphics[width=0.17\textwidth]{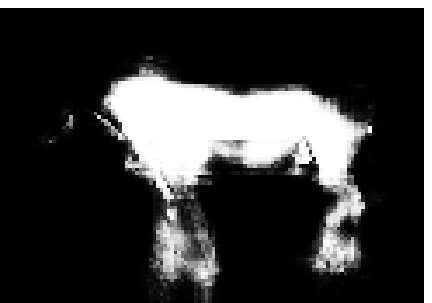} & \includegraphics[width=0.17\textwidth]{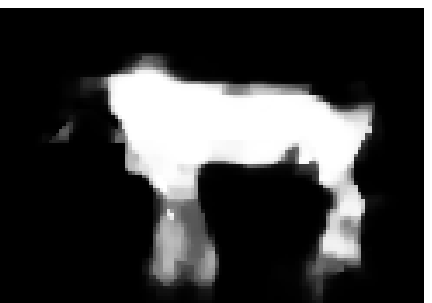} & \includegraphics[width=0.17\textwidth]{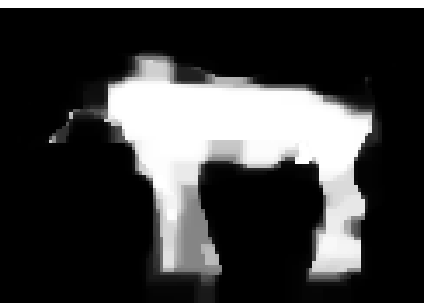} & \includegraphics[width=0.17\textwidth]{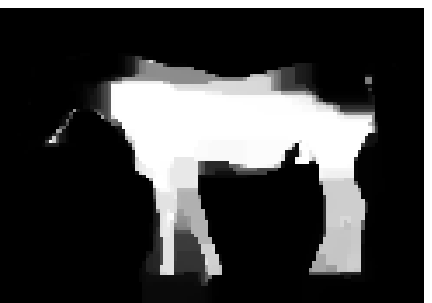} & \includegraphics[width=0.17\textwidth]{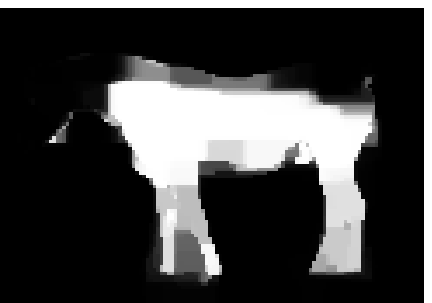} & ~~~~\begin{sideways}Boosting\end{sideways}\tabularnewline
\includegraphics[width=0.17\textwidth]{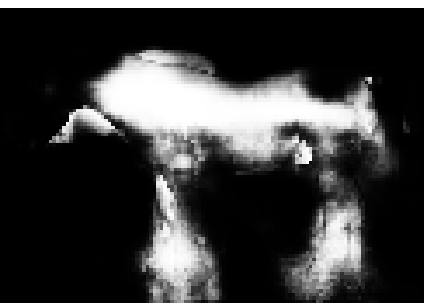} & \includegraphics[width=0.17\textwidth]{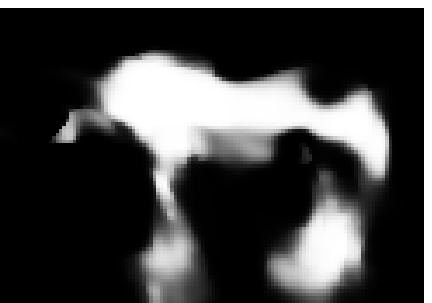} & \includegraphics[width=0.17\textwidth]{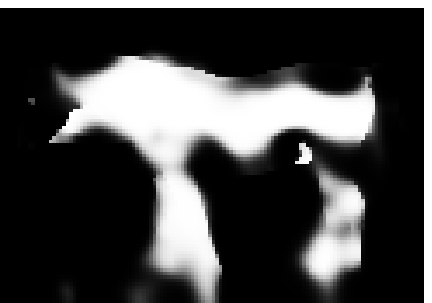} & \includegraphics[width=0.17\textwidth]{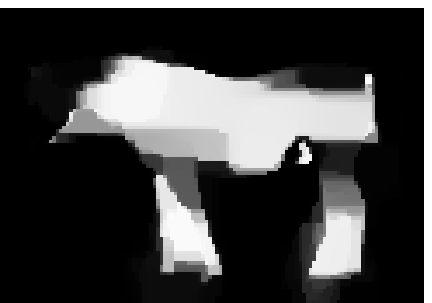} & \includegraphics[width=0.17\textwidth]{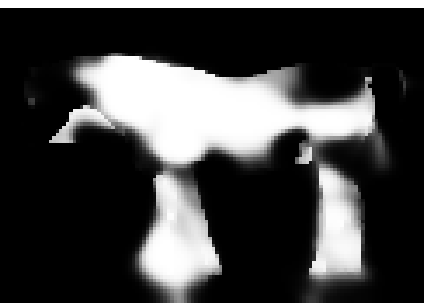} & ~~~~\begin{sideways}~~~~~MLP\end{sideways}\tabularnewline
\multicolumn{5}{c}{\includegraphics[width=0.17\textwidth]{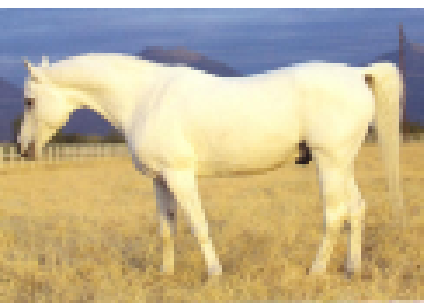}\includegraphics[width=0.17\textwidth]{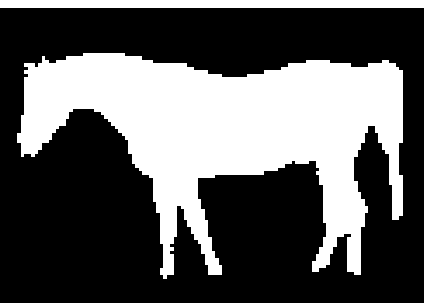}} & ~~~~\begin{sideways}~~~~~True\end{sideways}\tabularnewline
\end{tabular}}
\par\end{centering}

\caption{Example Predictions on the Horses Dataset}
\end{figure}
\begin{figure}
\begin{centering}
\scalebox{1}{\setlength\tabcolsep{0pt} %
\begin{tabular}{cccccc}
Zero & Const & Linear & Boosting & \multicolumn{2}{c}{~~~~~~~~MLP~~$\mathcal{F}_{ij}$ \textbackslash{} $\mathcal{F}_{i}$}\tabularnewline
 &  &  &  &  & \tabularnewline
\includegraphics[width=0.17\textwidth]{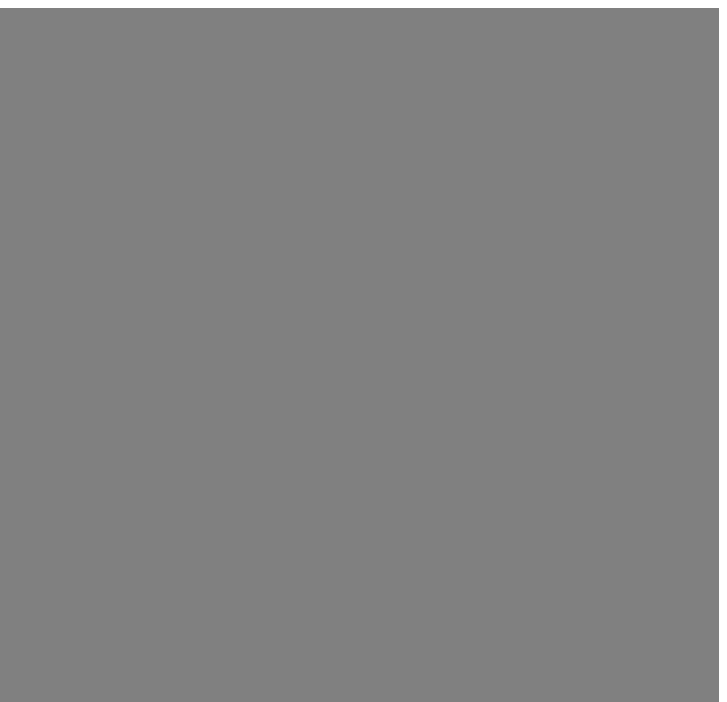} & \includegraphics[width=0.17\textwidth]{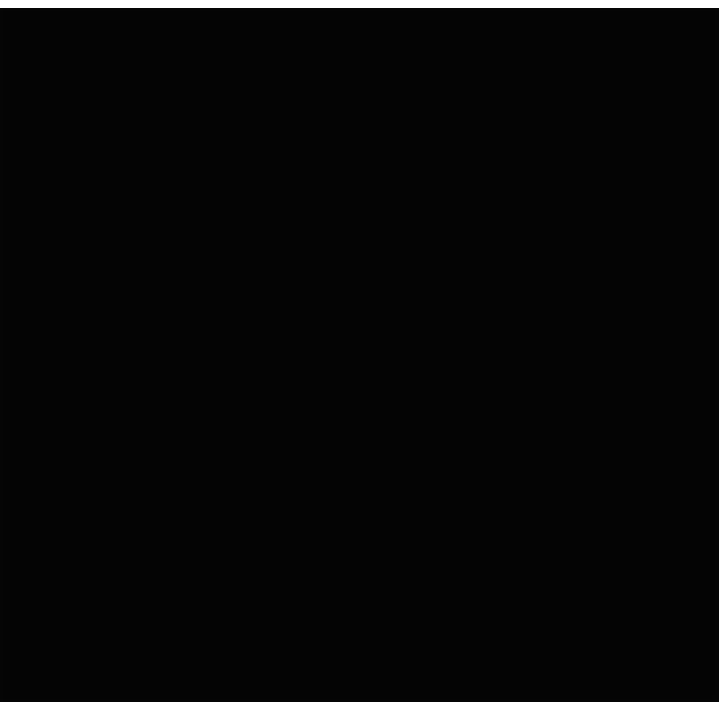} & \includegraphics[width=0.17\textwidth]{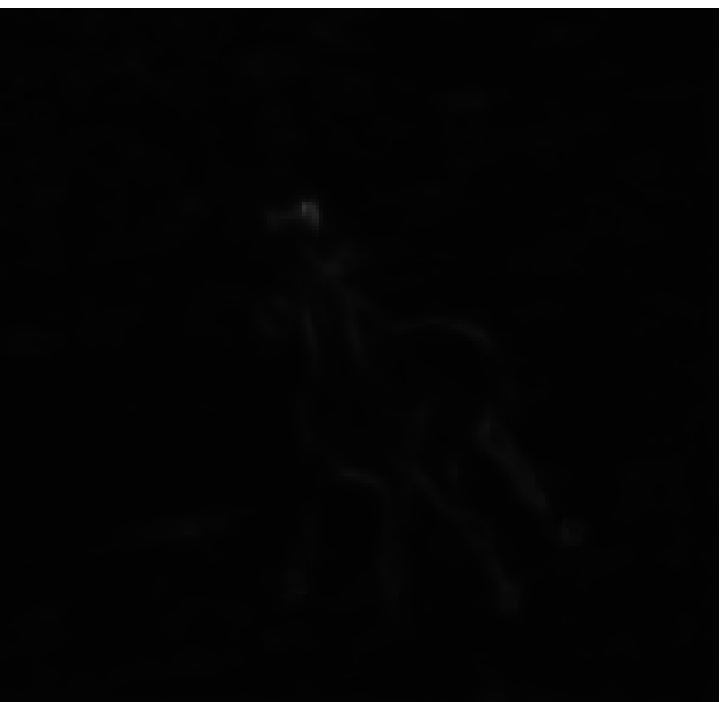} & \includegraphics[width=0.17\textwidth]{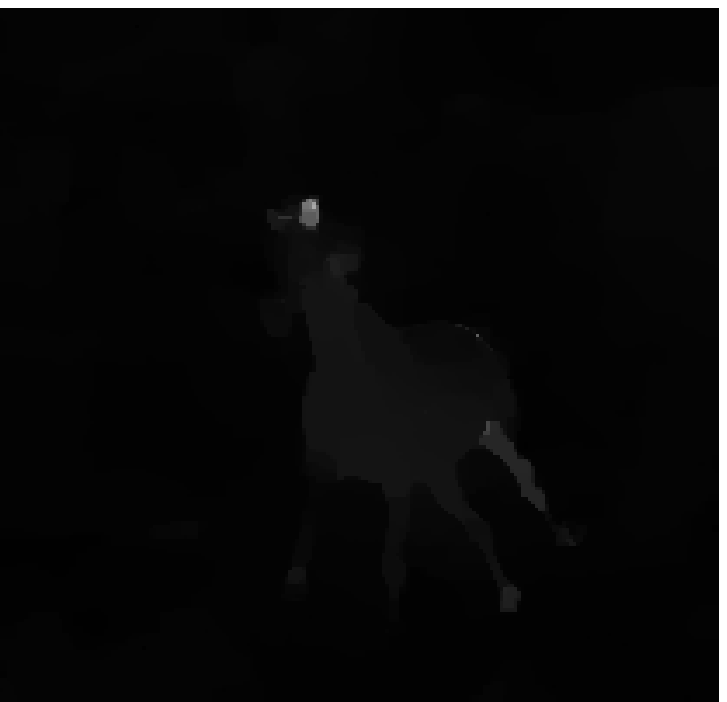} & \includegraphics[width=0.17\textwidth]{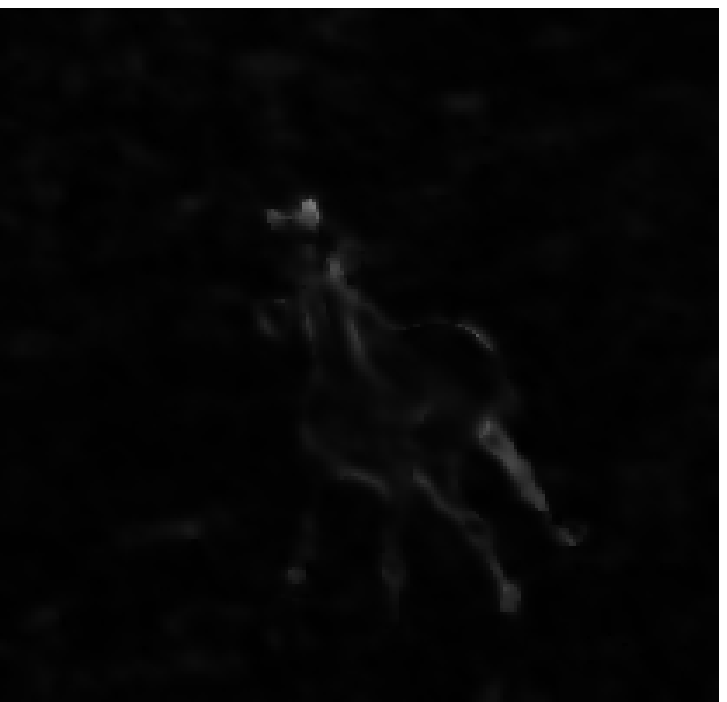} & ~~~~\begin{sideways}~~~~~~~~~~Zero\end{sideways}\tabularnewline
\includegraphics[width=0.17\textwidth]{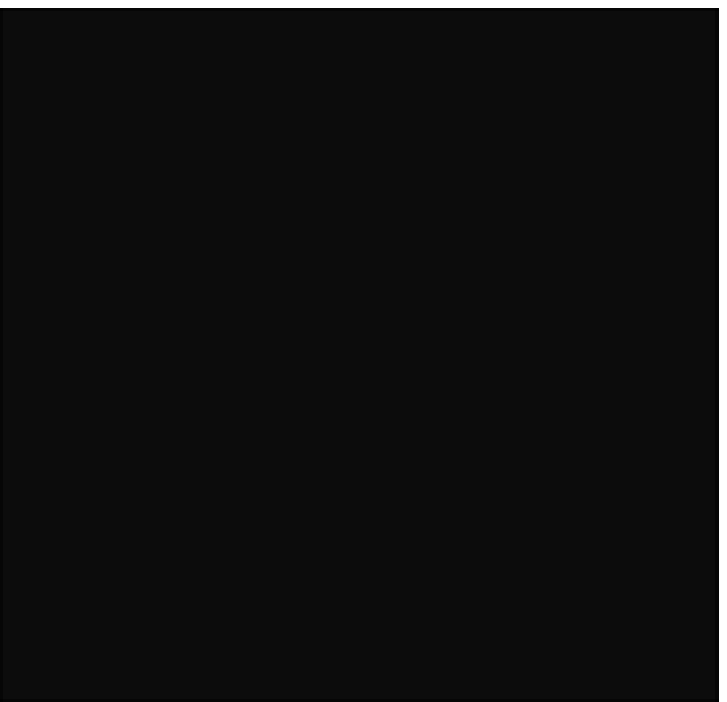} & \includegraphics[width=0.17\textwidth]{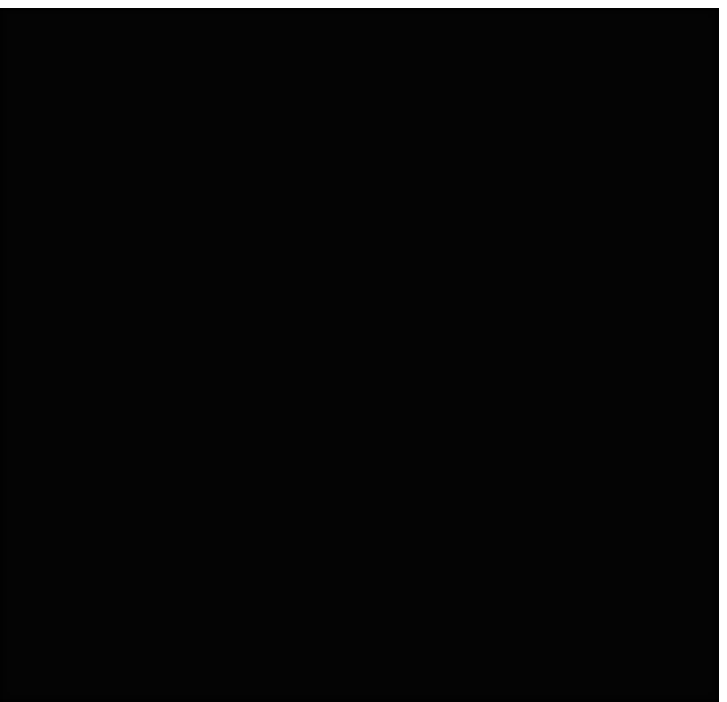} & \includegraphics[width=0.17\textwidth]{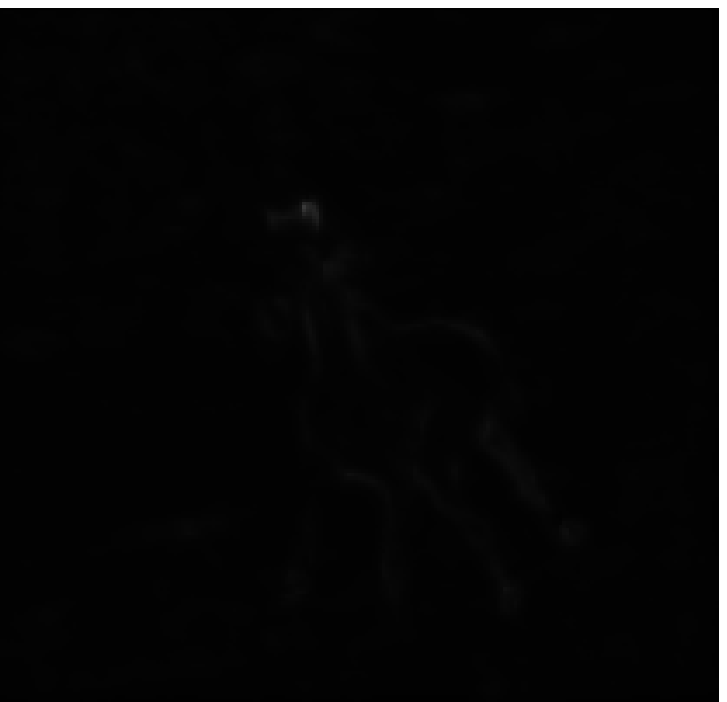} & \includegraphics[width=0.17\textwidth]{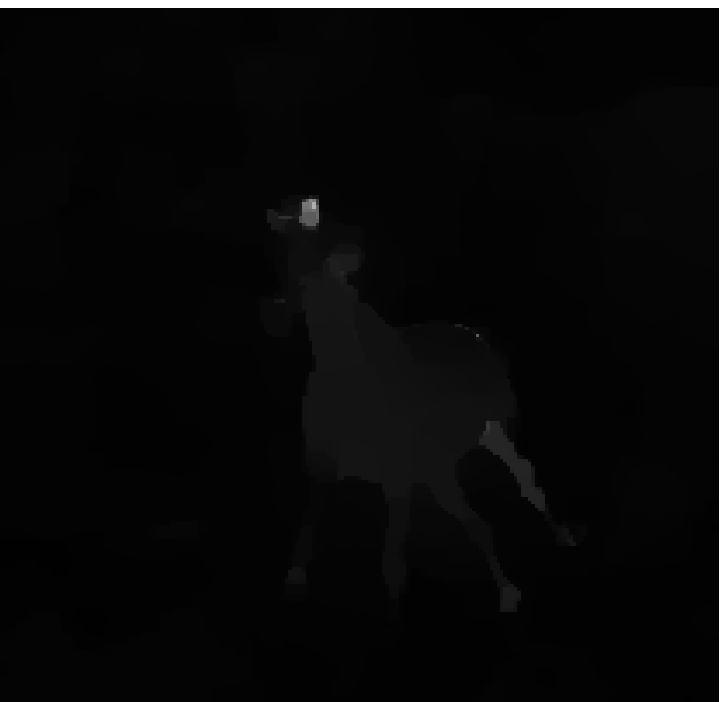} & \includegraphics[width=0.17\textwidth]{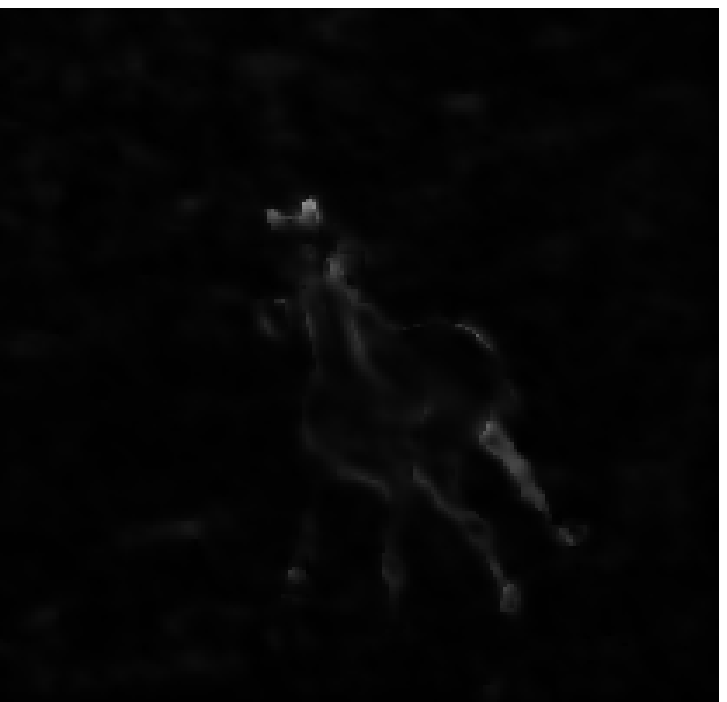} & ~~~~\begin{sideways}~~~~~~~~~Const\end{sideways}\tabularnewline
\includegraphics[width=0.17\textwidth]{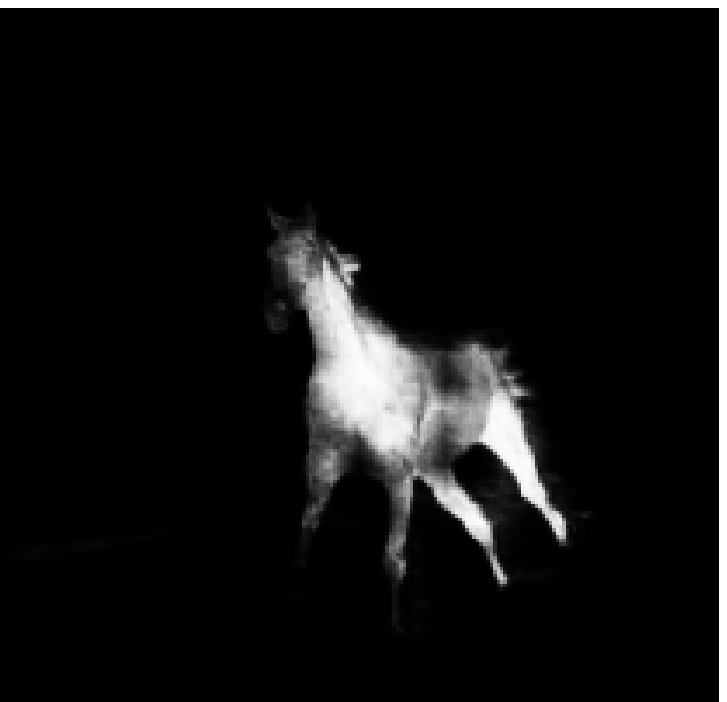} & \includegraphics[width=0.17\textwidth]{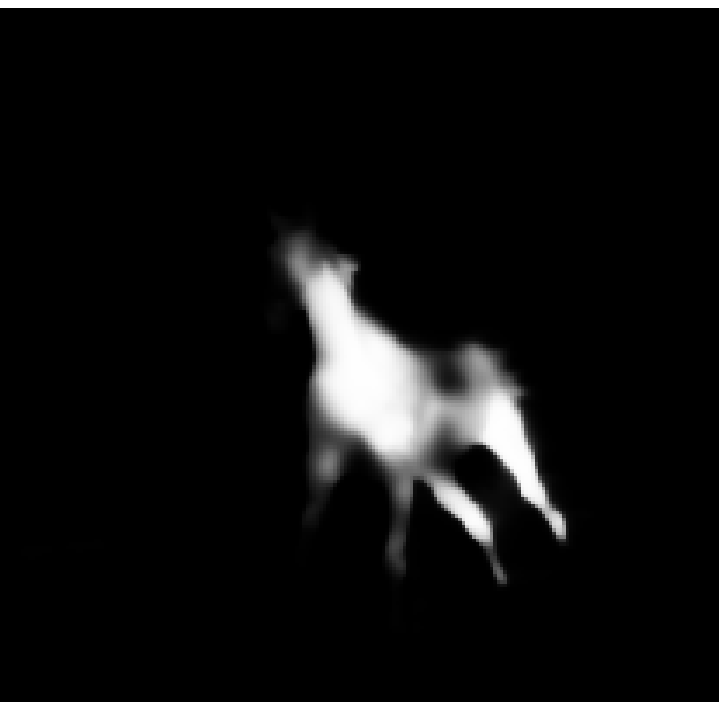} & \includegraphics[width=0.17\textwidth]{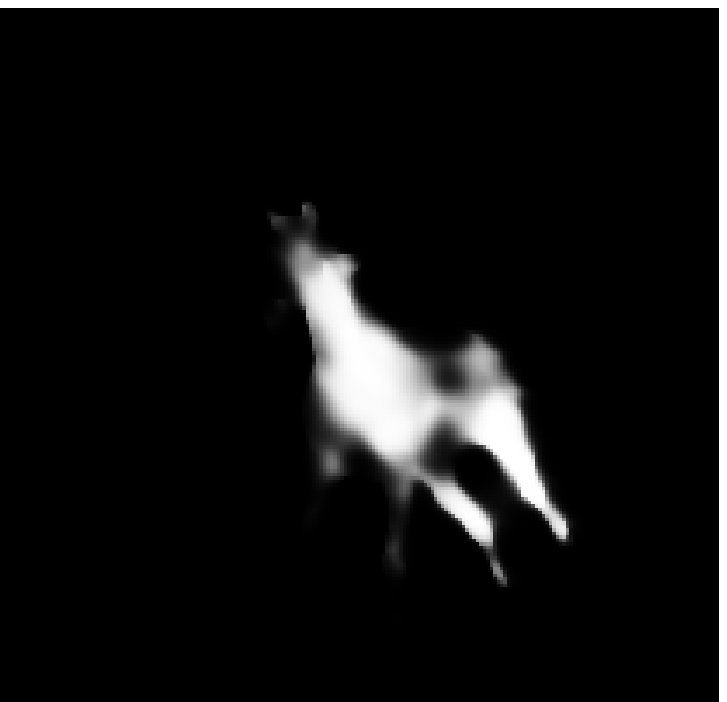} & \includegraphics[width=0.17\textwidth]{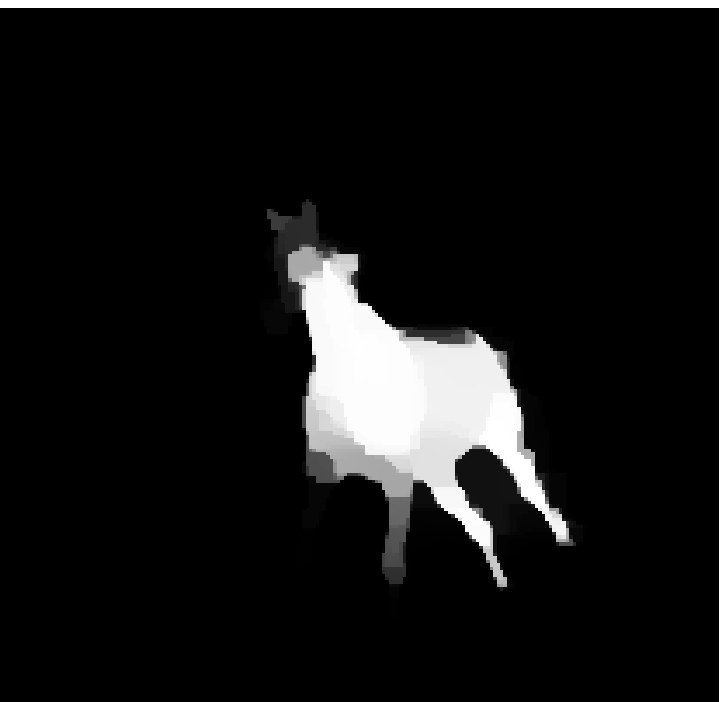} & \includegraphics[width=0.17\textwidth]{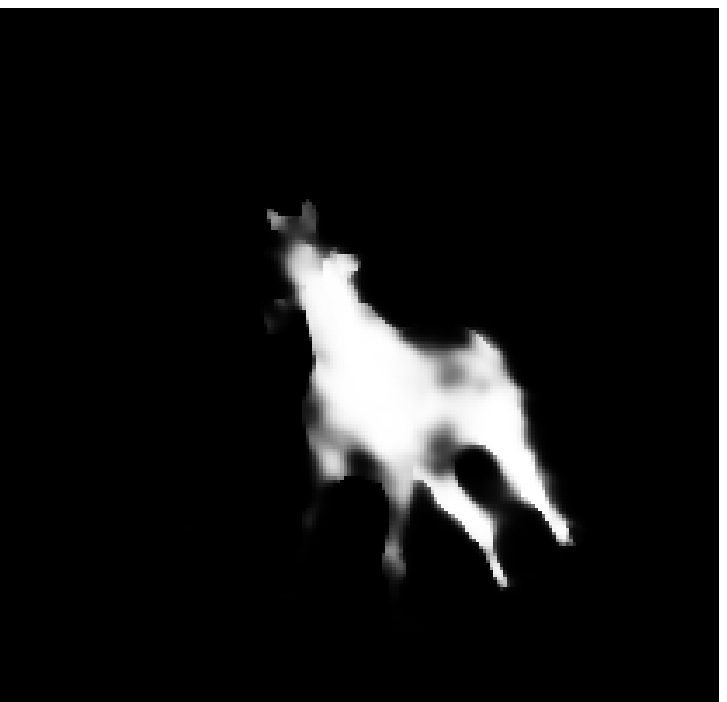} & ~~~~\begin{sideways}~~~~~~~~Linear\end{sideways}\tabularnewline
\includegraphics[width=0.17\textwidth]{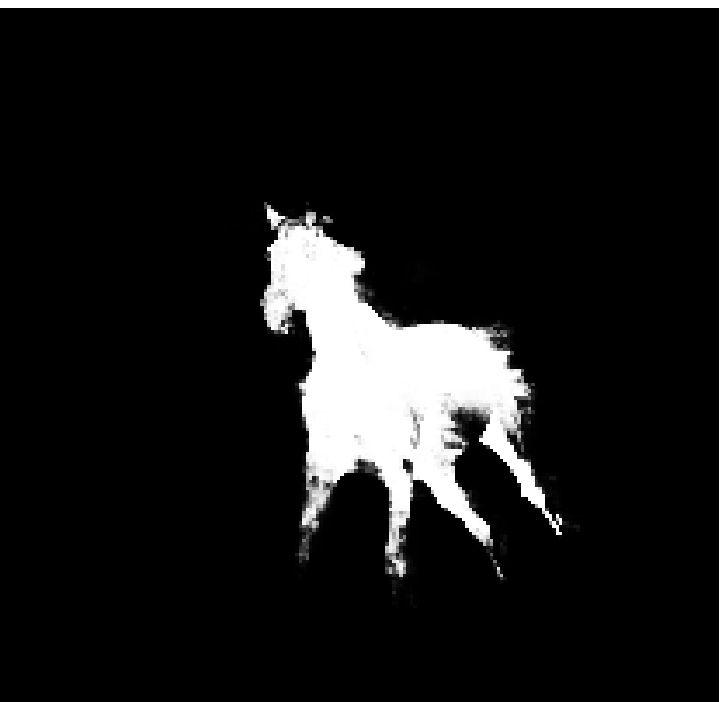} & \includegraphics[width=0.17\textwidth]{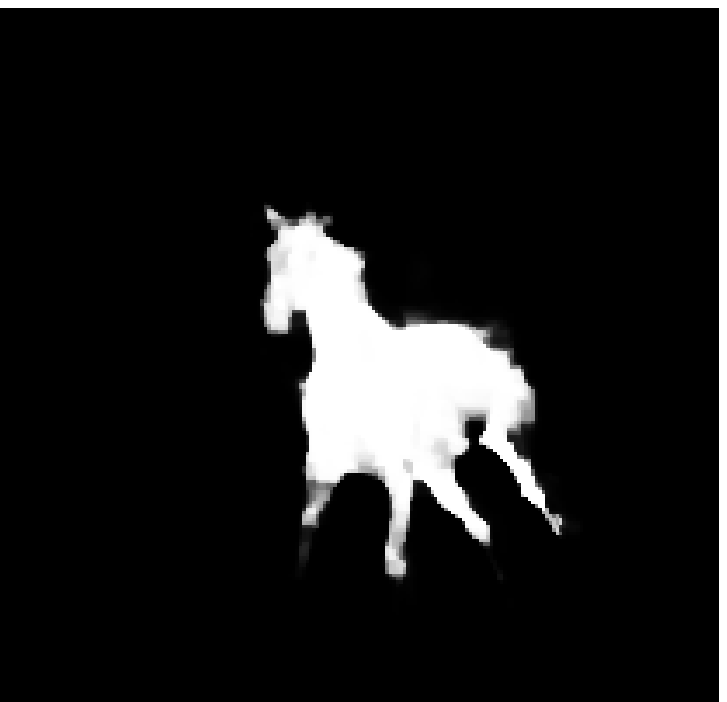} & \includegraphics[width=0.17\textwidth]{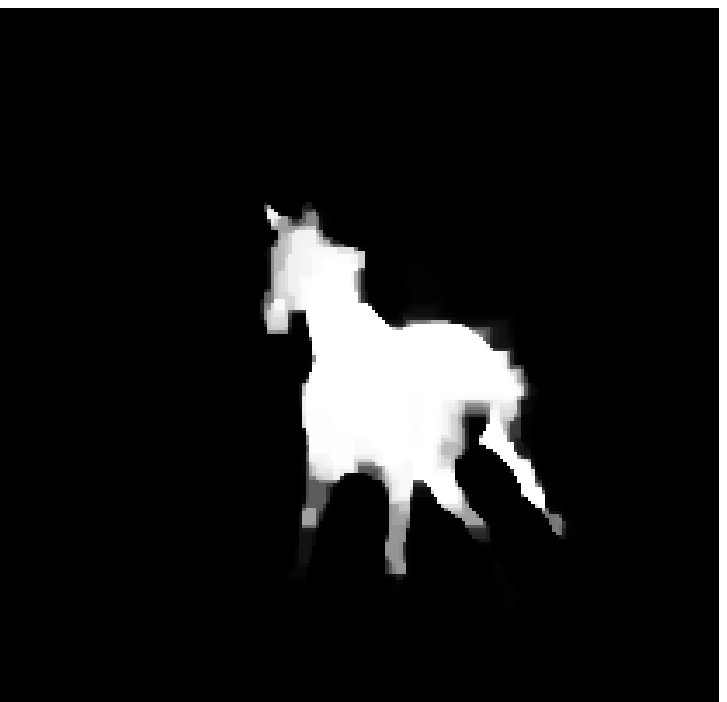} & \includegraphics[width=0.17\textwidth]{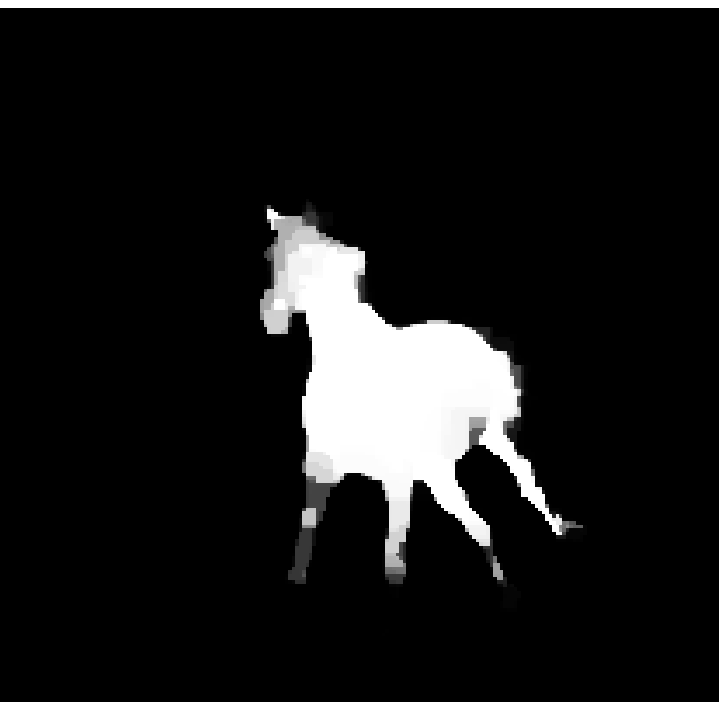} & \includegraphics[width=0.17\textwidth]{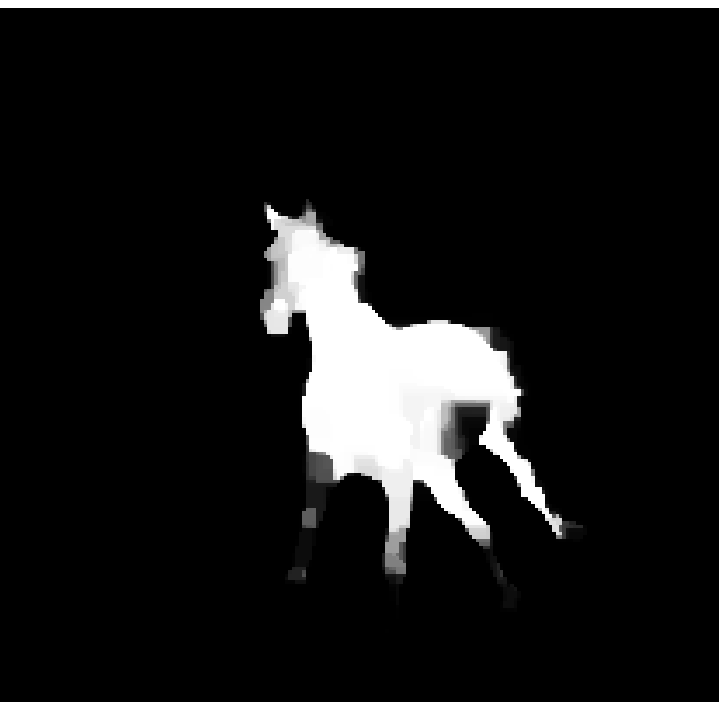} & ~~~~\begin{sideways}~~Boosting\end{sideways}\tabularnewline
\includegraphics[width=0.17\textwidth]{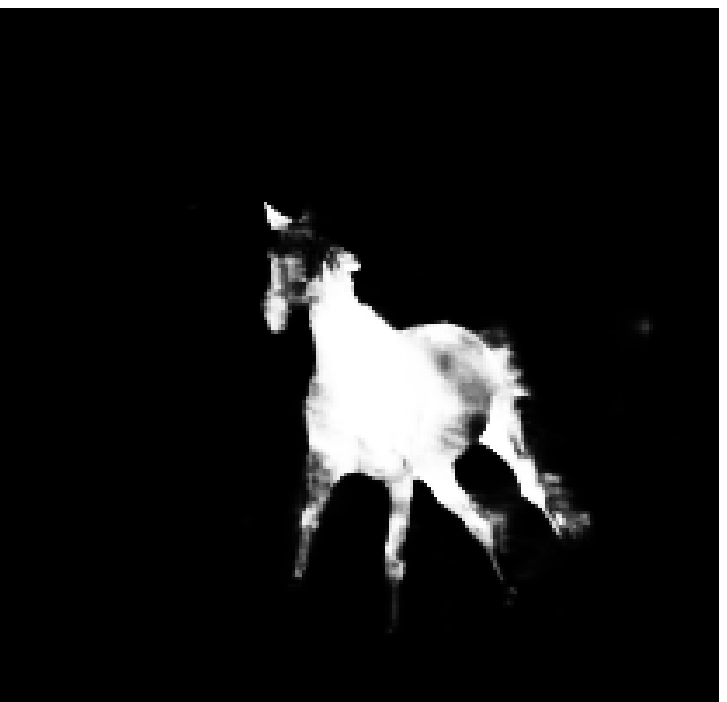} & \includegraphics[width=0.17\textwidth]{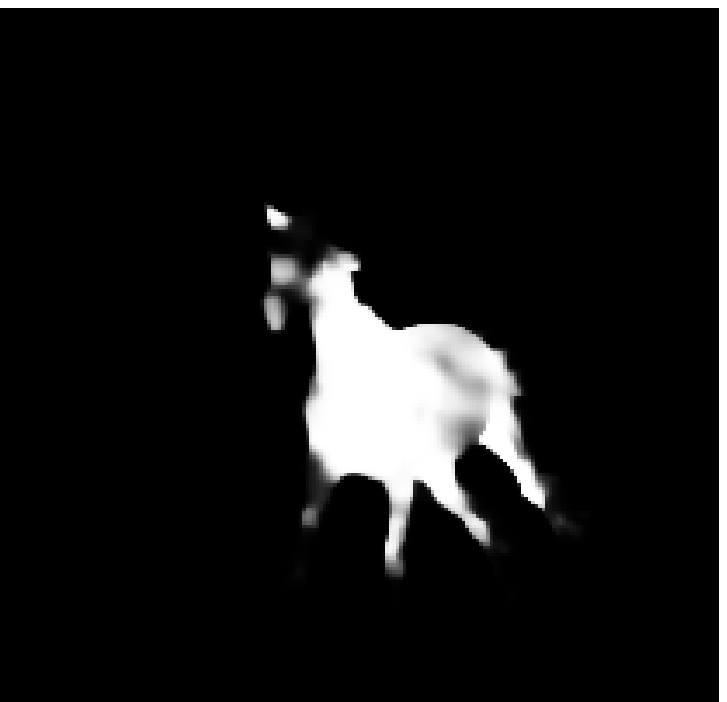} & \includegraphics[width=0.17\textwidth]{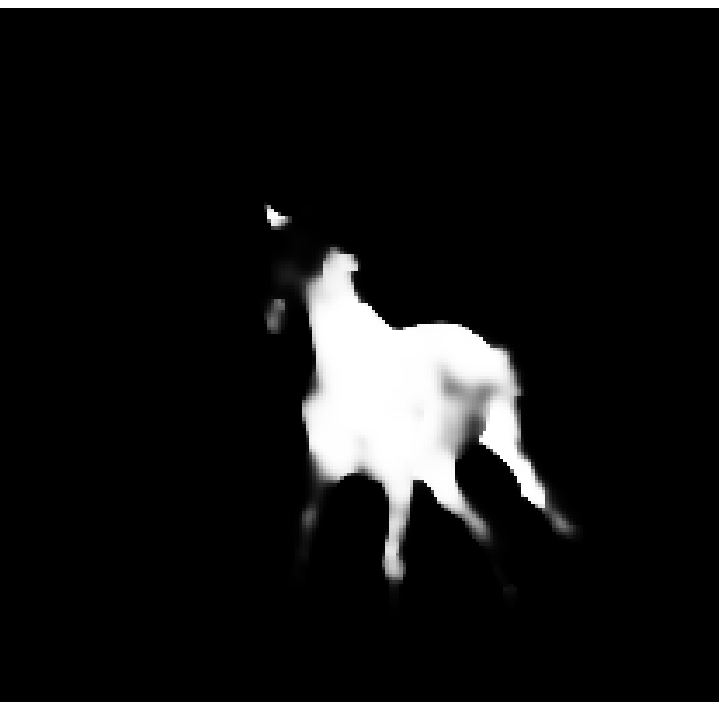} & \includegraphics[width=0.17\textwidth]{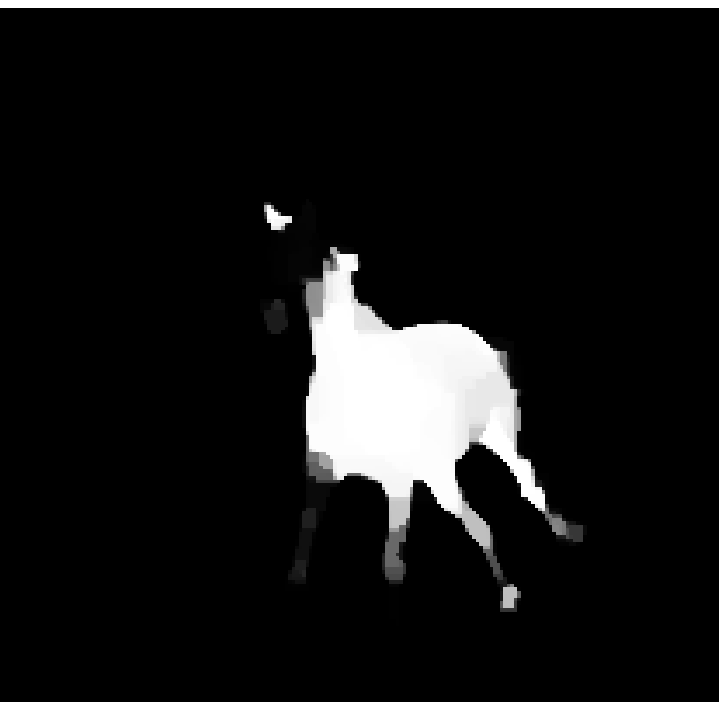} & \includegraphics[width=0.17\textwidth]{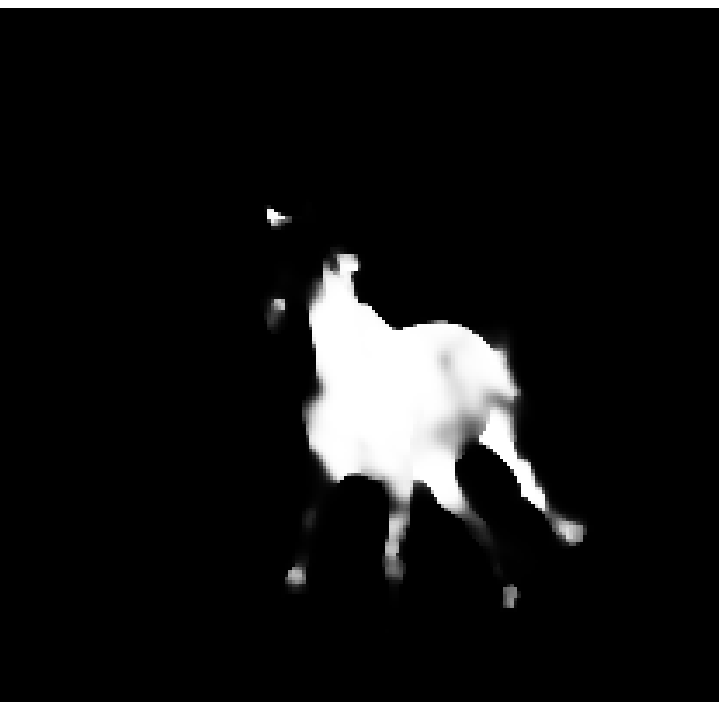} & ~~~~\begin{sideways}~~~~~MLP\end{sideways}\tabularnewline
\multicolumn{5}{c}{\includegraphics[width=0.17\textwidth]{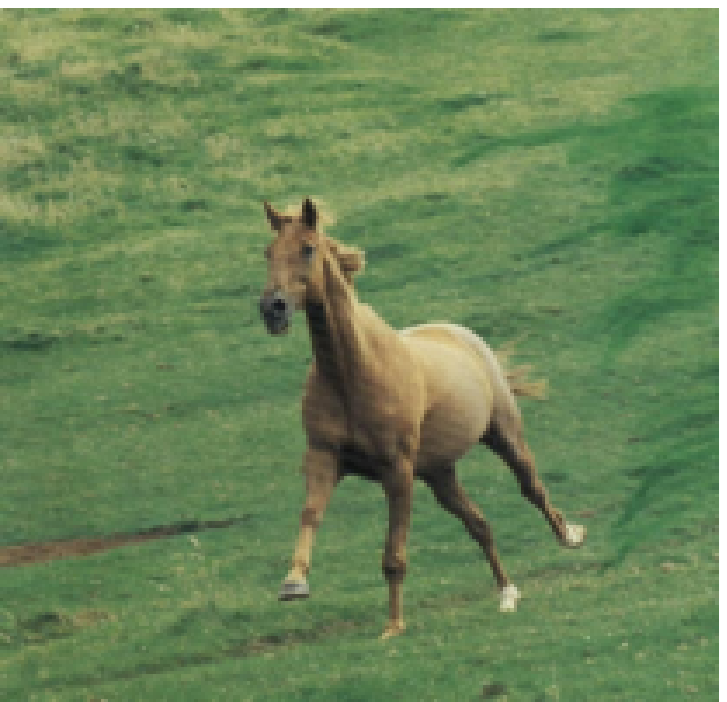}\includegraphics[width=0.17\textwidth]{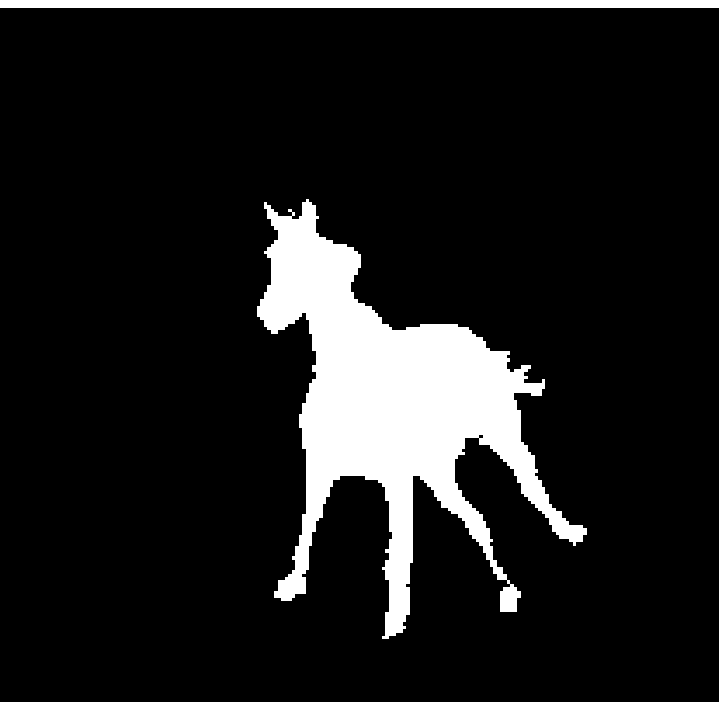}} & ~~~~\begin{sideways}~~~~~~~~True\end{sideways}\tabularnewline
\end{tabular}}
\par\end{centering}

\caption{Example Predictions on the Horses Dataset}
\end{figure}
\begin{figure}
\begin{centering}
\scalebox{1}{\setlength\tabcolsep{0pt} %
\begin{tabular}{cccccc}
Zero & Const & Linear & Boosting & \multicolumn{2}{c}{~~~~~~~~MLP~~$\mathcal{F}_{ij}$ \textbackslash{} $\mathcal{F}_{i}$}\tabularnewline
 &  &  &  &  & \tabularnewline
\includegraphics[width=0.17\textwidth]{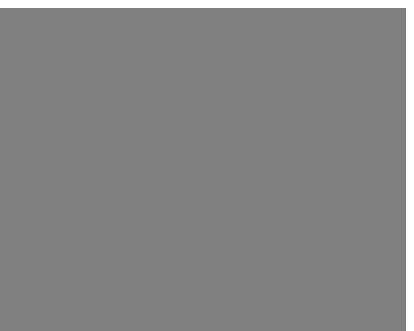} & \includegraphics[width=0.17\textwidth]{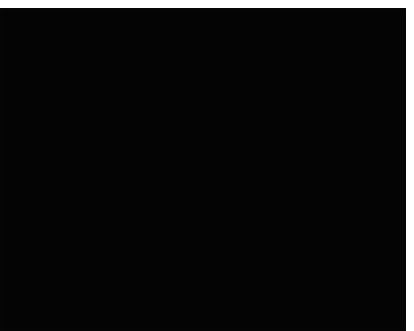} & \includegraphics[width=0.17\textwidth]{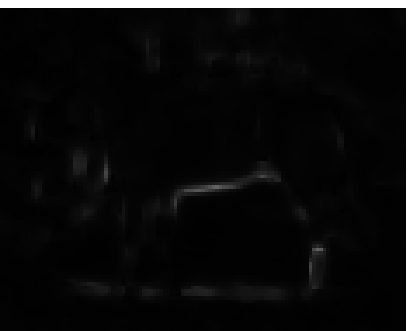} & \includegraphics[width=0.17\textwidth]{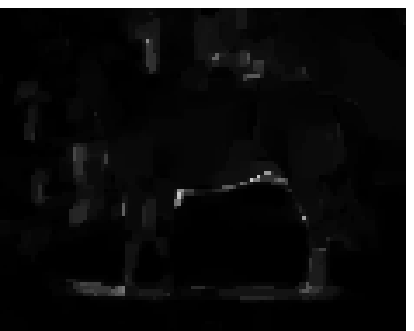} & \includegraphics[width=0.17\textwidth]{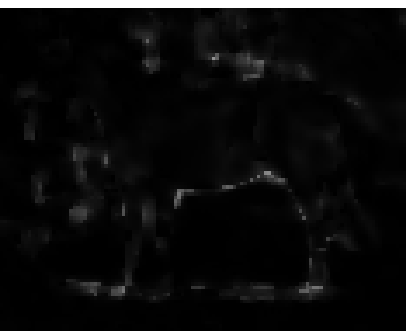} & ~~~~\begin{sideways}~~~~~Zero\end{sideways}\tabularnewline
\includegraphics[width=0.17\textwidth]{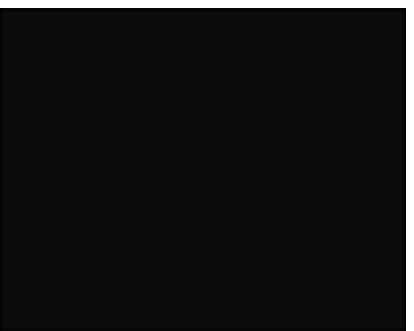} & \includegraphics[width=0.17\textwidth]{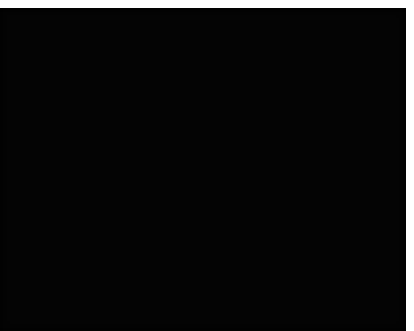} & \includegraphics[width=0.17\textwidth]{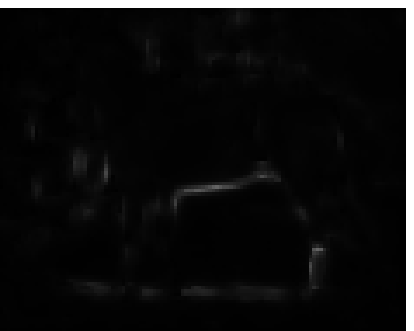} & \includegraphics[width=0.17\textwidth]{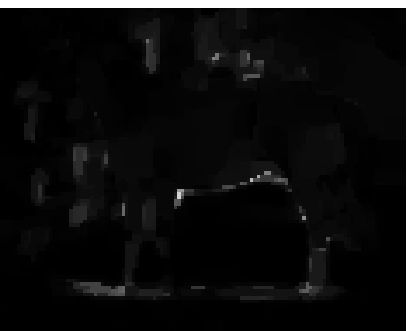} & \includegraphics[width=0.17\textwidth]{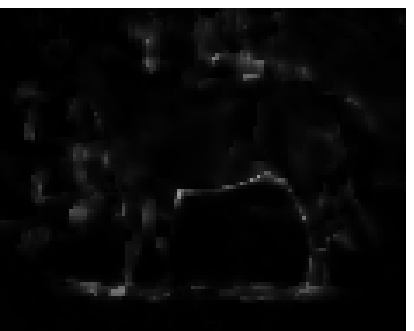} & ~~~~\begin{sideways}~~~~Const\end{sideways}\tabularnewline
\includegraphics[width=0.17\textwidth]{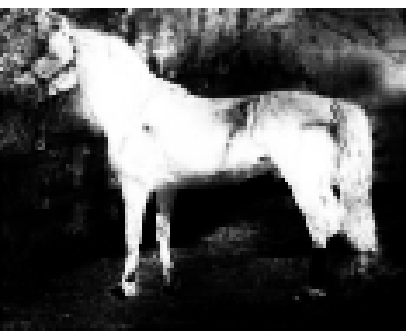} & \includegraphics[width=0.17\textwidth]{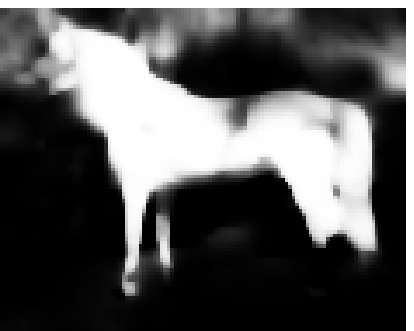} & \includegraphics[width=0.17\textwidth]{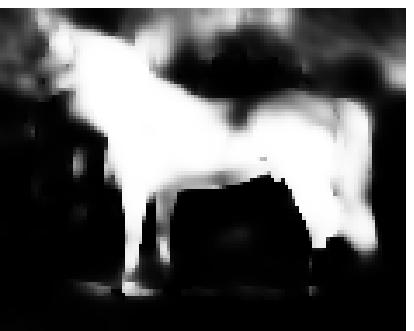} & \includegraphics[width=0.17\textwidth]{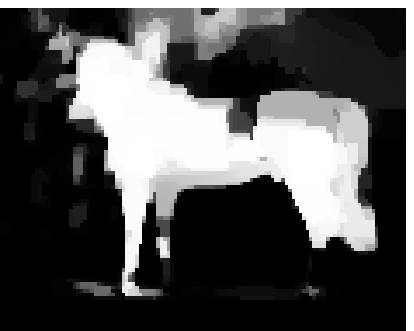} & \includegraphics[width=0.17\textwidth]{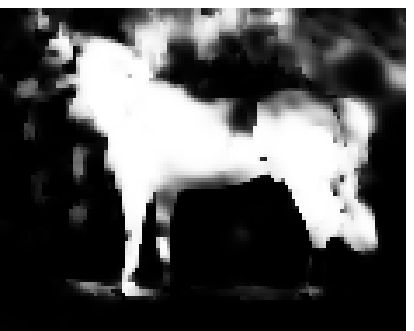} & ~~~~\begin{sideways}~~~Linear\end{sideways}\tabularnewline
\includegraphics[width=0.17\textwidth]{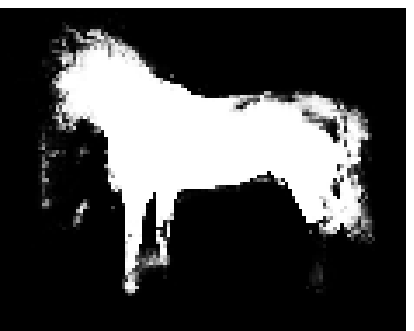} & \includegraphics[width=0.17\textwidth]{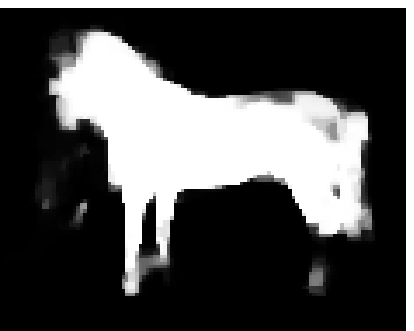} & \includegraphics[width=0.17\textwidth]{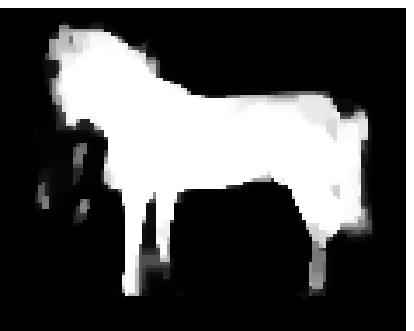} & \includegraphics[width=0.17\textwidth]{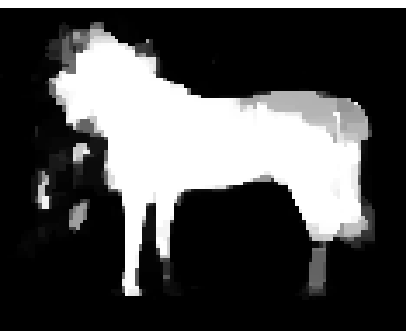} & \includegraphics[width=0.17\textwidth]{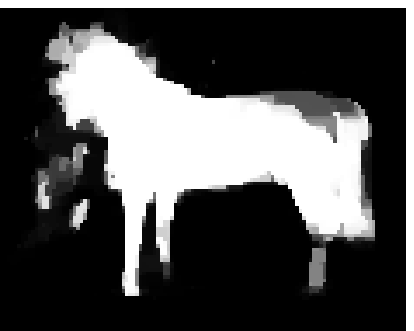} & ~~~~\begin{sideways}~~Boosting\end{sideways}\tabularnewline
\includegraphics[width=0.17\textwidth]{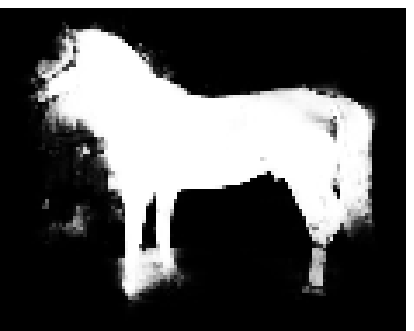} & \includegraphics[width=0.17\textwidth]{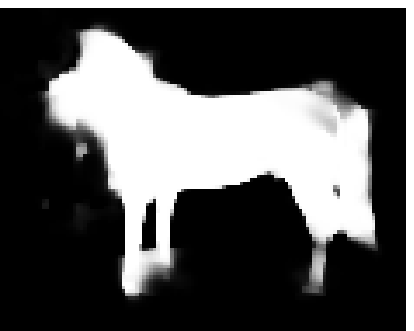} & \includegraphics[width=0.17\textwidth]{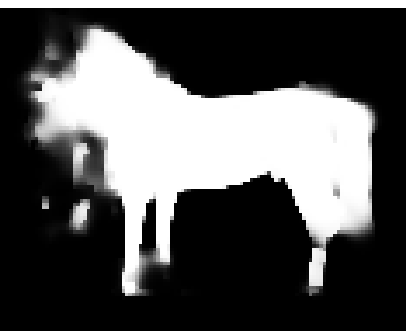} & \includegraphics[width=0.17\textwidth]{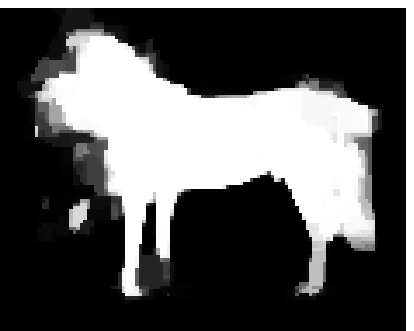} & \includegraphics[width=0.17\textwidth]{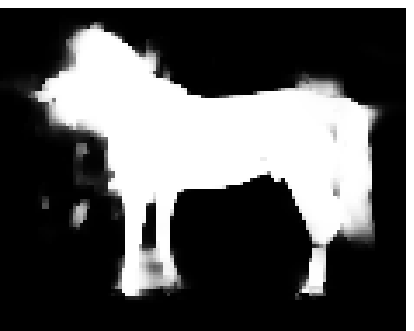} & ~~~~\begin{sideways}~~~~~MLP\end{sideways}\tabularnewline
\multicolumn{5}{c}{\includegraphics[width=0.17\textwidth]{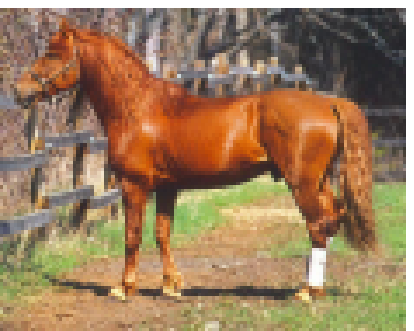}\includegraphics[width=0.17\textwidth]{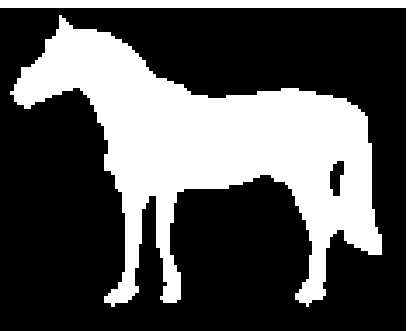}} & ~~~~\begin{sideways}~~~~~~~~True\end{sideways}\tabularnewline
\end{tabular}}
\par\end{centering}

\caption{Example Predictions on the Horses Dataset}
\end{figure}
\begin{figure}
\begin{centering}
\scalebox{1}{\setlength\tabcolsep{0pt} %
\begin{tabular}{cccccc}
Zero & Const & Linear & Boosting & \multicolumn{2}{c}{~~~~~~~~MLP~~$\mathcal{F}_{ij}$ \textbackslash{} $\mathcal{F}_{i}$}\tabularnewline
 &  &  &  &  & \tabularnewline
\includegraphics[width=0.17\textwidth]{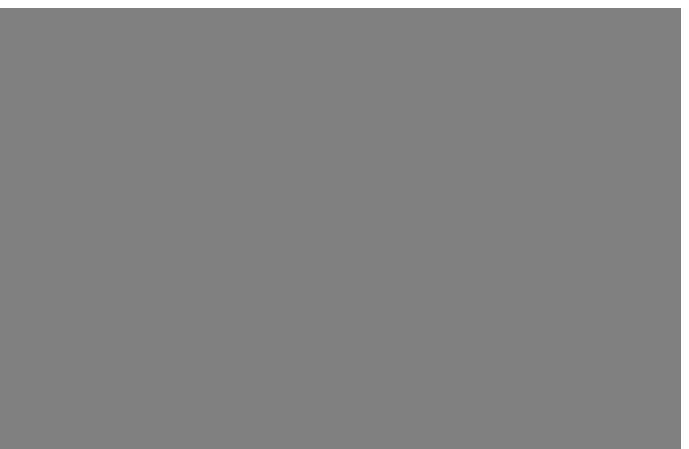} & \includegraphics[width=0.17\textwidth]{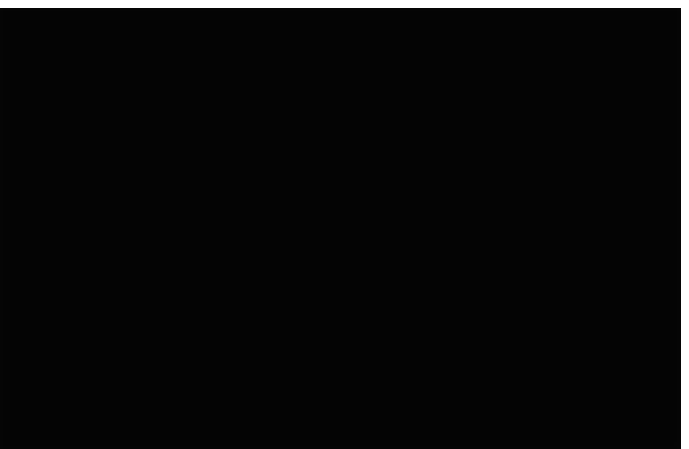} & \includegraphics[width=0.17\textwidth]{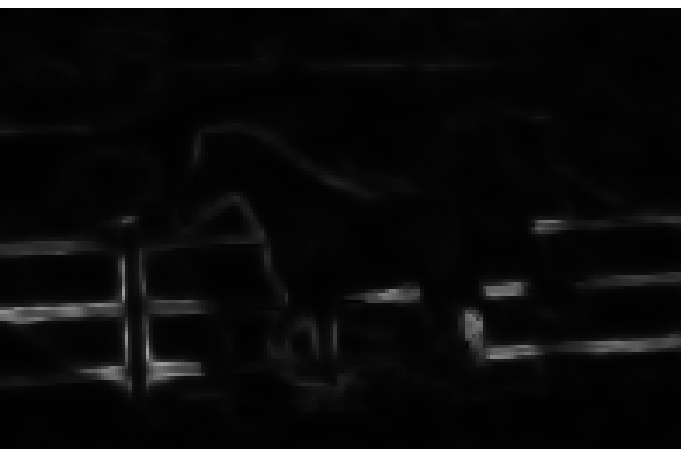} & \includegraphics[width=0.17\textwidth]{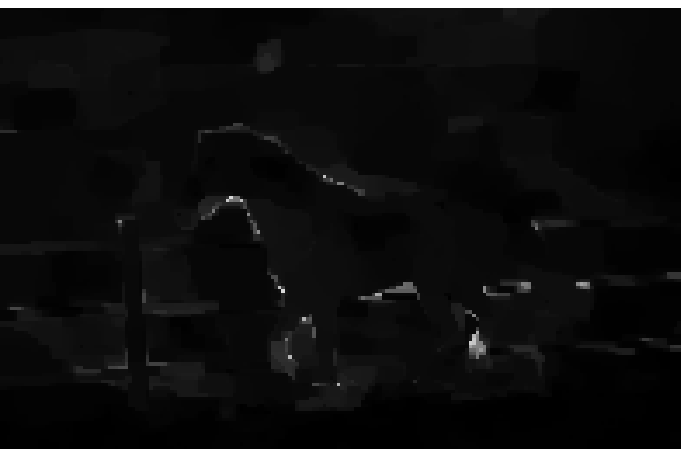} & \includegraphics[width=0.17\textwidth]{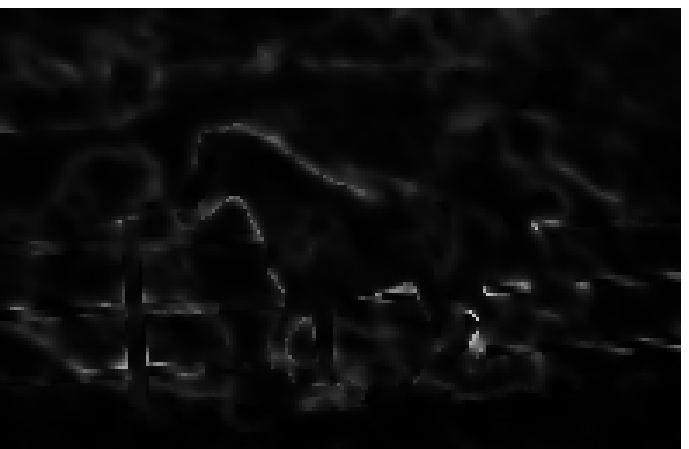} & ~~~~\begin{sideways}~~~~~Zero\end{sideways}\tabularnewline
\includegraphics[width=0.17\textwidth]{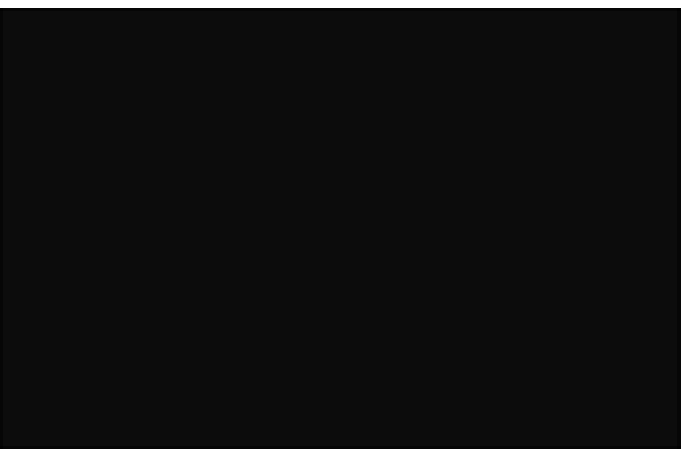} & \includegraphics[width=0.17\textwidth]{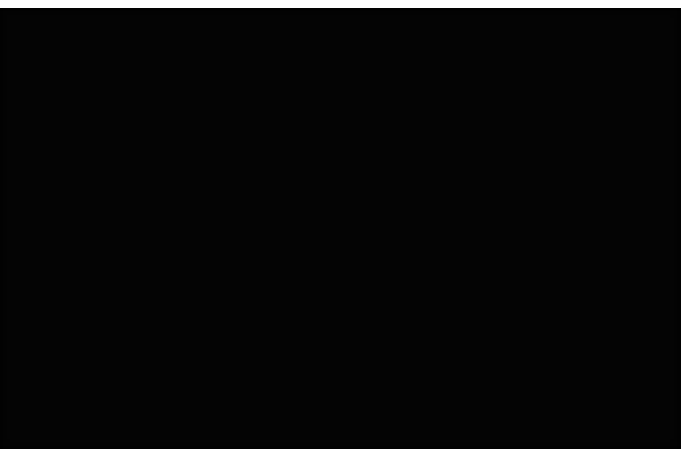} & \includegraphics[width=0.17\textwidth]{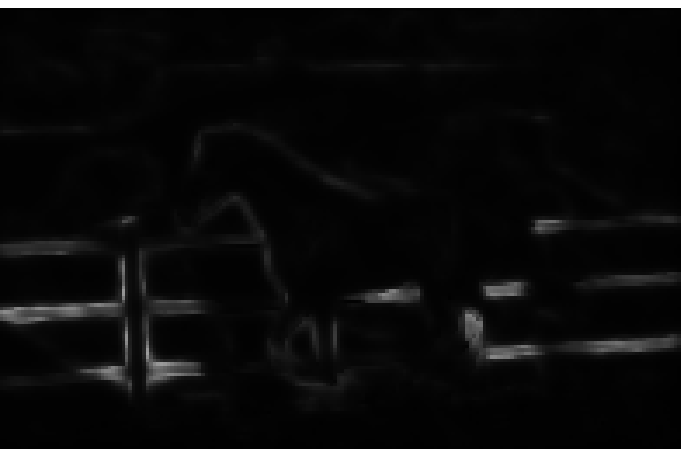} & \includegraphics[width=0.17\textwidth]{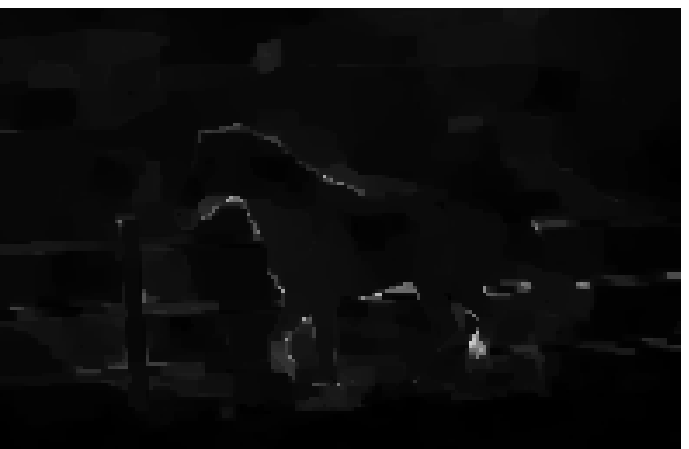} & \includegraphics[width=0.17\textwidth]{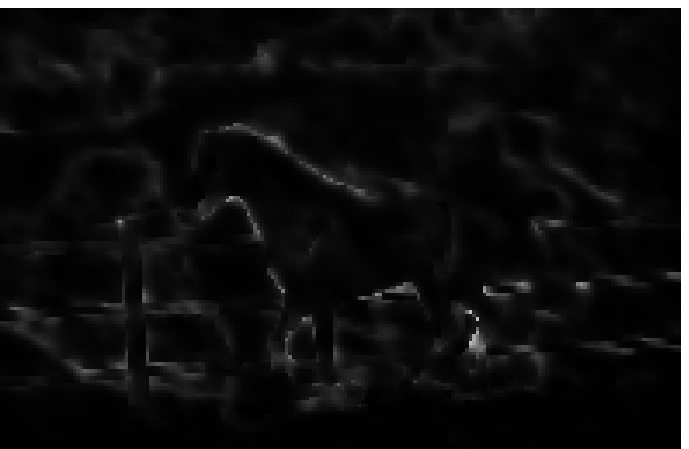} & ~~~~\begin{sideways}~~~~Const\end{sideways}\tabularnewline
\includegraphics[width=0.17\textwidth]{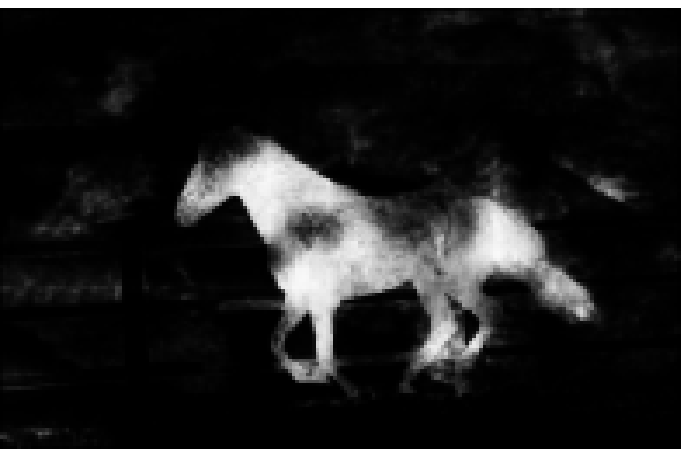} & \includegraphics[width=0.17\textwidth]{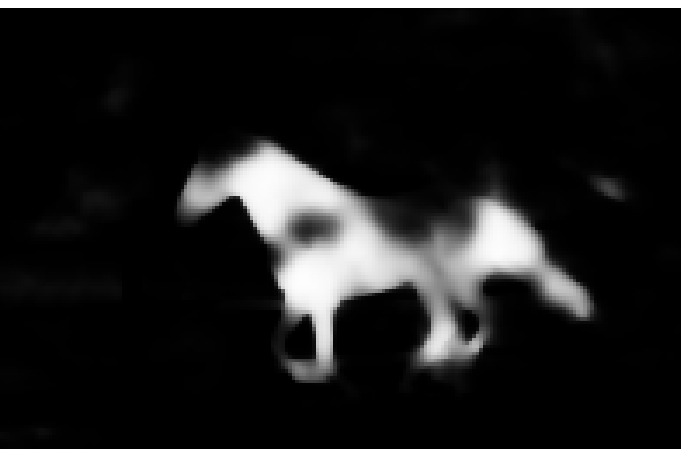} & \includegraphics[width=0.17\textwidth]{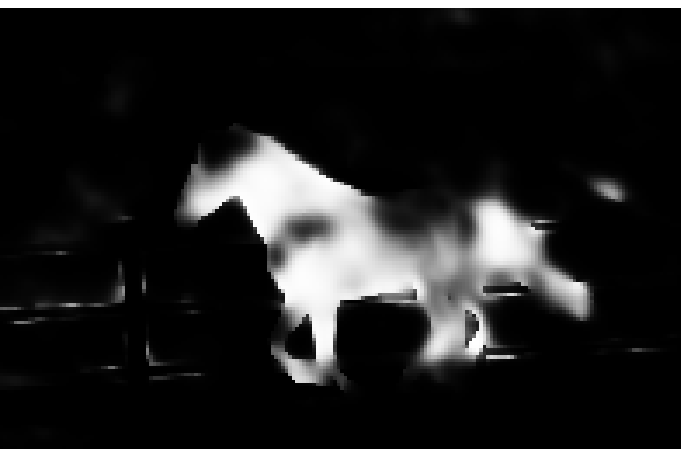} & \includegraphics[width=0.17\textwidth]{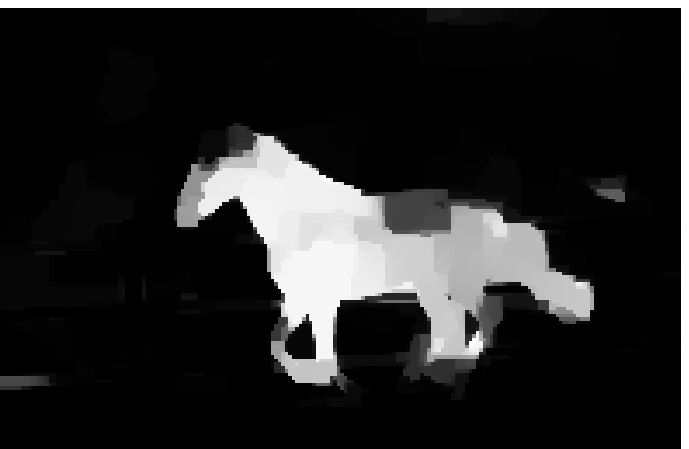} & \includegraphics[width=0.17\textwidth]{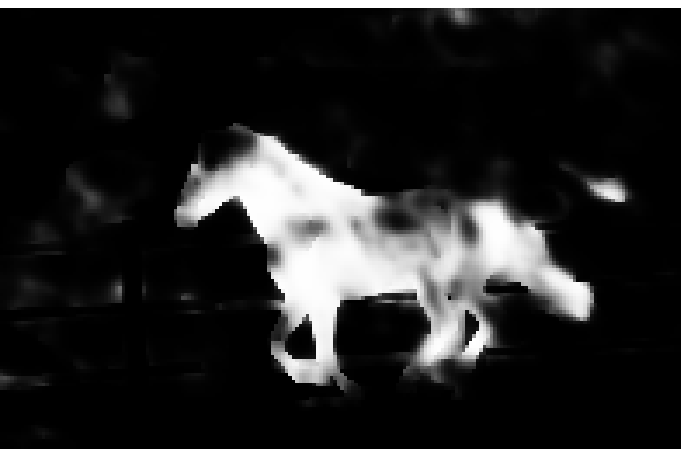} & ~~~~\begin{sideways}~~~Linear\end{sideways}\tabularnewline
\includegraphics[width=0.17\textwidth]{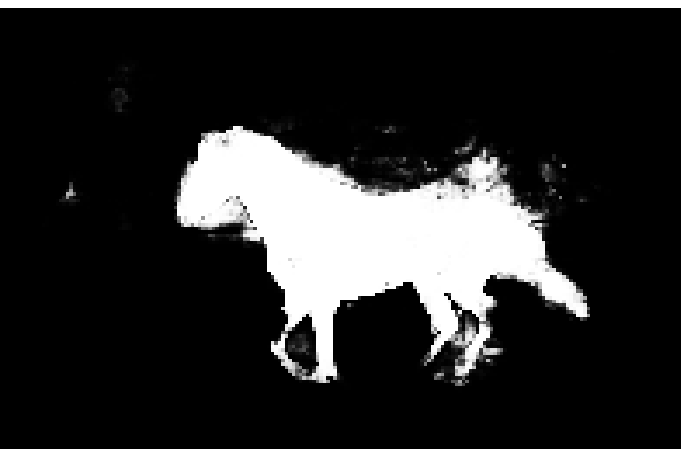} & \includegraphics[width=0.17\textwidth]{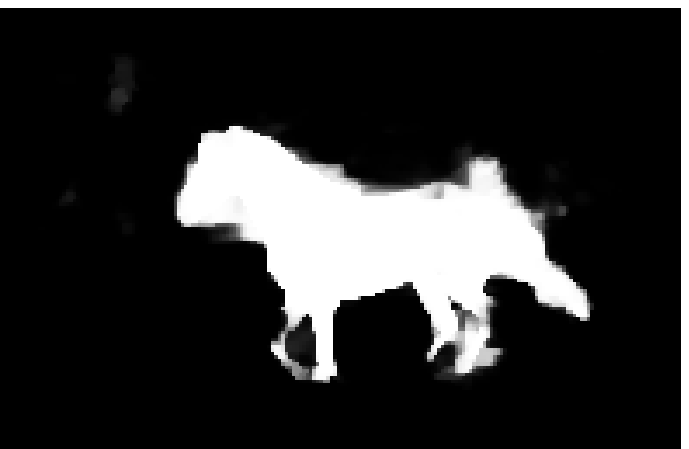} & \includegraphics[width=0.17\textwidth]{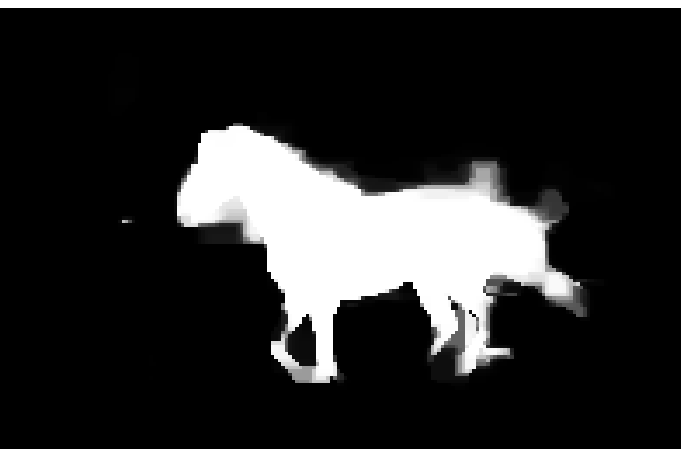} & \includegraphics[width=0.17\textwidth]{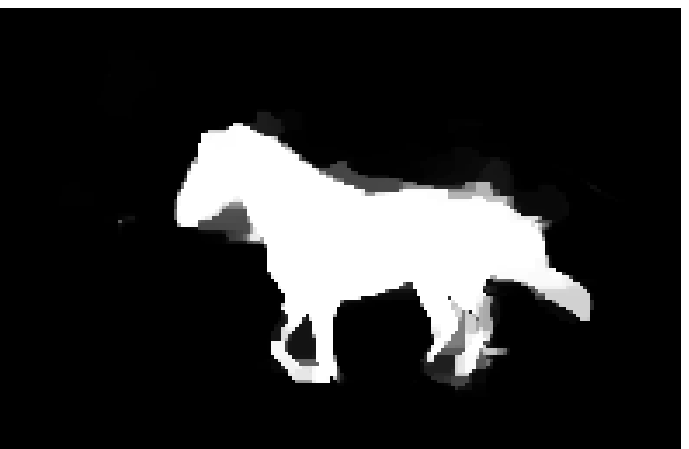} & \includegraphics[width=0.17\textwidth]{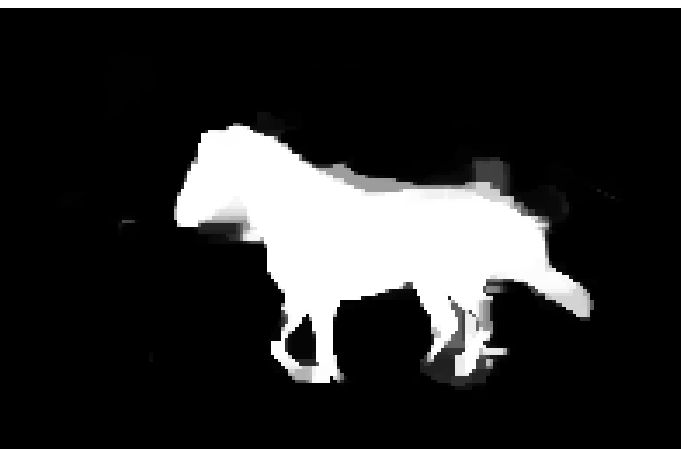} & ~~~~\begin{sideways}~~Boosting\end{sideways}\tabularnewline
\includegraphics[width=0.17\textwidth]{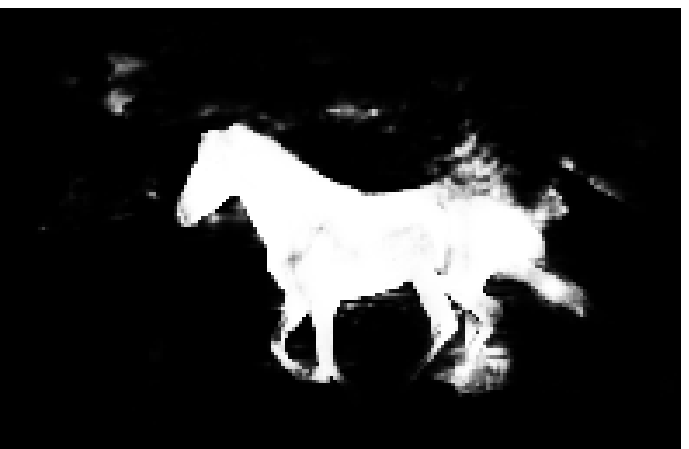} & \includegraphics[width=0.17\textwidth]{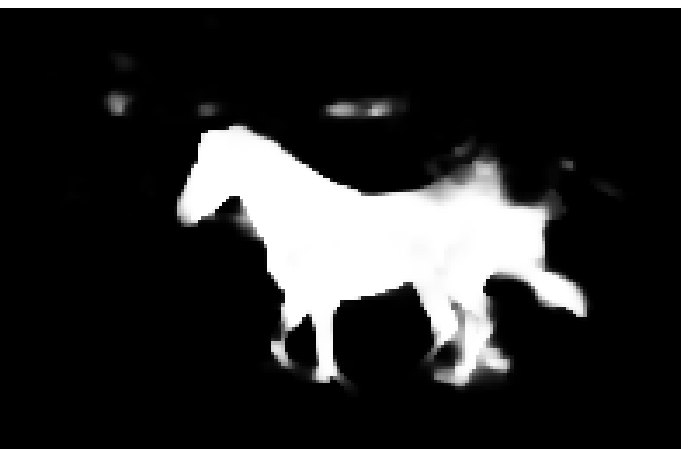} & \includegraphics[width=0.17\textwidth]{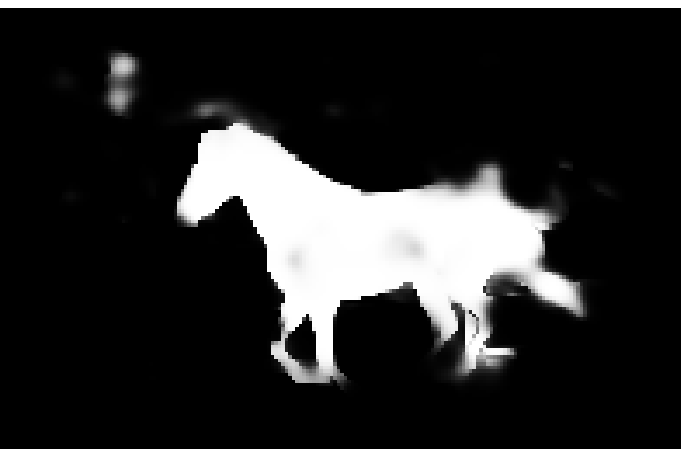} & \includegraphics[width=0.17\textwidth]{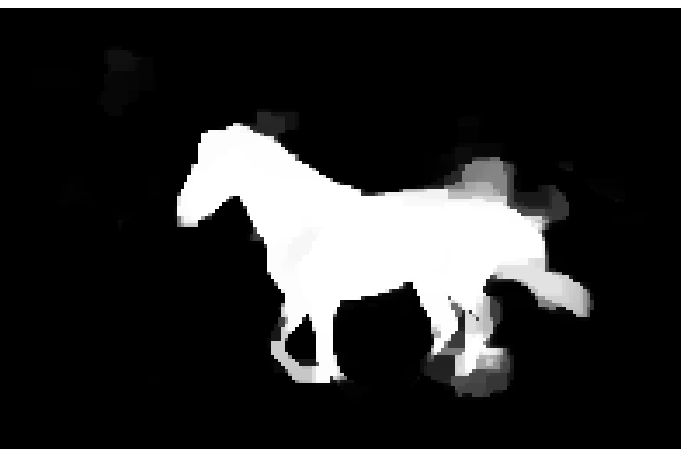} & \includegraphics[width=0.17\textwidth]{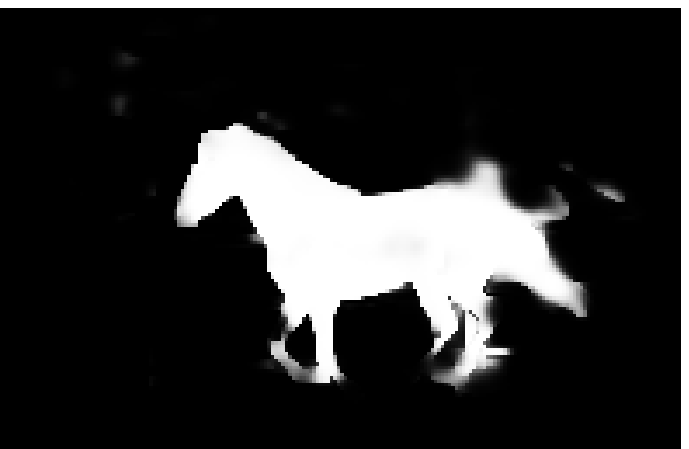} & ~~~~\begin{sideways}~~~~~MLP\end{sideways}\tabularnewline
\multicolumn{5}{c}{\includegraphics[width=0.17\textwidth]{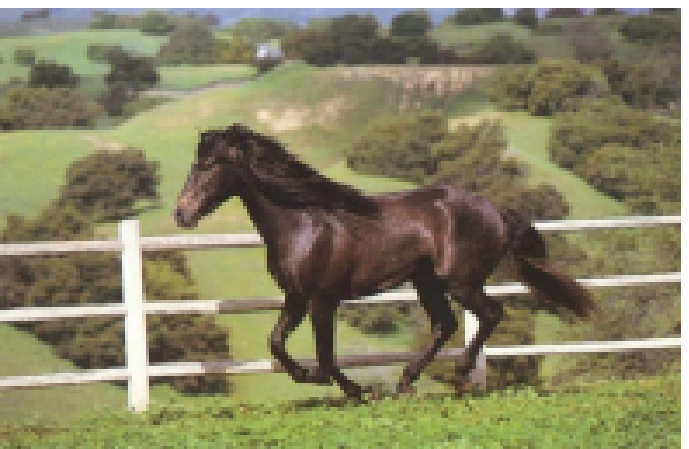}\includegraphics[width=0.17\textwidth]{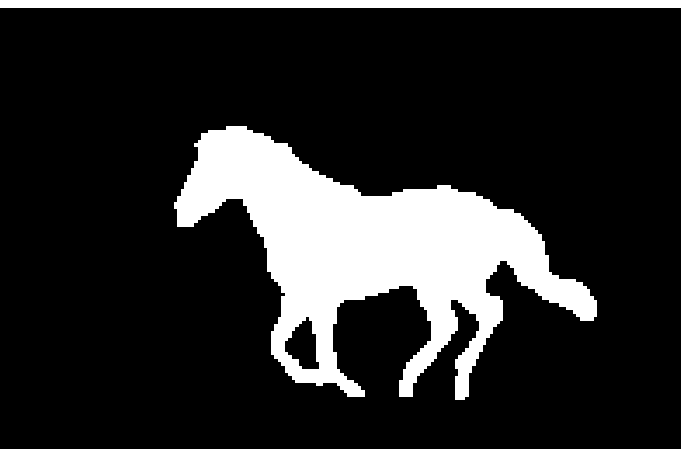}} & ~~~~\begin{sideways}~~~~~~~~True\end{sideways}\tabularnewline
\end{tabular}}
\par\end{centering}

\caption{Example Predictions on the Horses Dataset}
\end{figure}

\end{document}